\def\eqref#1{equation~\ref{#1}}
\def\1{\bm{1}}
\DeclareMathAlphabet{\mathsfit}{\encodingdefault}{\sfdefault}{m}{sl}
\SetMathAlphabet{\mathsfit}{bold}{\encodingdefault}{\sfdefault}{bx}{n}
\newcommand{\E}{\mathbb{E}}
\DeclareMathOperator*{\argmin}{arg\,min}
\setlist[itemize]{leftmargin=10pt}
\newcommand{\bluetext}[1]{{#1}}
\newcommand{\newbluetext}[1]{{#1}}
\newcommand{\ours}{\texttt{PORT}\xspace}
\newcommand{\wrn}[1]{\textsc{Wrn-#1}\xspace}
\newcommand{\vikash}[1]{{\color{red}}}
\newcommand{\dtrain}{D\xspace}
\newcommand{\dsyn}{\tilde{D}\xspace}
\newcommand{\approxdist}{proxy\xspace}
\newcommand{\orgdist}{original\xspace}
\newcommand{\realdist}{real data\xspace}
\newcommand{\aurpc}{ARC\xspace}
\newcommand{\aurpctt}{\texttt{ARC}\xspace}
\newcommand{\aurpcsf}{\mathsf{ARC}\xspace}
\newcommand{\synthetic}{synthetic\xspace}
\newcommand{\cwdfull}{conditional Wasserstein distance\xspace}
\newcommand{\oneNN}{1-NN}
\newcommand{\styleC}{StyleGAN\xspace}
\newcommand{\ddpm}{DDPM\xspace}
\newcommand{\cwd}{\mathsf{cwd}}
\newcommand{\Racc}{\mathsf{Racc}}
\newcommand{\fhat}[2]{\ifthenelse{\equal{#2}{}}{\hat{f}(#1)}{\ifthenelse{\equal{#2}{0}}{\hat{f}(\emptyset)}{\hat{f}(#1_{\leq #2})}}}
\newcommand{\ftild}[2]{\ifthenelse{\equal{#2}{}}{\tilde{f}(#1)}{\ifthenelse{\equal{#2}{0}}{\tilde{f}(\emptyset)}{\tilde{f}(#1_{\leq #2})}}}
\newcommand{\ftildstar}[2]{\ifthenelse{\equal{#2}{}}{\tilde{f^*}(#1)}{\ifthenelse{\equal{#2}{0}}{\tilde{f^*}(\emptyset)}{\tilde{f^*}(#1_{\leq #2})}}}
\newcommand{\ghat}[2]{\ifthenelse{\equal{#2}{}}{\hat{g}(#1)}{\ifthenelse{\equal{#2}{0}}{\hat{g}(\emptyset)}{\hat{g}(#1_{\leq #2})}}}
\newcommand{\mfix}[2]{\ifthenelse{\equal{#2}{}}{m(#1)}{\ifthenelse{\equal{#2}{0}}{m(\emptyset)}{m(#1_{\leq #2})}}}
\newcommand{\fapp}[2]{\ifthenelse{\equal{#2}{}}{\tilde{f}(#1)}{\ifthenelse{\equal{#2}{0}}{\tilde{f}(\emptyset)}{\tilde{f}(#1_{\leq #2})}}}
\newcommand{\Rob}{\mathsf{Rob}}
\newcommand{\aSF}{\mathsf{a}}
\newcommand{\gSF}{\mathsf{g}}
\newcommand{\hSF}{\mathsf{h}}
\newcommand{\avr}[2]{\ifthenelse{\equal{#2}{}}{\aSF({#1})}{\ifthenelse{\equal{#2}{0}}{\aSF(\emptyset)}{\aSF({#1}_{\leq #2})}}}
\newcommand{\avrMax}[2]{\ifthenelse{\equal{#2}{}}{\aSF^*({#1})}{\ifthenelse{\equal{#2}{0}}{\aSF^*(\emptyset)}{\aSF^*({#1}_{\leq #2})}}}
\newcommand{\avrApp}[2]{\ifthenelse{\equal{#2}{}}{\tilde{\aSF}({#1})}{\ifthenelse{\equal{#2}{0}}{\tilde{\aSF}(\emptyset)}{\tilde{\aSF}({#1}_{\leq #2})}}}
\newcommand{\avrAppMax}[2]{\ifthenelse{\equal{#2}{}}{\tilde{\aSF}^*({#1})}{\ifthenelse{\equal{#2}{0}}{\tilde{\aSF}^*(\emptyset)}{\tilde{\aSF}^*({#1}_{\leq #2})}}}
\newcommand{\ArgMax}[2]{\ifthenelse{\equal{#2}{}}{\hSF({#1})}{\ifthenelse{\equal{#2}{0}}{\hSF(\emptyset)}{\hSF({#1}_{\leq #2})}}}
\newcommand{\AppArgMax}[2]{\ifthenelse{\equal{#2}{}}{\tilde{\hSF}({#1})}{\ifthenelse{\equal{#2}{0}}{\tilde{\hSF}(\emptyset)}{\tilde{\hSF}({#1}_{\leq #2})}}}
\newcommand{\gain}[2]{\ifthenelse{\equal{#2}{}}{\gSF(#1)}{\gSF(#1_{\leq #2})}}
\newcommand{\gainMax}[2]{\ifthenelse{\equal{#2}{}}{\gSF^*(#1)}{\gSF^*(#1_{\leq #2})}}
\newcommand{\gainApp}[2]{\ifthenelse{\equal{#2}{}}{\tilde{\gSF}(#1)}{\tilde{\gSF}(#1_{\leq #2})}}
\newcommand{\gainAppMax}[2]{\ifthenelse{\equal{#2}{}}{\tilde{\gSF}^*(#1)}{\tilde{\gSF}^*(#1_{\leq #2})}}
\newcommand{\pr}[2][]{\Pr_{\ifthenelse{\isempty{#1}}{}{{#1}}}\left[{#2}\right]}
\newcommand{\remove}[1]{}
\newcommand{\cJ}{{\mathcal J}}
\newcommand{\cS}{{\mathcal S}}
\newcommand{\cX}{{\mathcal X}}
\newcommand{\cY}{{\mathcal Y}}
\newcommand{\cZ}{{\mathcal Z}}
\newcommand{\Exp}{\operatorname*{\mathbb{E}}}
\newcommand{\Ex}{\Exp}
\newtheorem{theorem}{Theorem}
\newtheorem{lemma}[theorem]{Lemma}
\newtheorem{definition}{Definition}
\newcommand{\sdotfill}{\textcolor[rgb]{0.8,0.8,0.8}{\dotfill}} 
\def\th@protocol{%
    \normalfont 
    \setbeamercolor{block title example}{bg=orange,fg=white}
    \setbeamercolor{block body example}{bg=orange!20,fg=black}
    \def\inserttheoremblockenv{exampleblock}
  }
\newtheorem{proto}[theorem]{Protocol}
\newtheorem{protoc}[theorem]{Protocol}
\newcommand{\namedref}[2]{#1~\ref{#2}}
\newcommand{\torestate}[3]{%
\expandafter \def \csname BBRESTATE #2 \endcsname{#3}
\theoremstyle{plain}
\newtheorem{BBRESTATETHMNUM#2}[theorem]{#1}
\begin{BBRESTATETHMNUM#2}\label{#2}\csname BBRESTATE #2 \endcsname   \end{BBRESTATETHMNUM#2}
\newtheorem*{BBRESTATETHMNONNUM#2}{\namedref{#1}{#2}}
}
\newcommand{\restate}[1]{\begin{BBRESTATETHMNONNUM#1}[Restated] \csname BBRESTATE #1 \endcsname
\end{BBRESTATETHMNONNUM#1}}
\newcommand\blfootnote[1]{%
  \begingroup
  \renewcommand\thefootnote{}\footnote{#1}%
  \addtocounter{footnote}{-1}%
  \endgroup
}
\title{Robust Learning Meets Generative Models: Can Proxy Distributions Improve Adversarial Robustness?}
\author{
  Vikash Sehwag$^{\spadesuit\dagger}$, Saeed Mahloujifar$^\spadesuit$, Tinashe Handina$^\vardiamondsuit$, Sihui Dai$^\spadesuit$, Chong Xiang$^\spadesuit$
  \\
  \textbf{Mung Chiang}$^\clubsuit$, \textbf{Prateek Mittal}$^\spadesuit$ \vspace{10pt}\\
  $^\spadesuit$Princeton University, $^\vardiamondsuit$Caltech, $^\clubsuit$Purdue University
}
\newcommand{\newtext}[1]{#1}
\begin{document}

\maketitle

\begin{abstract}
While additional training data improves the robustness of deep neural networks against adversarial examples, it presents the challenge of curating a large number of specific real-world samples. We circumvent this challenge by using additional data from proxy distributions learned by advanced  generative models. We first seek to formally understand the transfer of robustness from classifiers trained on proxy distributions to the real data distribution. We prove that the difference between the robustness of a classifier on the two distributions is upper bounded by the conditional Wasserstein distance between them. Next we use proxy distributions to significantly improve the performance of adversarial training on \textit{five} different datasets. For example, we improve robust accuracy by up to $7.5$\% and $6.7$\% in $\ell_{\infty}$ and $\ell_2$ threat model over baselines that are not using proxy distributions on the CIFAR-10 dataset. We also improve certified robust accuracy by $7.6$\% on the CIFAR-10 dataset.
We further demonstrate that different generative models bring a disparate improvement in the performance in robust training. We propose a robust discrimination approach to characterize the impact of individual generative models and further provide a deeper understanding of why current state-of-the-art in diffusion-based generative models are a better choice for proxy distribution than generative adversarial networks.

\end{abstract}

\section{Introduction}
\blfootnote{$^\dagger$Corresponding author: \texttt{vvikash@princeton.edu}}
\blfootnote{Our code is available at \url{https://github.com/inspire-group/proxy-distributions}.}
Deep neural networks are powerful tools but their success depends strongly on the amount of training data~\citep{sun2017revisitData, mahajan2018BillionWSL}. 
Recent works show 
that for improving robust training against adversarial examples~\citep{biggio2013evasion, Szegedy2013IntrigueAdv, biggio2018wildPatterns}, additional training data is 
helpful~\citep{carmon2019unlabeled, schmidt2018AdvMoreData, alayrac2019unsupadv, deng2020AdvExtraOutDomain}.
However, curating more real-world data from the actual data distribution is usually 
challenging and costly~\citep{recht2019ImageNetv2}. 
To circumvent this challenge, we ask: can robust training be enhanced using a proxy distribution, i.e., an approximation of the \realdist distribution? 
In particular, can additional samples from a proxy distribution, that is perhaps cheaper to sample from, improve robustness. If so, can generative models that are trained on limited training images 
in small scale datasets~\citep{lecun2010mnist, krizhevsky2014cifar}, act as such a proxy distribution?\footnote{Proxy distributions may not necessarily be modeled by generative models. When a proxy distribution is the output of a generative model, we call it \emph{synthetic distribution} and refer to data sampled from it as \emph{synthetic data}.}

When training on synthetic samples from generative models, a natural question is whether robustness on synthetic data will also transfer to real world data. Even if it does, can we determine the features of synthetic data that enable this \emph{synthetic-to-real} robustness transfer and optimize our selection of generative models based on these features? Finally, can we also optimize the selection of individual synthetic samples to maximize the robustness transfer? 

\emph{Q.1} \emph{When does robustness transfer from proxy distribution to real data distribution?} This question is fundamental to develop a better understanding of whether a proxy distribution will help. \bluetext{For a classifier trained on only synthetic samples, we argue that in addition to empirical and generalization error, \emph{distribution shift} penalty also determines its robustness on the real data distribution.} We prove that this penalty is upper bounded by the \cwdfull between the \approxdist distribution and \realdist  distribution. Thus robustness will transfer from proxy distributions that are in close proximity to \realdist distribution with respect to \cwdfull. 

\emph{Q.2} \emph{\newtext{How effective are proxy distributions in boosting adversarial robustness on real-world dataset?}}
\newtext{Our experimental results on \textit{five} datasets demonstrate that the use of samples from \textit{PrOxy distributions in Robust Training} (PORT) can significantly improve robustness. In particular, PORT achieves up to to $7.5$\% improvement in adversarial robustness over existing state-of-the-art~\citep{croce2020robustbench}. Its improvement is consistent across different threat models ($\ell_{\infty}$ or $\ell_2$), network architectures, datasets, and robustness criteria (empirical or certified robustness).  We also uncover that synthetic images from diffusion-based generative models are most helpful in improving robustness on real datasets. We further investigate the use of proxy distributions in robust training. In particular, we investigate why current state-of-the-art in diffusion-based models are significantly more helpful than their counterparts, such as generative adversarial networks (GANs). 
}

\newtext{
\emph{Q.3} \emph{Can we develop a metric to characterize which proxy distribution will be most helpful in robust training?} 
Our theory motivates the design of a measure of proximity between two distributions that incorporates the geometry of the distributions and can empirically predict the transfer of robustness.}
We propose a robust learning based approach where we use the success of a discriminator in distinguishing adversarially perturbed samples of synthetic and real data as a measure of proximity. Discriminating between synthetic and real data is a common practice~\citep{goodfellow2014GAN, gui2020ganreview},
\bluetext{however, we find that considering adversarial perturbations on synthetic and real data samples is the key to making the discriminator an effective measure for this task.} 
We demonstrate that the rate of decrease in discriminator success with an increasing size of perturbations can effectively measure proximity and it accurately predicts the relative transfer of robustness from different generative models. 

We also leverage our robust discriminators to identify most helpful synthetic samples. We use the proximity of each synthetic sample to \realdist distribution, referred to as synthetic score, as a metric to judge their importance. This score can be computed using output probability from a discriminator that is robustly trained to distinguish between adversarially perturbed samples from \approxdist and \realdist distribution. \bluetext{We demonstrate that selecting synthetic images based on their synthetic scores can further improve performance.}  


\noindent \textbf{Contributions.}
We make the following key contributions.
\begin{itemize}[noitemsep, topsep=0pt]
    \item We provide an analytical bound on the transfer of robustness from proxy distributions to \realdist distribution using the notion of conditional Wasserstein distance and further validate it experimentally.

    \item Overall, using additional synthetic images, we improve both clean and robust accuracy on \textit{five} different datasets. In particular, we improve robust accuracy by up to $7.5$\% and $6.7$\% in $\ell_{\infty}$ and $\ell_2$ threat models, respectively, and certified robust accuracy ($\ell_2$) by $7.6$\% on the CIFAR-10 dataset.
    
    \item When selecting a proxy distribution, we show that existing metrics, such as FID, fail to determine the synthetic-to-real transfer of robustness. We propose a new metric (\emph{\aurpc}) based on the distinguishability of adversarially perturbed synthetic and real data, that accurately determines the performance transfer.  
     
    \item We also develop a metric, named synthetic score, to determine the importance of each synthetic sample in synthetic-to-real robustness transfer. We demonstrate that choosing samples with lower synthetic scores provide better results than randomly selected samples.
\end{itemize}

\section{Integrating proxy distributions in robust training} \label{sec: methods}
In this section we propose to use samples from proxy distributions to improve robustness. We first provide analytical bounds on the transfer of adversarial robustness from \approxdist to \realdist distribution. Next, using robust discriminators we provide a metric based on our analytical bound, which can accurately determine the relative ranking of different proxy distributions in terms of robustness transfer. Our metric can be calculated empirically (using samples from both distributions) and does not require the knowledge of the \approxdist or \realdist distribution. Finally, we present our robust training formulation (PORT) which uses synthetic samples generated by the generative model, together with real samples.


\noindent \textbf{Notation.} We represent the input space by $\mathcal{X}$ and corresponding label space as $\mathcal{Y}$. Data is sampled from a joint distribution $D$ that is supported on $\mathcal{X}\times \mathcal{Y}$. For a label $y$, we use $D\mid y$ to denote the conditional distribution of class $y$. 
We denote the proxy distribution as $\dsyn$.  We denote the neural network for classification by $f: \cX \rightarrow \cZ$, parameterized by $\theta$, which maps input images to output probability vectors ($z$). We use $h$ to refer to the classification functions that output labels. For a set $\cS$ sampled from a distribution $D$, we use $\hat{\cS}$ to denote the empirical distribution with respect to set $\cS$. We use $\mathcal{S}\sim D$ to denote the sampling of a dataset from a distribution $D$. We use $(x,y)\gets D$ to denote the sampling of a single point from $D$.



\subsection{Understanding transfer of adversarial robustness between data distributions} \label{sec: theory_robstness_bound}
\bluetext{
Since our goal is to use samples from proxy distribution to improve robustness on \realdist distribution, we first study the transfer of adversarial robustness between two data distributions.
}


\begin{definition}[Average Robustness]
We define average robustness for a classifier $h$  on a distribution $D$ according to a distance metric $d$ as follows: $\Rob_d(h,D)=\Ex_{(x,y)\gets D}[\inf_{h(x')\neq y } d(x',x)].$
where classifier $h: \cX \rightarrow \cY$ predicts class label of an input sample
\footnote{This notion of robustness is also used in ~\citet{gilmer2018adversarial,diochnos2018adversarial} and \citet{mahloujifar2019curse}.}.\end{definition}
This definition refers to the expected distance to the closest adversarial example for each~sample. 


\noindent \textbf{Formalizing transfer of robustness from \approxdist to \realdist  distribution.} In robust learning from a proxy distribution, we are interested in bounding the average robustness of the classifier obtained by a learning algorithm ($L$), on distribution $D$, when the training set is a set $\cS$ of $n$ labeled examples sampled from a proxy distribution $\tilde{D}$. In particular we want to provide a lower bound on the \bluetext{transferred average robustness} i.e, $\Ex_{\substack{\cS \sim \tilde{D}\\ h\gets L(S)}}[\Rob_d(h,D)].$

In order to understand this quantity better, suppose $h$ is a classifier trained on a set $\cS$ that is sampled from $\tilde{D}$, using algorithm $L$. We decompose $\Rob_d(h,D)$ to three quantities as follows:
\begin{align*}
\Rob_d(h,D) = \big(\Rob_d(h,D)-\Rob_d(h,\tilde{D})\big) + \big(\Rob_d(h,\tilde{D}) - \Rob_d(h,\hat{\cS})\big) + \Rob_d(h,\hat{\cS}).
\end{align*}
Using this decomposition, by linearity of expectation and triangle inequality we can bound transferred average robustness from below by
\begin{align*}
\small
\underbrace{\Ex_{\substack{\cS\gets \tilde{D}^n\\ h\gets L(\cS)}}[\Rob_d(h,\hat{\cS})]}_{\textbf{Empirical robustness}} \:-\:  \underbrace{\big|\Ex_{\substack{\cS\gets \tilde{D}^n\\ h\gets L(\cS)}}[\Rob_d(h,\tilde{D}) \:-\: \Rob_d(h,\hat{\cS})]\big|}_{\textbf{Generalization penalty}} \:-\: \underbrace{\big|\Ex_{\substack{S\gets \tilde{D}^n\\ h\gets L(\cS)}}[\Rob_d(h,D)-\Rob_d(h,\tilde{D})]\big|}_{\textbf{Distribution-shift penalty}}.
\end{align*}

As the above decomposition suggests, in order to bound the average robustness, we need to bound both the generalization penalty and the distribution shift penalty.
The generalization penalty has been rigorously studied before in multiple works \citep{cullina2018pac,montasser2019vc,schmidt2018adversarially}. Hence, we focus on bounding the distribution shift penalty. Our goal is to provide a bound on the distribution-shift penalty that is independent of the classifier in hand and is only related to the properties of the distributions. With this goal, we define a notion of distance between two distributions.

\begin{definition}[Conditional Wasserstein distance]\label{def:cwd}
For two labeled distributions $D$ and $\tilde{D}$ supported on $\cX\times \cY$, we define $\cwd$ according to a distance metric $d$ as follows:
$$\cwd_d(D,\tilde{D})=\Ex_{(\cdot,y)\gets D}\left[\inf_{J\in \mathcal{J}(D\mid y,\tilde{D}\mid y)}\Ex_{(x,x')\gets J}[d(x,x')]\right]$$
where $\mathcal{J}(D,\tilde{D})$ is the set of joint distributions whose marginals are identical to $D$ and $\tilde{D}$. 
\end{definition}
It is simply the expectation of Wasserstein distance between conditional distributions for each class. 
Now, we are ready to state our main theorem that bounds the distribution shift penalty for any learning algorithm based only on the Wasserstein distance of the two distributions.

\begin{theorem}[Bounding distribution-shift penalty] \label{th: robustbound} 
Let $D$ and $\tilde{D}$ be two labeled distributions supported on $\cX\times \cY$ with identical label distributions, i.e., $\forall y^* \in \cY, \Pr_{(x,y)\gets D}[y=y^*] =  \Pr_{(x,y)\gets \tilde{D}}[y=y^*]$. Then for any classifier $h:\cX\to \cY$
$$|\Rob_d(h,\tilde{D}) - \Rob_d(h,D)| \leq  \cwd_d(D,\tilde{D}).$$
\end{theorem}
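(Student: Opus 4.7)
The plan is to show that the per-example robustness function $r_h(x,y) := \inf_{h(x')\neq y} d(x',x)$ is $1$-Lipschitz in $x$ for each fixed label $y$ under the metric $d$, and then use optimal-transport duality to upper bound the difference of its expectations by the Wasserstein distance between the class-conditional distributions $D\mid y$ and $\tilde D\mid y$. The identical-label-marginal hypothesis will then let me average the per-class bound against the common label distribution, which is exactly $\cwd_d(D,\tilde D)$.

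First I would establish the Lipschitz property. Fix $y$ and let $x_1, x_2 \in \cX$. For any $x'$ with $h(x')\neq y$, the triangle inequality gives $d(x',x_1) \leq d(x',x_2)+d(x_2,x_1)$; taking the infimum over such $x'$ yields $r_h(x_1,y)\leq r_h(x_2,y)+d(x_1,x_2)$, and swapping the roles of $x_1,x_2$ yields $|r_h(x_1,y)-r_h(x_2,y)|\leq d(x_1,x_2)$. Crucially, this step uses only the distance-to-misclassification form of $r_h$ and is independent of the specific classifier, which is what will make the final bound hold uniformly over all $h$.

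Next, I would fix a label $y$ and any coupling $J\in\mathcal{J}(D\mid y,\tilde D\mid y)$. Since the marginals of $J$ agree with the two class-conditionals, I can rewrite the gap as a single expectation under $J$ and then apply the Lipschitz bound: $\bigl|\Ex_{x\sim D\mid y}[r_h(x,y)]-\Ex_{x'\sim \tilde D\mid y}[r_h(x',y)]\bigr|=\bigl|\Ex_{(x,x')\sim J}[r_h(x,y)-r_h(x',y)]\bigr|\leq \Ex_{(x,x')\sim J}[d(x,x')]$. Taking the infimum over all couplings $J$ replaces the right-hand side by the Wasserstein distance $W_d(D\mid y,\tilde D\mid y)$.

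Finally, the shared-label-marginal assumption lets me write both $\Rob_d(h,D)$ and $\Rob_d(h,\tilde D)$ as iterated expectations against a common distribution on $y$. Pulling the absolute value inside the outer expectation over $y$ and applying the per-class bound from the previous step gives $|\Rob_d(h,D)-\Rob_d(h,\tilde D)|\leq \Ex_{(\cdot,y)\gets D}[W_d(D\mid y,\tilde D\mid y)]=\cwd_d(D,\tilde D)$, which is exactly the stated inequality. There is no serious obstacle; the main subtlety worth flagging is the uniformity of the bound over $h$, which stems from the fact that the Lipschitz constant of $r_h(\cdot,y)$ is $1$ in the metric $d$ regardless of the classifier, and from the fact that the label marginals match so that no extra total-variation term appears in the outer average.
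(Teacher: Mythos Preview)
Your proposal is correct and is essentially the same argument as the paper's full proof: both hinge on the triangle inequality for $d$ applied inside the infimum $\inf_{h(x')\neq y} d(x',x)$, followed by passing to an optimal coupling of the class-conditionals and averaging over the common label marginal. You package the first step as ``$r_h(\cdot,y)$ is $1$-Lipschitz'' and then invoke the coupling bound, while the paper carries out the same computation inline by inserting $d(x'',x)$ via the triangle inequality under the optimal transport $J^*_y$; the content is identical.
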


Theorem \ref{th: robustbound} shows how one can bound the distribution-shift penalty by minimizing the conditional Wasserstein distance between the two distributions. \bluetext{We provide its proof and empirical validation in Appendix~\ref{app: theory}}. \bluetext{Even if we successfully reduce the generalization penalty, the distribution-shift penalty may remain the dominant factor in  transferred robustness.} This theorem enables us to switch our attention from robust generalization to creating 
generative models for which the underlying distribution is close to the original distribution. \bluetext{In Appendix \ref{app: theory}, we also provide two other theorems about the tightness of Theorem \ref{th: robustbound} and the effect of combining clean and proxy data together.}

\subsection{Which \approxdist distribution to choose? - Approximating \cwdfull using robust discriminators} \label{sec: theory_arc}

Our analytical results suggest that a proxy distribution that has small \cwdfull to the \realdist can improve robustness. This means that optimizing for \cwdfull in a proxy distribution (e.g., a generative model) can potentially lead to improvements in adversarial robustness. Unfortunately, it is generally a hard task to empirically calculate the \cwdfull using only samples from the distributions \citep{mahloujifar2019empirically,bhagoji2019lowerBounds}. In this section, we introduce a metric named \emph{\aurpc} as a surrogate for \cwdfull that can be calculated when we only have sample access to the data distributions. Before defining our metric, we need to define the notion of \emph{robust discrimination} between two distributions.

\begin{wrapfigure}{r}{.4\textwidth}
    \vspace{-15pt}
    \begin{minipage}{\linewidth}
        \centering\captionsetup[subfigure]{justification=centering}
        \begin{minipage}{0.48\linewidth}
            \includegraphics[width=0.98\linewidth]{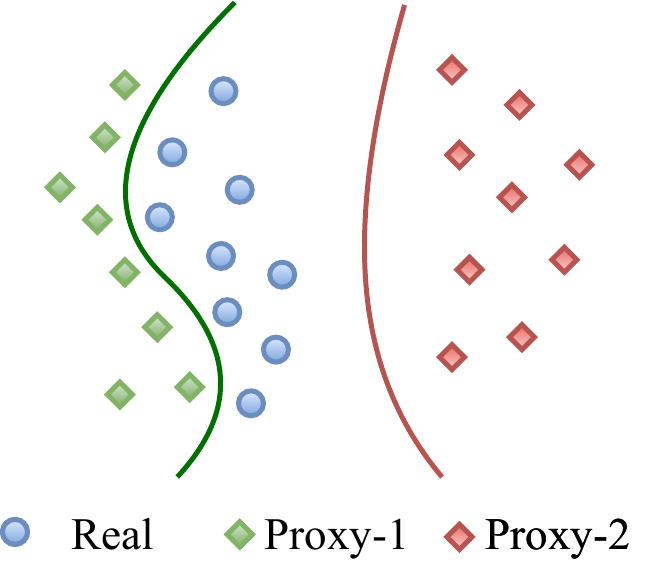}
            \subcaption{\scriptsize Original samples.}
            \label{fig: explain_blowup_base}
        \end{minipage}
        \begin{minipage}{0.48\linewidth}
            \includegraphics[width=0.98\linewidth]{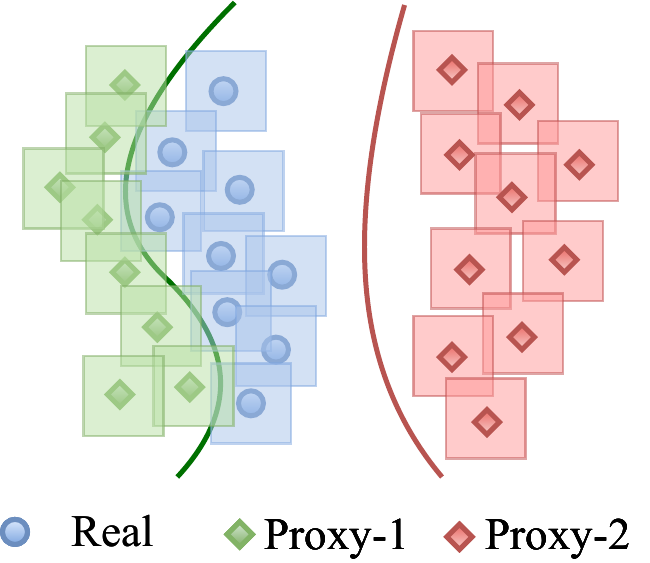}
            \subcaption{\scriptsize Perturbed samples.}
            \label{fig: explain_blowup_adv}
        \end{minipage}
    \end{minipage}
    \vspace{-5pt}
    \caption{\newbluetext{Why robust discrimination is effective.} (a) Without adversarial perturbations, a non-robust discriminator can perfectly distinguish both proxy distributions from the real distribution. (b) Under adversarial perturbations, classification between real and proxy-1 is much harder compared to real and proxy-2 distribution. The success of a robust discriminator is dependent on the proximity of real and proxy distributions.
    }
    \label{fig: explain_blowup}
    \vspace{-20pt}
\end{wrapfigure}
\noindent \textbf{Robust discrimination.} Our metric works based on how well a discriminator can \emph{robustly} distinguish between samples from the real and proxy distributions. Compared to a typical discriminator, the robust discriminator should be accurate even when there is an adversary who can perturb the sampled instance by a perturbation of size $\epsilon$. Intuitively, if there exists a successful and robust discriminator for two distributions $D$ and $\tilde{D}$, then it means that no adversary can make instances of $D$ to look like instances of $\tilde{D}$ even by making perturbations of size $\epsilon$. This means that $D$ and $\tilde{D}$ are far away from each other (Figure~\ref{fig: explain_blowup_adv}). \bluetext{ On the other hand, if no robust discriminator exists, then it means that the adversary can ''align`` most of the mass of proxy distribution with that of real distribution by making perturbations of size at most $\epsilon$.  This notion is closely related to the optimal transport between the proxy and real distribution. In particular, the optimal adversary here corresponds to a transport $J\in \cJ(D,\tilde{D})$ that minimizes $\Pr_{(x,x')\gets J(D,\tilde{D})}[d(x,x')\leq \epsilon]$ (Compare with Definition \ref{def:cwd}).}
To be more specific, the goal of a robust discriminator $\sigma$ is to maximize its robust accuracy\footnote{All the definitions of accuracy and robust accuracy in this section are defined for true distributions. However, in the experiments we work with their empirical variant.}:
\begin{equation*}
    \Racc_\epsilon (\sigma, D, \tilde{D}) = \tfrac{1}{2} \cdot \text{\normalfont Pr}_{x \gets D} \left[\forall x' \in Ball_\epsilon(x); \sigma(x')=1\right] + \tfrac{1}{2} \cdot \text{\normalfont Pr}_{x \gets \tilde{D}} \left[\forall x' \in \newbluetext{Ball_\epsilon(x)}; \sigma(x') =0\right]. 
\end{equation*}

 We use $\Racc_\epsilon^*(D,\tilde{D})$ to denote the accuracy of best existing robust discriminator, namely, $\Racc_\epsilon^*(D,\tilde{D})=\min_\sigma \Racc_\epsilon(\sigma, D,\tilde{D})$. Note that as $\epsilon$ grows from $0$ to $\infty$, the robust accuracy of any discriminator will decrease to at most $50\%$. If two distributions $D$ and $\tilde{D}$ are close to each other, this drop in accuracy happens much faster since the adversary can make images from two distributions to look alike by adding smaller perturbations. We use this intuition and define our metric \emph{\aurpc} as follows: $$\aurpcsf(D,\tilde{D})=\int_0^\infty (\Racc^*_\epsilon(D,\tilde{D}) - \frac{1}{2}) d\epsilon.$$
 
In other words, the \emph{\aurpc} metric is equal to the area under the robust discrimination accuracy v.s. perturbation curve (See Figure \ref{fig: aurpc} for a sample of this curve.) In Section \ref{sec:AURCP_calculation}, we discuss how this metric can be calculated with only samples from the \realdist and \approxdist distributions. \bluetext{When selecting a proxy distribution, we choose the one with the smallest \emph{\aurpc} value, since it is expected to be closest to \realdist distribution. We also demonstrate that \emph{\aurpc} metric is directly related to \cwdfull.} In particular, we have the following Theorem:
\begin{theorem} \label{thm:arc}
For any two distributions $\tilde{D}$ and $D$ with equal class probabilities we have $\cwd(D,\tilde{D})\geq 4\cdot \aurpcsf(D,\tilde{D})$.  Moreover, if for all labels $y$ , $(D\mid y)$ and $(\tilde{D}\mid y)$ are two concentric uniform spheres, then we have
$\cwd(D,\tilde{D}) = 4\cdot \aurpcsf(D,\tilde{D})$.
\end{theorem}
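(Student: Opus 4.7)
The plan is to translate the robust-discrimination condition into a geometric constraint on every coupling of the $\cX$-marginals $D_X$ and $\tilde D_X$, then to integrate this constraint against $\epsilon$, and finally to relate the unconditional Wasserstein distance to the conditional one. I would proceed in four steps.

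\textbf{Step 1 (Robust success sets are far apart).} Fix a discriminator $\sigma$ and let $A=\{x:\mathit{Ball}_\epsilon(x)\subseteq \sigma^{-1}(1)\}$ and $B=\{x:\mathit{Ball}_\epsilon(x)\subseteq \sigma^{-1}(0)\}$. Since $\sigma^{-1}(0)\cap\sigma^{-1}(1)=\emptyset$, the balls $\mathit{Ball}_\epsilon(a)$ and $\mathit{Ball}_\epsilon(b)$ are disjoint for every $a\in A$ and $b\in B$. Working in $\R^n$ (or any length space), this forces $d(a,b)\ge 2\epsilon$, because otherwise the midpoint $(a+b)/2$ would lie in both balls.

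\textbf{Step 2 (From robust accuracy to coupling mass).} By definition, $\Racc_\epsilon(\sigma,D,\tilde D)=\tfrac{1}{2}(D_X(A)+\tilde D_X(B))$. For any coupling $J\in\cJ(D_X,\tilde D_X)$, inclusion–exclusion gives $J(A\times B)\ge D_X(A)+\tilde D_X(B)-1=2\,\Racc_\epsilon(\sigma)-1$, and by Step~1, $J(A\times B)\le \Pr_J[d(X,X')>2\epsilon]$. Taking the supremum over $\sigma$ and the infimum over $J$ yields $\inf_{J}\Pr_J[d>2\epsilon]\ge 2\,\Racc^*_\epsilon(D,\tilde D)-1$.

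\textbf{Step 3 (Integrate over $\epsilon$, relax to marginals).} For any coupling $J$, the layer-cake formula gives $\Ex_J[d]=\int_0^\infty \Pr_J[d>t]\,dt$. Since equal label marginals imply every label-conditional coupling is a valid coupling of $D_X$ and $\tilde D_X$, the conditional transport is a restriction of the unconditional one, so
\begin{align*}
\cwd(D,\tilde D) \;\ge\; \inf_{J\in\cJ(D_X,\tilde D_X)} \int_0^\infty \Pr_J[d>t]\,dt \;\ge\; \int_0^\infty \inf_{J} \Pr_J[d>t]\,dt \;\ge\; \int_0^\infty (2\,\Racc^*_{t/2}-1)\,dt \;=\; 4\cdot \aurpcsf(D,\tilde D),
\end{align*}
where the last step uses the substitution $t=2\epsilon$.

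\textbf{Step 4 (Tightness for concentric spheres).} For $D\mid y$ and $\tilde D\mid y$ uniform on spheres of radii $r_1^{(y)}\le r_2^{(y)}$ about a common center $c_y$, the radial coupling achieves $\Ex[d]=r_2^{(y)}-r_1^{(y)}$, which matches the triangle-inequality lower bound $d(x,x')\ge \bigl|\,|x-c_y|-|x'-c_y|\,\bigr|$. For the discriminator, a radius-threshold classifier achieves $\Racc^*_\epsilon=1$ for $\epsilon<(r_2^{(y)}-r_1^{(y)})/2$, while for larger $\epsilon$ a matched-pair argument (radially aligned pairs overlap under $\epsilon$-perturbation) forces $\Racc^*_\epsilon=1/2$. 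This yields a per-class contribution of $(r_2^{(y)}-r_1^{(y)})/4$ to $\aurpcsf$, and averaging over $y$ collapses every inequality in Step~3 to an equality.

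\textbf{Main obstacle.} The key technical point is the geometric lemma of Step~1: the "disjoint balls imply distance $\ge 2\epsilon$" argument is precisely what produces the factor of $4$ in the final bound, and it restricts the natural ambient setting to length spaces such as $\R^n$ with an $\ell_p$ metric. The remaining steps are standard manipulations—tail formula, the inf/integral exchange in the favorable direction, and relaxation from conditional to marginal transport—but one must be careful to keep the inequality oriented as a lower bound on $\cwd$. For the tightness claim, one implicitly needs the class centers $c_y$ to be well-separated so that the optimal marginal transport automatically respects labels; otherwise only the inequality holds.
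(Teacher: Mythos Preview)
Your proof of the inequality is correct and rests on the same core idea as the paper's: the midpoint of two points that are robustly classified into opposite labels cannot exist, which forces them to be at distance $>2\epsilon$. You package this as a set-distance lemma plus the Fr\'echet/inclusion--exclusion bound $J(A\times B)\ge D_X(A)+\tilde D_X(B)-1$; the paper instead builds an explicit ``midpoint adversary'' $A_J^\epsilon$ and bounds the total variation between the pushforwards $A_J^\epsilon\#D$ and $A_J^\epsilon\#\tilde D$. These are dual formulations of the same argument.

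There is one structural difference worth noting. Your Step~3 routes the bound through the \emph{unconditional} Wasserstein distance $W_1(D_X,\tilde D_X)$ before reaching $\cwd$, using that any label-respecting coupling is in particular a coupling of the $\cX$-marginals. The paper instead applies the midpoint lemma directly to the specific transport $J^*=\sum_y \alpha_y J_y^*$ built from the per-class optimal couplings, so that $\Ex_{J^*}[d]=\cwd$ exactly and no relaxation to $W_1$ is needed. For the inequality both routes are fine, but your detour is what creates the gap you flag in the equality case: once you pass to $\inf_J$ over all marginal couplings, you must later argue that the optimal marginal transport respects labels, which need not hold for arbitrary concentric-sphere configurations. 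The paper's route avoids this entirely---since it never leaves $J^*$, the only thing to verify for equality is that the threshold discriminator realizes $\Racc^*_\epsilon-\tfrac12=\tfrac12\Pr_{J^*}[d>2\epsilon]$, which is exactly your Step~4 computation. So your ``main obstacle'' is an artifact of the unconditional-$W_1$ detour rather than a genuine obstruction; replacing $\inf_J$ by the fixed $J^*$ throughout removes it.
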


\noindent \textbf{Why non-robust discrimination is not an effective metric.} Deep neural networks are commonly used as a discriminator between synthetic and real images~\citep{goodfellow2014GAN, gui2020ganreview}. 
In our experiments, we find that even a four-layer convolutional network can achieve near 100\% classification accuracy on all selected proxy distributions (Appendix~\ref{app: nonrobust_disc}). This result shows that non-robust discrimination is not an effective measure of proximity (as highlighted in Figure~\ref{fig: explain_blowup_base}). 

\subsection{Bringing it all together: Improving adversarial robustness using \approxdist distributions} \label{sec: theory_sota}
\noindent \textbf{Overview of robust training.}
The key objective in robust training is to solve $\underset{\theta}{min} \underset{(x,y)\sim \dtrain}{\E} L_{adv}(\theta, x, y, \Omega)$, where $L_{adv}$ is determined by the training objective and $\Omega$ is the threat model. In adversarial training~\citep{madry2017towards, zhang2019tradeoff, carmon2019unlabeled, pang2021bagoftricks}, we train against projected gradient descent based (PGD) adversarial examples by selecting $L_{adv} (.) = \ell(\theta, {PGD}(x, \Omega), y)$.  In randomized smoothing~\citep{cohen2019certified, salman2019AdvSmooth, sehwag2020Hydra, carmon2019unlabeled}, where the objective is to achieve certified robustness~\citep{wong2018provable}, we aim to achieve invariance to random Gaussian Noise by selecting $L_{adv} (.) = \ell(\theta, x, y) + \beta D_{kl}(f_{\theta}(x), f_{\theta}(x + \delta))$, where $\delta \sim \mathrm{N}(0, \sigma^2I)$, and $D_{kl} (., .)$ is the KL-divergence. This loss function~\citep{zheng2016stabilityTrain}, referred to as robust stability training (RST), performs better than using $\ell(\theta,x+\delta,y)$~\citep{carmon2019unlabeled}. We refer the $L_{adv}$ in adversarial training and randomized smoothing as $L_{pgd}$ and $L_{smooth}$, respectively. We represent the cross-entropy loss function, which is used to train the classifier, as $\ell(.)$, where $\ell(\theta, x, y) = \langle-log(f_{\theta}(x)), \boldsymbol{y}\rangle$ and $\boldsymbol{y}$ is the one-hot vector encoding of $y$.

Our objective is to integrate synthetic samples from Proxy distributions in Robust Training (PORT). In particular, we use the following robust training formulation that combines samples from a real training dataset with synthetic samples.
\begin{align*} 
    \underset{\theta}{min}\:L_{agg};\:\:\:L_{agg} = \gamma \underset{(x,y)\sim \dtrain}{\E} L_{adv}(\theta, x, y, \Omega) + (1-\gamma) \underset{(x,y)\sim \dsyn}{\E} L_{adv}(\theta, x, y, \Omega)
\end{align*}
where $\gamma\in [0,1]$ and $L_{agg}$ is the aggregated adversarial loss over both $\dtrain$ and $\dsyn$. Depending on training objective, we select either $L_{pgd}$ or $L_{smooth}$ as $L_{adv}$. 
We approximate the aggregated loss using available training samples from the \realdist distribution ($\dtrain$) and a set of synthetic images sampled from the proxy distribution ($\dsyn$). We select $\gamma$ through a hyperparameter search. 


\noindent \textbf{\newtext{Which synthetic samples to choose?}} In particular, which set ($S$) of $N$ synthetic samples from the proxy distribution ($\tilde{D}$) leads to maximum transferred robustness on real data, i.e., $\Ex_{\substack{f\gets \hat{L}(S)}}[\Rob_d(f,D)]$, where $\hat{L}$ is a robust learning algorithm.
While it can be solved in many different ways, we follow a rejection sampling based approach. In particular, we measure proximity of each synthetic sample to \realdist distribution and use it as a metric to accept or reject a sample. We use output probability score from our trained \newbluetext{robust} discriminators as a measure of proximity. In particular we define $\textrm{\textit{synthetic-score}}(x) = \newbluetext{\min_{x_i \in {Ball_\epsilon (x)}}\sigma(x_i)}$ as the final score for each synthetic sample, where samples with lowest score are prioritized in the selection.

\section{Experimental results} \label{sec: experiments}
First, we show that our proposed approach of using proxy distributions brings significant improvements in the performance of both adversarial training and randomized smoothing across both $\ell_{\infty}$ and $\ell_2$ threat models (Section~\ref{sec: sota}).
\newtext{ We also uncover that diffusion-based generative models are more effective proxy distributions than generative adversarial networks (GANs) on multiple datasets. Next we delve deeper into why diffusion-based generative models are highly effective proxy distributions (Section~\ref{sec:AURCP_calculation}). We find that the key reason behind the success of diffusion based models is the that there is no \emph{robust} discriminator that can \emph{robustly} distinguish between the samples from diffusion based models and real data distribution. We demonstrate that our proposed metric (\emph{\aurpc}) provides a better characterization of the effectiveness of diffusion-based models than other widely used metrics. Finally, we investigate the effectiveness of adaptively selecting synthetic images in our framework.}


\subsection{Improving robustness using synthetic images from proxy distributions} \label{sec: sota}

\noindent \textit{Setup.}
\newtext{We consider five datasets, namely CIFAR-10~\citep{krizhevsky2014cifar}, CIFAR-100~\citep{krizhevsky2014cifar}, CelebA~\citep{liu2015CelebA}, AFHQ~\citep{choi2020starganv2}, and ImageNet~\citep{deng2009imagenet}. We also work with commonly used $l_{\infty}$ and $l_2$ threat models for adversarial attacks and use AutoAttack~\citep{croce2020autoattack} to measure robustness. We provide extensive details on the setup in Appendix~\ref{app: setup}.}

\noindent \textit{Evaluation metrics for robust training.} We use two key metrics to evaluate performance of trained models: \textit{clean accuracy} and \textit{robust accuracy}. While the former refers to the accuracy on unmodified test set images, the latter refers to the accuracy on adversarial examples generated from test set images. \bluetext{We refer to the robust accuracy of any discriminator trained to distinguish between synthetic and real data as robust discrimination accuracy.} We also use randomized smoothing to measure certified robust accuracy, i.e., robust accuracy under strongest possible adversary in the threat model.

\noindent \textit{Synthetic data.} \newtext{We sample synthetic images from a generative models, in particular from denoising diffusion-based probabilistic models (DDPM)~\citep{ho2020denoisingdiffusion, nichol2021improvedDdpm}. We provide extensive details on the number of images sampled and the sampling procedure in Appendix~\ref{app: setup}. We use $\gamma=0.4$ as it achieves best results (Appendix~\ref{app: sota}). We combine real and synthetic images in a $1$:$1$ ratio in each batch, thus irrespective of the number of synthetic images, our training time is only twice of the baseline methods.} 


\begin{table}[!htb]
    \centering
    \caption{\textbf{State-of-the-art adversarial robustness.} \bluetext{Comparing experimental results of our framework (\ours) with baselines for adversarial training on the CIFAR-10 dataset for both $\ell_{\infty}$ and $\ell_2$ threat model.} We sample synthetic images from the diffusion-based generative model (DDPM). 
    \textit{Clean}/\textit{Auto} refers to clean/robust accuracy measured with AutoAttack.}
    \label{tab: sota}
    \vspace{-5pt}
    \begin{subtable}[h]{0.49\textwidth}
        \centering
        \caption{$\ell_{\infty}$ threat model.}
        \renewcommand{\arraystretch}{1.2}
        \resizebox{0.99\linewidth}{!}{\begin{tabular}{ccccc}
        \toprule
        Method & Architecture & Parameters (M) & \textit{Clean} & \textit{Auto} \\ \midrule
        \citet{zhang2019tradeoff} & ResNet-18 & $11.2$ & $82.0$ & $48.7$ \\ 
        \ours & ResNet-18 & $11.2$ & $\boldsymbol{84.6}$ & $\boldsymbol{55.7}$ \\ \midrule
        \citet{rice2020overfitadv} & \wrn{34-20} & $184.5$ & $85.3$ & $53.4$ \\
        \citet{gowal2020uncoveringadv} & \wrn{70-16} & $266.8$ & $85.3$ & $57.2$
        \\ \midrule
        \citet{zhang2019tradeoff} & \wrn{34-10} & $46.2$ & $84.9$ & $53.1$ \\
        \ours  & \wrn{34-10} & $46.2$ & $\boldsymbol{87.0}$ & $\boldsymbol{60.6}$ \\
        \bottomrule
        \end{tabular}}
    \end{subtable} \hfill
    \begin{subtable}[h]{0.49\textwidth}
        \centering
        \caption{$\ell_2$ threat model.}
        \renewcommand{\arraystretch}{1.2}
        \resizebox{0.99\linewidth}{!}{\begin{tabular}{cccccc}
        \toprule
        Method & Architecture & Parameters (M) & \textit{Clean} & \textit{Auto} \\ \midrule
        \citet{rice2020overfitadv} & ResNet-18 & $11.2$ & $88.7$ & $67.7$ \\
        \ours & ResNet-18 & $11.2$ & $\boldsymbol{89.8}$ & $\boldsymbol{74.4}$
        \\ \midrule
        \citet{madry2017towards} & ResNet-50 & $23.5$ & $90.8$ & $69.2$ \\
        \citet{gowal2020uncoveringadv} & \wrn{70-16} & $266.8$ & $90.9$  & $74.5$
        \\ \midrule
        \citet{wu2020adversarial} & \wrn{34-10} & $46.2$ & $88.5$ & $73.7$ \\
        \ours & \wrn{34-10} & $46.2$ & $\boldsymbol{90.8}$ & $\boldsymbol{77.8}$
        \\ \bottomrule
        \end{tabular}}
     \end{subtable}
\end{table}

\subsubsection{Improving performance of adversarial training} \label{sec: sota_adv}
\textbf{State-of-the-art robust accuracy synthetic images (Table~\ref{tab: sota}, \ref{tab: sota_new}).} 
Using synthetic images from diffusion-based generative models (DDPM), our approach achieves state-of-the-art robust accuracy in the category of not using any extra real world data~\citep{croce2020robustbench}. \newtext{We improve robust accuracy by up to $7.5$\% and $6.7$\% over previous works in $\ell_{\infty}$ and $\ell_2$ threat model, respectively. Notably, the gain in performance is highest for the CIFAR-10 and Celeb-A datasets. Note that by using only synthetic images, our work also achieves competitive performance with previous works that use extra real-world images. For example, using additional real-data \citet{carmon2019unlabeled} achieve $59.53$\% robust accuracy on CIFAR-10 ($\ell_{\infty}$) while we achieve $60.6$\% robust accuracy by using only synthetic data.}

\begin{table}[!htb]
    \vspace{-5pt}
    \caption{\textbf{Similar improvement on additional datasets}. \newtext{Our approach further improves both clean and robust accuracy on each of datasets of Tables \ref{tab: sota} and \ref{tab: sota_new}. Unless a better baseline is available in previous works, we compare our results with baseline robust training approach from \citet{madry2017towards}.  \textit{Clean}/\textit{Auto} refers to clean/robust accuracy measured with AutoAttack. We provide results for AFHQ dataset in Table~\ref{tab: afhq_table} in Appendix.}}
    \vspace{-5pt}
    \label{tab: sota_new}
    \begin{subtable}{.35\linewidth}
        \centering
        \caption{CelebA ($64\times64$)}
        \renewcommand{\arraystretch}{1.3}
        \resizebox{\linewidth}{!}{
        \begin{tabular}{cccccc} \toprule
         & \multicolumn{2}{c}{$\ell_\infty (\epsilon=8/255)$} &  & \multicolumn{2}{c}{$\ell_2 (\epsilon=1.0)$} \\ \cmidrule{2-3} \cmidrule{5-6}
         & \textit{Clean} & \textit{Auto} &  & \textit{Clean} & \textit{Auto} \\ \midrule
        Baseline & $82.4$ & $60.2$ &  & $80.6$ & $58.9$ \\
        \ours & $84.8$ & $63.4$ &  & $82.3$ & $60.0$ \\
        $\Delta$ & $\boldsymbol{\texttt{+}2.4}$ & $\boldsymbol{\texttt{+}3.2}$	&  & $\boldsymbol{\texttt{+}1.7}$ & $\boldsymbol{\texttt{+}1.1}$ \\ \bottomrule
    \end{tabular}}
    \end{subtable} \hfill
    \begin{subtable}{.35\linewidth}
        \centering
        \caption{CIFAR-100 ($32\times32$)}
        \renewcommand{\arraystretch}{1.3}
        \resizebox{\linewidth}{!}{
        \begin{tabular}{cccccc} \toprule
         & \multicolumn{2}{c}{$\ell_\infty (\epsilon=8/255)$} &  & \multicolumn{2}{c}{$\ell_2 (\epsilon=0.5)$} \\ \cmidrule{2-3} \cmidrule{5-6}
         & \textit{Clean} & \textit{Auto} &  & \textit{Clean} & \textit{Auto} \\ \midrule
        Baseline & $58.4$ & $27.8$ &  & $62.9$ & $42.1$ \\
        \ours & $65.9$ & $31.2$ &  & $70.7$ & $47.8$ \\
        $\Delta$ & $\boldsymbol{\texttt{+}7.5}$ &  $\boldsymbol{\texttt{+}3.4}$	&  &  $\boldsymbol{\texttt{+}7.8}$ &  $\boldsymbol{\texttt{+}5.7}$ \\ \bottomrule
    \end{tabular}}
    \end{subtable} \hfill 
    \begin{subtable}{.23\linewidth}
    \centering
    \caption{ImageNet ($64\times64$)}
    \renewcommand{\arraystretch}{1.3}
    \tiny
    \resizebox{\linewidth}{!}{
    \begin{tabular}{ccc} \toprule
         & \multicolumn{2}{c}{$\ell_\infty (\epsilon=2/255)$} \\ \cmidrule{2-3}
         & \textit{Clean} & \textit{Auto}  \\ \midrule
        Baseline & $54.8$ & $28.1$\\
        \ours & $55.1$ & $28.4$ \\
        $\Delta$ & $\boldsymbol{\texttt{+}0.3}$ & $\boldsymbol{\texttt{+}0.3}$ \\ \bottomrule
    \end{tabular}}
\end{subtable}
\vspace{-10pt}
\end{table}


\begin{wraptable}{r}{0.4\linewidth}
    \centering
    \newbluetext{
    \caption{\newbluetext{\textbf{Additional baselines.} Using $\ell_\infty (\epsilon=8/255)$ attack for both datasets.}}
    \vspace{-5pt}
    \renewcommand{\arraystretch}{1.3}
    \tiny
    \resizebox{\linewidth}{!}{
    \begin{tabular}{ccc} \toprule
        CIFAR-100 & \textit{Clean} & \textit{Auto} \\ \midrule
        \citet{wu2020AdvWeightPerturb} & $60.4$ & $28.9$ \\
        \citet{rade2021helperBasedAdvTrain} & $\boldsymbol{61.5}$ & $28.9$ \\
        \citet{cui2021BoundaryGuideAdv} & $60.6$ & $29.3$ \\
        \ours & $60.6$ & $\boldsymbol{30.5}$ \\ \midrule
        CelebA & \textit{Clean} & \textit{Auto} \\ \midrule
        TRADES & $86.1$ & $61.2$ \\
        \ours & $86.1$ & $62.7$ \\
        $\Delta$  & $0.0$ & $\boldsymbol{1.5}$ \\ \bottomrule
    \end{tabular}}
    }
    \vspace{-15pt}
\end{wraptable}

\noindent \textbf{Proxy distribution offsets increase in network parameters.} We find that gains from using synthetic samples are equivalent to ones obtained by scaling network size by an order of magnitude (Table~\ref{tab: sota}). For example, a ResNet-18 network with synthetic data achieves higher robust accuracy ($\ell_{\infty}$) than a \wrn{34-20} trained without it, while having $16\times$ fewer parameters than the latter. Similar trend holds for \wrn{34-10} networks, when compared with a much larger \wrn{70-16} network. This trend also holds for both $\ell_{\infty}$ and $\ell_2$ threat models.

\noindent \textbf{Simultaneous improvement in clean accuracy.} Due to the accuracy vs robustness trade-off in adversarial training, improvement in robust accuracy often comes at the cost of clean accuracy. However synthetic samples provide a boost in both clean and robust accuracy, simultaneously. Using adaptively sampled synthetic images, we observe improvement in clean accuracy by up to $7.5$\% and $7.8$\% across $\ell_{\infty}$ and $\ell_2$ threat models, respectively. 

\newtext{\noindent \textbf{Comparison across generative models.} We find that synthetic samples from diffusion-based generative models~\citep{ho2020denoisingdiffusion, nichol2021improvedDdpm} provides significantly higher improvement in performance than generative adversarial networks (GANs) (Table~\ref{tab: gan_vs_ddpm}). We further investigate this phenomenon in the next section.
}

We further analyze sample complexity of adversarial training using synthetic data in Appendix~\ref{app: sample_complexity}. We also provide visual examples of real and synthetic images for each dataset at the end of the paper. 

\begin{figure*}[!htb]
	\centering
	\vspace{-5pt}
	\begin{minipage}{0.32\linewidth}
		\centering
		\captionof{table}{\newtext{\textbf{Comparing generative models.} When choosing proxy distribution in \ours ($\ell_{\infty}$), DDPM model outperforms the leading generative adversarial network (GAN) on each dataset.}}
		\label{tab: gan_vs_ddpm}
		\renewcommand{\arraystretch}{1.1}
		\resizebox{0.9\linewidth}{!}{
		\begin{tabular}{cccc} \toprule
            Dataset & \begin{tabular}[c]{@{}c@{}}Proxy \\ Distribution\end{tabular} & \textit{Clean} & \textit{Auto} \\ \midrule
            \multirow{3}{*}{CIFAR-10} & None & $84.9$ & $53.1$ \\ 
             & StyleGAN & $86.0$ & $52.5$ \\
             & DDPM & $\boldsymbol{87.0}$ & $\boldsymbol{60.6}$ \\ \midrule
            \multirow{3}{*}{CelebA} & None & $82.4$ & $60.2$ \\ 
             & StyleFormer & $84.1$ & $63.1$ \\
             & DDPM & $\boldsymbol{84.8}$ & $\boldsymbol{63.4}$ \\ \midrule
            \multirow{3}{*}{ImageNet} & None & $54.8$ & $28.1$ \\
             & BigGAN & $54.5$ & $28.2$ \\
             & DDPM & $\boldsymbol{55.1}$ & $\boldsymbol{28.4}$ \\ \bottomrule
        \end{tabular}}
	\end{minipage} \hfill
	\begin{minipage}{0.32\linewidth}
		\centering
	    \includegraphics[width=0.98\linewidth]{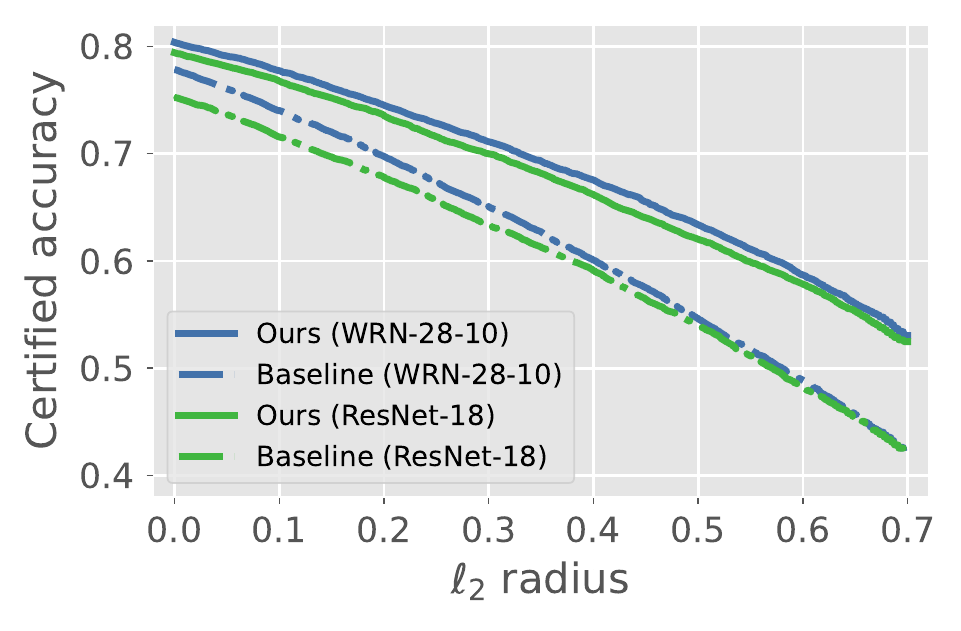}
        \caption{\textbf{Certified robustness.} Certified robust accuracy of baseline randomized smoothing technique, i.e., RST~\citep{carmon2019unlabeled} and our work at different perturbation budgets across two different network architectures.}
        \label{fig: vra}
	\end{minipage}
	\hfill
	\begin{minipage}{0.32\linewidth}
		\centering
		\captionof{table}{\textbf{Comparison of certified robustness.} Comparing clean accuracy (\textit{Clean}) and certified robust accuracy (\textit{Certified}) of our work with earlier approaches at $110/255$ $\ell_2$ perturbation budget (equivalent to $2/255$ $\ell_{\infty}$ perturbation budget) on the CIFAR-10 dataset.}
		\label{tab: vra}
		\renewcommand{\arraystretch}{1.1}
		\resizebox{\linewidth}{!}{
		\large
		\begin{tabular}{ccc}
		   \toprule
		   Method  &  \textit{Clean} & \textit{Certified} \\ \midrule
		   \citet{Wong2018ScalingProve} (single) & $68.3$ & $53.9$\\
		   \citet{Wong2018ScalingProve} (ensemble) & $64.1$ & $63.6$ \\
		   \newbluetext{CROWN-IBP (\cite{Zhang2020crownIbp})} & $71.5$ & $54.0$ \\
		   \citet{balunovic2019advPlusProve} & $78.4$ & $60.5$ \\
		   RST (\citet{carmon2019unlabeled}) & $77.9$ & $58.6$ \\
		   RST$_{500K}$ (\citet{carmon2019unlabeled}) &  $80.7$ & $63.8$ \\ 
		   \ours (ResNet-18) & $79.4$ & $\boldsymbol{64.6}$ \\
		   \ours (\wrn{28-10}) & $80.4$ & $\boldsymbol{66.2}$ \\
		   \bottomrule
		\end{tabular}}
	\end{minipage}
\end{figure*}




\subsubsection{Improving certified robustness with randomized smoothing} \label{sec: sota_certify}

We provide results on certified adversarial robustness in Table~\ref{tab: vra}.
We first compare the performance of our proposed approach with the baseline technique, i.e., RST~\citep{carmon2019unlabeled}. We achieve significantly higher certified robust accuracy than the baseline approach at all $\ell_2$ perturbations budgets for both ResNet-18 and \wrn{28-10} network architectures. Additionally, the robustness of our approach decays at a smaller rate than the baseline. At $\ell_{\infty}$ perturbation of $2/255$, equivalent to $\ell_2$ perturbation of $111/255$, our approach achieves $7.6$\% higher certified robust accuracy than RST. We also significantly outperform other certified robustness techniques which aren't based on randomized smoothing~\citep{Zhang2020crownIbp, Wong2018ScalingProve, balunovic2019advPlusProve}. Along with better certified robust accuracy, our approach also achieves better clean accuracy than most previous approaches, simultaneously. 

\noindent \textbf{Synthetic images vs real-world images.} 
Using only synthetic samples ($10$M), we also outperform RST where it uses an additional curated set of $500$K real-world images (RST$_{500K}$). While the latter achieves $63.8$\% certified robust accuracy, we improve it to $66.2$\%. \vikash{Discuss results when using only 500k synthetic images.}

\subsection{Delving deeper into use of proxy distributions in robust training}\label{sec:AURCP_calculation}
\newtext{Earlier in Section~\ref{sec: sota} we showed that diffusion-based generative models are an excellent choice for a proxy distribution for many datasets. Now we characterize the benefit of individual proxy distributions in our robust training framework. We consider seven different generative models, which are well-representative of the state-of-the-art on the CIFAR-10 dataset.  } 



\noindent \emph{Using synthetic-to-real transfer of robustness as ground truth.} From each generative model we sample one million synthetic images. We adversarially train a ResNet-18~\citep{he2016resnet} network on these images and rank generative models in order of robust accuracy achieved on the CIFAR-10 test set. We provide further experimental setup details in Appendix~\ref{app: aurpc}.

\noindent \emph{Calculating \aurpc.} At each perturbation budget ($\epsilon$), we adversarially train a ResNet-18 classifier to distinguish between synthetic and real images. We measure robust discrimination accuracy of trained classifier on a balanced held-out validation set. We calculate \aurpc by measuring the area under the robust discrimination accuracy and perturbation budget curve\footnote{We use \texttt{numpy.trapz(Racc-0.5, $\epsilon$)} to calculate \emph{\aurpc}.} (Figure~\ref{fig: aurpc}). We provide a detailed comparison of it with other metrics in Table~\ref{tab: gan_comaprison}.

\begin{figure*}[!htb]
	\centering
	\begin{minipage}{0.32\linewidth}
		\centering
        \includegraphics[width=0.9\linewidth]{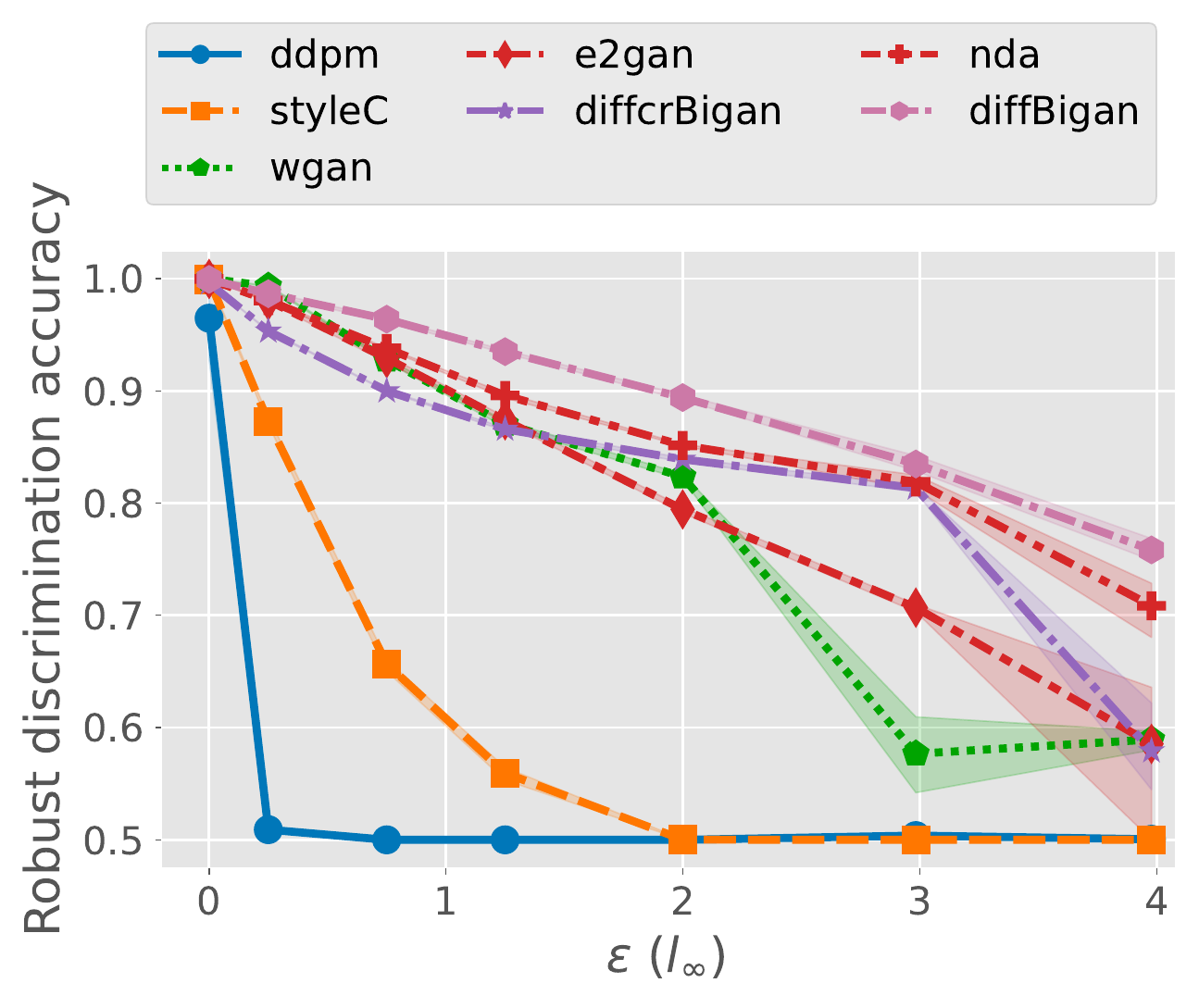}
        \caption{\bluetext{\textbf{Calculating \emph{\aurpc}.} For every generative model and perturbation budget ($\epsilon$), we first adversarially train a binary classifier on adversarial perturbed synthetic and CIFAR-10 images. Next we measure its robust discrimination accuracy on the validation set at the $\epsilon$ value used in training. \emph{\aurpc} is the area under the robust discrimination accuracy vs $\epsilon$ curve.}}
        \label{fig: aurpc}
	\end{minipage}
	\hfill
	\begin{minipage}{0.65\linewidth}
		\centering
		\captionof{table}{\textbf{Comparing different generative models.} \bluetext{We first adversarially train a ten class ResNet-18 model \textit{only} on $1$M synthetic images and measure its robust accuracy on the CIFAR-10 test set. We use this transferred robustness as a ground truth for benefit of each model. Next we match ranking predicted by each metric with the ground truth ranking. In contrast to all three three baseline, ranking from our proposed metric (\emph{\aurpc}) accurately matches the ground-truth.} \vikash{Use AA robust accuracy. Report FID/IS of our sampled images? Add other FID variations too. Color table header.}}
        \label{tab: gan_comaprison}
        \renewcommand{\arraystretch}{1.2}
        \resizebox{0.9\linewidth}{!}{
        \begin{tabular}{ccc||cccc}
            \toprule
            Rank & Model & \begin{tabular}[c]{@{}c@{}}Robust \\ accuracy\end{tabular} & \begin{tabular}[c]{@{}c@{}}FID ($\downarrow$)\end{tabular} & IS ($\uparrow$) & \oneNN ($\downarrow$) & \aurpctt ($\downarrow$) \\ \midrule
            -- & CIFAR-10 \footnote{\tiny Using 50,000 training images from the CIFAR-10 dataset.} & $48.7$ & --  & -- & -- & -- \\ \midrule
            $1$ & \ddpm (UC) \footnote{\tiny UC and C refers to unconditional and conditional generative models, respectively.} & $53.1$ & $3.17$ (2) & $9.46$ (2) &  $9.34$ (2) &  $0.06$ (\textbf{1}) \\
            $2$ & \styleC (C) & $45.0$ & $2.92$ (1) & $10.24$ (1) &  $9.42$ (3) & $0.32$ (\textbf{2})  \\
            $3$ & WGAN-ALP (UC) & $43.5$ & $12.96$ (7) & $8.34$ (7) & $10.10$ (7) & $1.09$ (\textbf{3})  \\ 
            $4$ & E2GAN (UC)  & $39.6$ & $11.26$ (5) & $8.51$ (5) & $8.96$ (1)  & $1.19$ (\textbf{4})  \\
            $5$ & DiffCrBigGAN (C) & $33.7$ & $4.30$ (3) & $9.17$ (3) &  $9.84$ (6)  & $1.30$ (\textbf{5})  \\ 
            $6$ & NDA  (C)& $33.4$ & $12.61$ (6) & $8.47$ (6) & $9.72$ (5) & $1.43$ (\textbf{6}) \\
            $7$ & DiffBigGAN (C) & $32.4$ & $4.61$ (4) & $9.16$ (4) &  $9.73$ (4) & $1.55$ (\textbf{7}) \\  \midrule
            \multicolumn{3}{c||}{Mean absolute ranking difference} & $1.7$ &  $1.7$ & $2.0$ & $\boldsymbol{0.0}$\\ \bottomrule
        \end{tabular}
        }
	\end{minipage}
\end{figure*}

\noindent \textbf{\textit{\aurpc} is highly effective at determining synthetic to real transfer of robustness (Table~\ref{tab: gan_comaprison}).}
\newtext{Each of the baseline metrics, including FID which is widely used, fail in accurately predicting the transfer of robustness from synthetic to real distributions. Even more, they also fail to identify DDPM~\citep{ho2020denoisingdiffusion} as the most successful proxy distribution.} Our proposed metric (\textit{\aurpc}), which measures the distinguishability of perturbed synthetic and real images, exactly predicts the ranking of different generative models in terms of the transfer of robustness. It also provides deeper insight into why synthetic samples from DDPM models are highly effective compared to other models. As shown in Figure~\ref{fig: aurpc}, discrimination between DDPM samples and real images becomes almost impossible even at very small perturbations budgets. A more detailed comparison is provided in Appendix~\ref{app: robust_disc}. 



\noindent \textbf{Adaptive sampling of synthetic data based on ARC (Appendix~\ref{appsubsec: adaptive}).} We compare the benefit of adaptive sampling with random sampling on CIFAR-10 dataset (for both $\ell_2$ and $\ell_{\infty}$ threat models). Our adaptive sampling achieves an average of $0.5\%$ and $0.2\%$ improvement in robust accuracy over random sampling in the $\ell_{\infty}$ and $\ell_{\infty}$ threat models, respectively.

\section{Related work}  \label{sec: relwork}
\noindent \textbf{Transfer of adversarial robustness.} This line of work focuses on the transfer of adversarial robustness, i.e., correct classification even under adversarial perturbations, when testing the model on different data distributions~\citep{shafahi2020AdvRobustTransfer, sehwag2019analyzing}. Note that this is different from just achieving correct classification on unmodified images across different distributions~\citep{taori2020NatShiftRobust, hendrycks2019CommonCorrupt}. Here we provide a theoretical analysis of the transfer of adversarial robustness between data distributions. 

\noindent \textbf{Using extra curated real-world data in robust training.} Prior works \citep{zhai2019advJustmoreData, carmon2019unlabeled, alayrac2019unsupadv, najafi2019advIncompleteData, deng2020AdvExtraOutDomain} have argued for using more training data in adversarial training and often resort to curating additional real-world samples. In contrast, we model the proxy distribution from the limited training images available and sample additional synthetic images from this distribution.

\noindent \textbf{Generative models for proxy distributions.}
State-of-the-art generative models are capable of modeling the distribution of current large-scale image datasets. In particular, generative adversarial networks (GANs) have excelled at this task~\citep{goodfellow2014GAN, karras2020styleganAda, gui2020ganreview}. Though GANs generate images with high fidelity, they often lack high diversity~\citep{ravuri2019CAS}. However, samples from recently proposed diffusion process based models achieve both high diversity and fidelity~\citep{ho2020denoisingdiffusion, nichol2021improvedDdpm}. Fréchet Inception Distance (FID)~\citep{Heusel2017FID} and Inception Score (IS)~\citep{Heusel2017FID} are two common metrics to evaluate the quality of samples from generative models.

\noindent \textbf{Using generative models to improve adversarial robustness.} Earlier works have used generative models to learn training data manifold and the use it to map input samples to data manifold~\citep{samangouei2018defenseGan, jalal2017robustmanifold, Xu2018featsqueeze}. However, most of these techniques are broken against adaptive attacks~\citep{athalye2018obfuscatedGrad, Tramer2020AdaptiveAttack}. We use generative models to sample additional training samples which further improve the adversarial robustness. 

\noindent \textbf{Comparison with \citet{rebuffi2021fixingaugmentation}.} A concurrent work by ~\citet{rebuffi2021fixingaugmentation} also uses randomly sampled synthetic images to improve adversarial robustness. In comparison, we provide a theoretical analysis of when synthetic data helps along with metrics to optimize the selection of both generative models and individual samples. We also demonstrate an improvement in certified robust accuracy using synthetic samples. Despite these differences, similar benefits of using proxy distributions in two independent and concurrent works further ascertain the importance of this research direction.

\section{Discussion and Broader Outlook} \label{sec: discussion}
Using synthetic data has been a compelling solution in many applications, such as healthcare~\citep{jordon2018pateGan} and autonomous driving~\citep{mayer2016synDriving} since it makes collecting a large amount of data feasible. In a similar spirit, we use synthetic data to make deep neural networks more robust against adversarial attacks. \bluetext{We investigate foundational questions such as determining the transfer of robustness from synthetic to real data and determining selection criteria for how to choose generative models or individual samples.}
\bluetext{Finally, we note that while it is crucial to improve robustness against the threat of adversarial examples, it also has an unwanted side-effect in domains where adversarial examples are used for good. Recent works use them to provide privacy on the web~\citep{shan2020fawkes} or to evade website fingerprinting methods~\citep{rahman2020mockingbird}. Improving defenses against adversarial examples will negatively hurt these applications.}

\section{Reproducibility}
We provide proof of each of our theorems in Appendix~\ref{app: theory}. Similarly, we provide extensive details on our experimental setup in Appendix~\ref{app: setup}. Our work is also featured on RobustBench~\citep{croce2020robustbench}, an external benchmark for standardized evaluation of adversarial robustness. Our robustly trained models are also available through the RobustBench API. For further reproducibility, we have also submitted our code with the supplementary material.

\bibliography{ref}

\begin{thebibliography}{82}
\providecommand{\natexlab}[1]{#1}
\providecommand{\url}[1]{\texttt{#1}}
\expandafter\ifx\csname urlstyle\endcsname\relax
  \providecommand{\doi}[1]{doi: #1}\else
  \providecommand{\doi}{doi: \begingroup \urlstyle{rm}\Url}\fi

\bibitem[Athalye et~al.(2018)Athalye, Carlini, and
  Wagner]{athalye2018obfuscatedGrad}
Anish Athalye, Nicholas Carlini, and David Wagner.
\newblock Obfuscated gradients give a false sense of security: Circumventing
  defenses to adversarial examples.
\newblock In \emph{International Conference on Machine Learning}, pp.\
  274--283. PMLR, 2018.

\bibitem[Balunovic \& Vechev(2019)Balunovic and
  Vechev]{balunovic2019advPlusProve}
Mislav Balunovic and Martin Vechev.
\newblock Adversarial training and provable defenses: Bridging the gap.
\newblock In \emph{International Conference on Learning Representations}, 2019.

\bibitem[Bhagoji et~al.(2019)Bhagoji, Cullina, and
  Mittal]{bhagoji2019lowerBounds}
Arjun~Nitin Bhagoji, Daniel Cullina, and Prateek Mittal.
\newblock Lower bounds on adversarial robustness from optimal transport.
\newblock In \emph{Advances in Neural Information Processing Systems}, pp.\
  7496--7508, 2019.

\bibitem[Biggio \& Roli(2018)Biggio and Roli]{biggio2018wildPatterns}
Battista Biggio and Fabio Roli.
\newblock Wild patterns: Ten years after the rise of adversarial machine
  learning.
\newblock volume~84, pp.\  317--331. Elsevier, 2018.

\bibitem[Biggio et~al.(2013)Biggio, Corona, Maiorca, Nelson, {\v{S}}rndi{\'c},
  Laskov, Giacinto, and Roli]{biggio2013evasion}
Battista Biggio, Igino Corona, Davide Maiorca, Blaine Nelson, Nedim
  {\v{S}}rndi{\'c}, Pavel Laskov, Giorgio Giacinto, and Fabio Roli.
\newblock Evasion attacks against machine learning at test time.
\newblock In \emph{Joint European conference on machine learning and knowledge
  discovery in databases}, pp.\  387--402. Springer, 2013.

\bibitem[Brock et~al.(2019)Brock, Donahue, and Simonyan]{brock2018bigGandeep}
Andrew Brock, Jeff Donahue, and Karen Simonyan.
\newblock Large scale gan training for high fidelity natural image synthesis.
\newblock In \emph{International Conference on Learning Representations}, 2019.

\bibitem[Carmon et~al.(2019)Carmon, Raghunathan, Schmidt, Duchi, and
  Liang]{carmon2019unlabeled}
Yair Carmon, Aditi Raghunathan, Ludwig Schmidt, John~C Duchi, and Percy~S
  Liang.
\newblock Unlabeled data improves adversarial robustness.
\newblock In \emph{Advances in Neural Information Processing Systems}, pp.\
  11190--11201, 2019.

\bibitem[Choi et~al.(2020)Choi, Uh, Yoo, and Ha]{choi2020starganv2}
Yunjey Choi, Youngjung Uh, Jaejun Yoo, and Jung-Woo Ha.
\newblock Stargan v2: Diverse image synthesis for multiple domains.
\newblock In \emph{Proceedings of the IEEE/CVF Conference on Computer Vision
  and Pattern Recognition}, pp.\  8188--8197, 2020.

\bibitem[Cohen et~al.(2019)Cohen, Rosenfeld, and Kolter]{cohen2019certified}
Jeremy Cohen, Elan Rosenfeld, and Zico Kolter.
\newblock Certified adversarial robustness via randomized smoothing.
\newblock In \emph{International Conference on Machine Learning}, pp.\
  1310--1320. PMLR, 2019.

\bibitem[Croce \& Hein(2020)Croce and Hein]{croce2020autoattack}
Francesco Croce and Matthias Hein.
\newblock Reliable evaluation of adversarial robustness with an ensemble of
  diverse parameter-free attacks.
\newblock In \emph{International Conference on Machine Learning}, pp.\
  2206--2216. PMLR, 2020.

\bibitem[Croce et~al.(2020)Croce, Andriushchenko, Sehwag, Flammarion, Chiang,
  Mittal, and Hein]{croce2020robustbench}
Francesco Croce, Maksym Andriushchenko, Vikash Sehwag, Nicolas Flammarion, Mung
  Chiang, Prateek Mittal, and Matthias Hein.
\newblock Robustbench: a standardized adversarial robustness benchmark.
\newblock \emph{arXiv preprint arXiv:2010.09670}, 2020.

\bibitem[Cui et~al.(2021)Cui, Liu, Wang, and Jia]{cui2021BoundaryGuideAdv}
Jiequan Cui, Shu Liu, Liwei Wang, and Jiaya Jia.
\newblock Learnable boundary guided adversarial training.
\newblock In \emph{Proceedings of the IEEE/CVF International Conference on
  Computer Vision}, pp.\  15721--15730, 2021.

\bibitem[Cullina et~al.(2018)Cullina, Bhagoji, and Mittal]{cullina2018pac}
Daniel Cullina, Arjun~Nitin Bhagoji, and Prateek Mittal.
\newblock Pac-learning in the presence of adversaries.
\newblock volume~31, pp.\  230--241, 2018.

\bibitem[Deng et~al.(2009)Deng, Dong, Socher, Li, Li, and
  Fei-Fei]{deng2009imagenet}
Jia Deng, Wei Dong, Richard Socher, Li-Jia Li, Kai Li, and Li~Fei-Fei.
\newblock Imagenet: A large-scale hierarchical image database.
\newblock In \emph{2009 IEEE conference on computer vision and pattern
  recognition}, pp.\  248--255. Ieee, 2009.

\bibitem[Deng et~al.(2021)Deng, Zhang, Ghorbani, and
  Zou]{deng2020AdvExtraOutDomain}
Zhun Deng, Linjun Zhang, Amirata Ghorbani, and James Zou.
\newblock Improving adversarial robustness via unlabeled out-of-domain data.
\newblock In \emph{International Conference on Artificial Intelligence and
  Statistics}, pp.\  2845--2853. PMLR, 2021.

\bibitem[Diochnos et~al.(2018)Diochnos, Mahloujifar, and
  Mahmoody]{diochnos2018adversarial}
Dimitrios~I Diochnos, Saeed Mahloujifar, and Mohammad Mahmoody.
\newblock Adversarial risk and robustness: general definitions and implications
  for the uniform distribution.
\newblock In \emph{Proceedings of the 32nd International Conference on Neural
  Information Processing Systems}, pp.\  10380--10389, 2018.

\bibitem[Dowson \& Landau(1982)Dowson and Landau]{dowson1982frechetDistance}
DC~Dowson and BV~Landau.
\newblock The fr{\'e}chet distance between multivariate normal distributions.
\newblock In \emph{Journal of multivariate analysis}, volume~12, pp.\
  450--455, 1982.

\bibitem[Gilmer et~al.(2018)Gilmer, Metz, Faghri, Schoenholz, Raghu,
  Wattenberg, and Goodfellow]{gilmer2018adversarial}
Justin Gilmer, Luke Metz, Fartash Faghri, Samuel~S Schoenholz, Maithra Raghu,
  Martin Wattenberg, and Ian Goodfellow.
\newblock Adversarial spheres.
\newblock \emph{arXiv preprint arXiv:1801.02774}, 2018.

\bibitem[Givens et~al.(1984)Givens, Shortt, et~al.]{givens1984classWasserstein}
Clark~R Givens, Rae~Michael Shortt, et~al.
\newblock A class of wasserstein metrics for probability distributions.
\newblock In \emph{The Michigan Mathematical Journal}, volume~31, pp.\
  231--240, 1984.

\bibitem[Goodfellow et~al.(2014)Goodfellow, Pouget-Abadie, Mirza, Xu,
  Warde-Farley, Ozair, Courville, and Bengio]{goodfellow2014GAN}
Ian Goodfellow, Jean Pouget-Abadie, Mehdi Mirza, Bing Xu, David Warde-Farley,
  Sherjil Ozair, Aaron Courville, and Yoshua Bengio.
\newblock Generative adversarial nets.
\newblock In \emph{Advances in Neural Information Processing Systems},
  volume~27, 2014.

\bibitem[Gowal et~al.(2020)Gowal, Qin, Uesato, Mann, and
  Kohli]{gowal2020uncoveringadv}
Sven Gowal, Chongli Qin, Jonathan Uesato, Timothy Mann, and Pushmeet Kohli.
\newblock Uncovering the limits of adversarial training against norm-bounded
  adversarial examples.
\newblock \emph{arXiv preprint arXiv:2010.03593}, 2020.

\bibitem[Gui et~al.(2020)Gui, Sun, Wen, Tao, and Ye]{gui2020ganreview}
Jie Gui, Zhenan Sun, Yonggang Wen, Dacheng Tao, and Jieping Ye.
\newblock A review on generative adversarial networks: Algorithms, theory, and
  applications.
\newblock \emph{arXiv preprint arXiv:2001.06937}, 2020.

\bibitem[He et~al.(2016)He, Zhang, Ren, and Sun]{he2016resnet}
Kaiming He, Xiangyu Zhang, Shaoqing Ren, and Jian Sun.
\newblock Deep residual learning for image recognition.
\newblock In \emph{Proceedings of the IEEE conference on computer vision and
  pattern recognition}, pp.\  770--778, 2016.

\bibitem[Hendrycks \& Dietterich(2019)Hendrycks and
  Dietterich]{hendrycks2019CommonCorrupt}
Dan Hendrycks and Thomas~G. Dietterich.
\newblock Benchmarking neural network robustness to common corruptions and
  perturbations.
\newblock In \emph{7th International Conference on Learning Representations},
  2019.

\bibitem[Heusel et~al.(2017)Heusel, Ramsauer, Unterthiner, Nessler, and
  Hochreiter]{Heusel2017FID}
Martin Heusel, Hubert Ramsauer, Thomas Unterthiner, Bernhard Nessler, and Sepp
  Hochreiter.
\newblock Gans trained by a two time-scale update rule converge to a local nash
  equilibrium.
\newblock In \emph{Advances in neural information processing systems},
  volume~30, 2017.

\bibitem[Ho et~al.(2020)Ho, Jain, and Abbeel]{ho2020denoisingdiffusion}
Jonathan Ho, Ajay Jain, and Pieter Abbeel.
\newblock Denoising diffusion probabilistic models.
\newblock In \emph{Advances in Neural Information Processing Systems},
  volume~33, pp.\  6840--6851, 2020.

\bibitem[Jalal et~al.(2017)Jalal, Ilyas, Daskalakis, and
  Dimakis]{jalal2017robustmanifold}
Ajil Jalal, Andrew Ilyas, Constantinos Daskalakis, and Alexandros~G Dimakis.
\newblock The robust manifold defense: Adversarial training using generative
  models.
\newblock \emph{arXiv preprint arXiv:1712.09196}, 2017.

\bibitem[Jordon et~al.(2018)Jordon, Yoon, and Van
  Der~Schaar]{jordon2018pateGan}
James Jordon, Jinsung Yoon, and Mihaela Van Der~Schaar.
\newblock Pate-gan: Generating synthetic data with differential privacy
  guarantees.
\newblock In \emph{International Conference on Learning Representations}, 2018.

\bibitem[Karras et~al.(2020)Karras, Aittala, Hellsten, Laine, Lehtinen, and
  Aila]{karras2020styleganAda}
Tero Karras, Miika Aittala, Janne Hellsten, Samuli Laine, Jaakko Lehtinen, and
  Timo Aila.
\newblock Training generative adversarial networks with limited data.
\newblock In \emph{Advances in Neural Information Processing Systems},
  volume~33, pp.\  12104--12114, 2020.

\bibitem[Kolesnikov et~al.(2020)Kolesnikov, Beyer, Zhai, Puigcerver, Yung,
  Gelly, and Houlsby]{Kolesnikov2020BiT}
Alexander Kolesnikov, Lucas Beyer, Xiaohua Zhai, Joan Puigcerver, Jessica Yung,
  Sylvain Gelly, and Neil Houlsby.
\newblock Big transfer (bit): General visual representation learning.
\newblock In \emph{ECCV}, 2020.

\bibitem[Krizhevsky et~al.(2014)Krizhevsky, Nair, and
  Hinton]{krizhevsky2014cifar}
Alex Krizhevsky, Vinod Nair, and Geoffrey Hinton.
\newblock The cifar-10 dataset.
\newblock \emph{online: http://www. cs. toronto. edu/kriz/cifar. html}, 2014.

\bibitem[LeCun \& Cortes(2010)LeCun and Cortes]{lecun2010mnist}
Yann LeCun and Corinna Cortes.
\newblock {MNIST} handwritten digit database.
\newblock \emph{AT\&T Labs [Online]. Available: http://yann. lecun.
  com/exdb/mnist}, 2010.

\bibitem[Liu et~al.(2015)Liu, Luo, Wang, and Tang]{liu2015CelebA}
Ziwei Liu, Ping Luo, Xiaogang Wang, and Xiaoou Tang.
\newblock Deep learning face attributes in the wild.
\newblock In \emph{Proceedings of the IEEE international conference on computer
  vision}, pp.\  3730--3738, 2015.

\bibitem[Madry et~al.(2018)Madry, Makelov, Schmidt, Tsipras, and
  Vladu]{madry2017towards}
Aleksander Madry, Aleksandar Makelov, Ludwig Schmidt, Dimitris Tsipras, and
  Adrian Vladu.
\newblock Towards deep learning models resistant to adversarial attacks.
\newblock \emph{International Conference on Learning Representations}, 2018.

\bibitem[Mahajan et~al.(2018)Mahajan, Girshick, Ramanathan, He, Paluri, Li,
  Bharambe, and Van Der~Maaten]{mahajan2018BillionWSL}
Dhruv Mahajan, Ross Girshick, Vignesh Ramanathan, Kaiming He, Manohar Paluri,
  Yixuan Li, Ashwin Bharambe, and Laurens Van Der~Maaten.
\newblock Exploring the limits of weakly supervised pretraining.
\newblock In \emph{Proceedings of the European Conference on Computer Vision
  (ECCV)}, pp.\  181--196, 2018.

\bibitem[Mahloujifar et~al.(2019{\natexlab{a}})Mahloujifar, Diochnos, and
  Mahmoody]{mahloujifar2019curse}
Saeed Mahloujifar, Dimitrios~I Diochnos, and Mohammad Mahmoody.
\newblock The curse of concentration in robust learning: Evasion and poisoning
  attacks from concentration of measure.
\newblock In \emph{Proceedings of the AAAI Conference on Artificial
  Intelligence}, volume~33, pp.\  4536--4543, 2019{\natexlab{a}}.

\bibitem[Mahloujifar et~al.(2019{\natexlab{b}})Mahloujifar, Zhang, Mahmoody,
  and Evans]{mahloujifar2019empirically}
Saeed Mahloujifar, Xiao Zhang, Mohammad Mahmoody, and David Evans.
\newblock Empirically measuring concentration: Fundamental limits on intrinsic
  robustness.
\newblock \emph{arXiv preprint arXiv:1905.12202}, 2019{\natexlab{b}}.

\bibitem[Mayer et~al.(2016)Mayer, Ilg, Hausser, Fischer, Cremers, Dosovitskiy,
  and Brox]{mayer2016synDriving}
Nikolaus Mayer, Eddy Ilg, Philip Hausser, Philipp Fischer, Daniel Cremers,
  Alexey Dosovitskiy, and Thomas Brox.
\newblock A large dataset to train convolutional networks for disparity,
  optical flow, and scene flow estimation.
\newblock In \emph{Proceedings of the IEEE conference on computer vision and
  pattern recognition}, pp.\  4040--4048, 2016.

\bibitem[Montasser et~al.(2019)Montasser, Hanneke, and Srebro]{montasser2019vc}
Omar Montasser, Steve Hanneke, and Nathan Srebro.
\newblock Vc classes are adversarially robustly learnable, but only improperly.
\newblock In \emph{Conference on Learning Theory}, pp.\  2512--2530. PMLR,
  2019.

\bibitem[Najafi et~al.(2019)Najafi, Maeda, Koyama, and
  Miyato]{najafi2019advIncompleteData}
Amir Najafi, Shin-ichi Maeda, Masanori Koyama, and Takeru Miyato.
\newblock Robustness to adversarial perturbations in learning from incomplete
  data.
\newblock In \emph{Proceedings of the 33rd International Conference on Neural
  Information Processing Systems}, pp.\  5541--5551, 2019.

\bibitem[Nakkiran et~al.(2021)Nakkiran, Neyshabur, and
  Sedghi]{nakkiran2021deepbootstrap}
Preetum Nakkiran, Behnam Neyshabur, and Hanie Sedghi.
\newblock The bootstrap framework: Generalization through the lens of online
  optimization.
\newblock In \emph{International Conference on Learning Representations}, 2021.

\bibitem[Nichol \& Dhariwal(2021)Nichol and Dhariwal]{nichol2021improvedDdpm}
Alexander~Quinn Nichol and Prafulla Dhariwal.
\newblock Improved denoising diffusion probabilistic models.
\newblock In \emph{Proceedings of the 38th International Conference on Machine
  Learning}, volume 139, pp.\  8162--8171. PMLR, 2021.

\bibitem[Pang et~al.(2021)Pang, Yang, Dong, Su, and Zhu]{pang2021bagoftricks}
Tianyu Pang, Xiao Yang, Yinpeng Dong, Hang Su, and Jun Zhu.
\newblock Bag of tricks for adversarial training.
\newblock In \emph{International Conference on Learning Representations}, 2021.

\bibitem[Park \& Kim(2021)Park and Kim]{park2021styleformer}
Jeeseung Park and Younggeun Kim.
\newblock Styleformer: Transformer based generative adversarial networks with
  style vector.
\newblock \emph{arXiv preprint arXiv:2106.07023}, 2021.

\bibitem[Rade \& Moosavi-Dezfooli(2021)Rade and
  Moosavi-Dezfooli]{rade2021helperBasedAdvTrain}
Rahul Rade and Seyed-Mohsen Moosavi-Dezfooli.
\newblock Helper-based adversarial training: Reducing excessive margin to
  achieve a better accuracy vs. robustness trade-off.
\newblock In \emph{ICML 2021 Workshop on Adversarial Machine Learning}, 2021.

\bibitem[Rahman et~al.(2020)Rahman, Imani, Mathews, and
  Wright]{rahman2020mockingbird}
Mohammad~Saidur Rahman, Mohsen Imani, Nate Mathews, and Matthew Wright.
\newblock Mockingbird: Defending against deep-learning-based website
  fingerprinting attacks with adversarial traces.
\newblock volume~16, pp.\  1594--1609. IEEE, 2020.

\bibitem[Ravuri \& Vinyals(2019)Ravuri and Vinyals]{ravuri2019CAS}
Suman Ravuri and Oriol Vinyals.
\newblock Classification accuracy score for conditional generative models.
\newblock In \emph{Advances in Neural Information Processing Systems},
  volume~32, 2019.

\bibitem[Rebuffi et~al.(2021)Rebuffi, Gowal, Calian, Stimberg, Wiles, and
  Mann]{rebuffi2021fixingaugmentation}
Sylvestre-Alvise Rebuffi, Sven Gowal, Dan~A Calian, Florian Stimberg, Olivia
  Wiles, and Timothy Mann.
\newblock Fixing data augmentation to improve adversarial robustness.
\newblock \emph{arXiv preprint arXiv:2103.01946}, 2021.

\bibitem[Recht et~al.(2019)Recht, Roelofs, Schmidt, and
  Shankar]{recht2019ImageNetv2}
Benjamin Recht, Rebecca Roelofs, Ludwig Schmidt, and Vaishaal Shankar.
\newblock Do imagenet classifiers generalize to imagenet?
\newblock In \emph{International Conference on Machine Learning}, pp.\
  5389--5400. PMLR, 2019.

\bibitem[Rice et~al.(2020)Rice, Wong, and Kolter]{rice2020overfitadv}
Leslie Rice, Eric Wong, and Zico Kolter.
\newblock Overfitting in adversarially robust deep learning.
\newblock In \emph{International Conference on Machine Learning}, pp.\
  8093--8104. PMLR, 2020.

\bibitem[Salman et~al.(2019)Salman, Li, Razenshteyn, Zhang, Zhang, Bubeck, and
  Yang]{salman2019AdvSmooth}
Hadi Salman, Jerry Li, Ilya~P. Razenshteyn, Pengchuan Zhang, Huan Zhang,
  S{\'{e}}bastien Bubeck, and Greg Yang.
\newblock Provably robust deep learning via adversarially trained smoothed
  classifiers.
\newblock In \emph{32 Annual Conference on Neural Information Processing
  Systems}, pp.\  11289--11300, 2019.

\bibitem[Samangouei et~al.(2018)Samangouei, Kabkab, and
  Chellappa]{samangouei2018defenseGan}
Pouya Samangouei, Maya Kabkab, and Rama Chellappa.
\newblock Defense-gan: Protecting classifiers against adversarial attacks using
  generative models.
\newblock In \emph{International Conference on Learning Representations}, 2018.

\bibitem[Schmidt et~al.(2018{\natexlab{a}})Schmidt, Santurkar, Tsipras, Talwar,
  and Madry]{schmidt2018AdvMoreData}
Ludwig Schmidt, Shibani Santurkar, Dimitris Tsipras, Kunal Talwar, and
  Aleksander Madry.
\newblock Adversarially robust generalization requires more data.
\newblock In \emph{Advances in Neural Information Processing Systems},
  volume~31, 2018{\natexlab{a}}.

\bibitem[Schmidt et~al.(2018{\natexlab{b}})Schmidt, Santurkar, Tsipras, Talwar,
  and Madry]{schmidt2018adversarially}
Ludwig Schmidt, Shibani Santurkar, Dimitris Tsipras, Kunal Talwar, and
  Aleksander Madry.
\newblock Adversarially robust generalization requires more data.
\newblock \emph{arXiv preprint arXiv:1804.11285}, 2018{\natexlab{b}}.

\bibitem[Sehwag et~al.(2019)Sehwag, Bhagoji, Song, Sitawarin, Cullina, Chiang,
  and Mittal]{sehwag2019analyzing}
Vikash Sehwag, Arjun~Nitin Bhagoji, Liwei Song, Chawin Sitawarin, Daniel
  Cullina, Mung Chiang, and Prateek Mittal.
\newblock Analyzing the robustness of open-world machine learning.
\newblock In \emph{Proceedings of the 12th ACM Workshop on Artificial
  Intelligence and Security}, pp.\  105--116, 2019.

\bibitem[Sehwag et~al.(2020)Sehwag, Wang, Mittal, and Jana]{sehwag2020Hydra}
Vikash Sehwag, Shiqi Wang, Prateek Mittal, and Suman Jana.
\newblock Hydra: Pruning adversarially robust neural networks.
\newblock In \emph{Advances in Neural Information Processing Systems},
  volume~33, pp.\  19655--19666, 2020.

\bibitem[Shafahi et~al.(2019)Shafahi, Najibi, Ghiasi, Xu, Dickerson, Studer,
  Davis, Taylor, and Goldstein]{shafahi2019freeAdv}
Ali Shafahi, Mahyar Najibi, Mohammad~Amin Ghiasi, Zheng Xu, John Dickerson,
  Christoph Studer, Larry~S Davis, Gavin Taylor, and Tom Goldstein.
\newblock Adversarial training for free!
\newblock In \emph{Advances in Neural Information Processing Systems}, 2019.

\bibitem[Shafahi et~al.(2020)Shafahi, Saadatpanah, Zhu, Ghiasi, Studer, Jacobs,
  and Goldstein]{shafahi2020AdvRobustTransfer}
Ali Shafahi, Parsa Saadatpanah, Chen Zhu, Amin Ghiasi, Christoph Studer,
  David~W. Jacobs, and Tom Goldstein.
\newblock Adversarially robust transfer learning.
\newblock In \emph{8th International Conference on Learning Representations},
  2020.

\bibitem[Shan et~al.(2020)Shan, Wenger, Zhang, Li, Zheng, and
  Zhao]{shan2020fawkes}
Shawn Shan, Emily Wenger, Jiayun Zhang, Huiying Li, Haitao Zheng, and Ben~Y
  Zhao.
\newblock Fawkes: Protecting privacy against unauthorized deep learning models.
\newblock In \emph{29th USENIX Security Symposium}, pp.\  1589--1604, 2020.

\bibitem[Sinha et~al.(2021)Sinha, Ayush, Song, Uzkent, Jin, and
  Ermon]{sinha2021nda}
Abhishek Sinha, Kumar Ayush, Jiaming Song, Burak Uzkent, Hongxia Jin, and
  Stefano Ermon.
\newblock Negative data augmentation.
\newblock In \emph{International Conference on Learning Representations}, 2021.

\bibitem[Song et~al.(2020)Song, Meng, and Ermon]{song2020ddim}
Jiaming Song, Chenlin Meng, and Stefano Ermon.
\newblock Denoising diffusion implicit models.
\newblock In \emph{International Conference on Learning Representations}, 2020.

\bibitem[Sun et~al.(2017)Sun, Shrivastava, Singh, and
  Gupta]{sun2017revisitData}
Chen Sun, Abhinav Shrivastava, Saurabh Singh, and Abhinav Gupta.
\newblock Revisiting unreasonable effectiveness of data in deep learning era.
\newblock In \emph{Proceedings of the IEEE international conference on computer
  vision}, pp.\  843--852, 2017.

\bibitem[Szegedy et~al.(2014)Szegedy, Zaremba, Sutskever, Bruna, Erhan,
  Goodfellow, and Fergus]{Szegedy2013IntrigueAdv}
Christian Szegedy, Wojciech Zaremba, Ilya Sutskever, Joan Bruna, Dumitru Erhan,
  Ian~J. Goodfellow, and Rob Fergus.
\newblock Intriguing properties of neural networks.
\newblock In \emph{2nd International Conference on Learning Representations},
  2014.

\bibitem[Taori et~al.(2020)Taori, Dave, Shankar, Carlini, Recht, and
  Schmidt]{taori2020NatShiftRobust}
Rohan Taori, Achal Dave, Vaishaal Shankar, Nicholas Carlini, Benjamin Recht,
  and Ludwig Schmidt.
\newblock Measuring robustness to natural distribution shifts in image
  classification.
\newblock In \emph{Advances in Neural Information Processing Systems},
  volume~33, pp.\  18583--18599, 2020.

\bibitem[Terj{\'e}k(2019)]{terjek2019wganALP}
D{\'a}vid Terj{\'e}k.
\newblock Adversarial lipschitz regularization.
\newblock In \emph{International Conference on Learning Representations}, 2019.

\bibitem[Tian et~al.(2020)Tian, Wang, Huang, Li, Dai, Yang, Wang, and
  Fink]{tian2020e2gan}
Yuan Tian, Qin Wang, Zhiwu Huang, Wen Li, Dengxin Dai, Minghao Yang, Jun Wang,
  and Olga Fink.
\newblock Off-policy reinforcement learning for efficient and effective gan
  architecture search.
\newblock In \emph{European Conference on Computer Vision}, pp.\  175--192.
  Springer, 2020.

\bibitem[Torralba \& Efros(2011)Torralba and Efros]{torralba2011databias}
Antonio Torralba and Alexei~A Efros.
\newblock Unbiased look at dataset bias.
\newblock In \emph{Conference on Computer Vision and Pattern Recognition
  (CVPR)}, pp.\  1521--1528. IEEE, 2011.

\bibitem[Tram{\`{e}}r et~al.(2020)Tram{\`{e}}r, Carlini, Brendel, and
  Madry]{Tramer2020AdaptiveAttack}
Florian Tram{\`{e}}r, Nicholas Carlini, Wieland Brendel, and Aleksander Madry.
\newblock On adaptive attacks to adversarial example defenses.
\newblock In \emph{33 Annual Conference on Neural Information Processing
  Systems}, 2020.

\bibitem[Uesato et~al.(2019)Uesato, Alayrac, Huang, Fawzi, Stanforth, and
  Kohli]{alayrac2019unsupadv}
Jonathan Uesato, Jean-Baptiste Alayrac, Po-Sen Huang, Alhussein Fawzi, Robert
  Stanforth, and Pushmeet Kohli.
\newblock Are labels required for improving adversarial robustness?
\newblock In \emph{Advances in Neural Information Processing Systems},
  volume~32, 2019.

\bibitem[Wang et~al.(2019)Wang, Xie, Li, Fonseca, and Tian]{wang2019lanet}
Linnan Wang, Saining Xie, Teng Li, Rodrigo Fonseca, and Yuandong Tian.
\newblock Sample-efficient neural architecture search by learning action space.
\newblock \emph{arXiv preprint arXiv:1906.06832}, 2019.

\bibitem[Wong \& Kolter(2018)Wong and Kolter]{wong2018provable}
Eric Wong and Zico Kolter.
\newblock Provable defenses against adversarial examples via the convex outer
  adversarial polytope.
\newblock In \emph{International Conference on Machine Learning}, pp.\
  5286--5295. PMLR, 2018.

\bibitem[Wong et~al.(2018)Wong, Schmidt, Metzen, and
  Kolter]{Wong2018ScalingProve}
Eric Wong, Frank~R. Schmidt, Jan~Hendrik Metzen, and J.~Zico Kolter.
\newblock Scaling provable adversarial defenses.
\newblock In \emph{31 Annual Conference on Neural Information Processing
  Systems}, pp.\  8410--8419, 2018.

\bibitem[Wu et~al.(2020{\natexlab{a}})Wu, Xia, and
  Wang]{wu2020AdvWeightPerturb}
Dongxian Wu, Shu-Tao Xia, and Yisen Wang.
\newblock Adversarial weight perturbation helps robust generalization.
\newblock \emph{Advances in Neural Information Processing Systems}, 33,
  2020{\natexlab{a}}.

\bibitem[Wu et~al.(2020{\natexlab{b}})Wu, Xia, and Wang]{wu2020adversarial}
Dongxian Wu, Shu-Tao Xia, and Yisen Wang.
\newblock Adversarial weight perturbation helps robust generalization.
\newblock volume~33, 2020{\natexlab{b}}.

\bibitem[Xu et~al.(2018)Xu, Evans, and Qi]{Xu2018featsqueeze}
Weilin Xu, David Evans, and Yanjun Qi.
\newblock Feature squeezing: Detecting adversarial examples in deep neural
  networks.
\newblock In \emph{25th Annual Network and Distributed System Security
  Symposium}, 2018.

\bibitem[Zagoruyko \& Komodakis(2016)Zagoruyko and Komodakis]{zagoruyko2016wrn}
Sergey Zagoruyko and Nikos Komodakis.
\newblock {Wide Residual Networks}.
\newblock In \emph{{British Machine Vision Conference}}, 2016.

\bibitem[Zhai et~al.(2019)Zhai, Cai, He, Dan, He, Hopcroft, and
  Wang]{zhai2019advJustmoreData}
Runtian Zhai, Tianle Cai, Di~He, Chen Dan, Kun He, John Hopcroft, and Liwei
  Wang.
\newblock Adversarially robust generalization just requires more unlabeled
  data.
\newblock \emph{arXiv preprint arXiv:1906.00555}, 2019.

\bibitem[Zhang et~al.(2019)Zhang, Yu, Jiao, Xing, El~Ghaoui, and
  Jordan]{zhang2019tradeoff}
Hongyang Zhang, Yaodong Yu, Jiantao Jiao, Eric Xing, Laurent El~Ghaoui, and
  Michael Jordan.
\newblock Theoretically principled trade-off between robustness and accuracy.
\newblock In \emph{International Conference on Machine Learning}, pp.\
  7472--7482, 2019.

\bibitem[Zhang et~al.(2020)Zhang, Chen, Xiao, Gowal, Stanforth, Li, Boning, and
  Hsieh]{Zhang2020crownIbp}
Huan Zhang, Hongge Chen, Chaowei Xiao, Sven Gowal, Robert Stanforth, Bo~Li,
  Duane~S. Boning, and Cho{-}Jui Hsieh.
\newblock Towards stable and efficient training of verifiably robust neural
  networks.
\newblock In \emph{8th International Conference on Learning Representations},
  2020.

\bibitem[Zhao et~al.(2020)Zhao, Liu, Lin, Zhu, and Han]{zhao2020diffAugGAN}
Shengyu Zhao, Zhijian Liu, Ji~Lin, Jun-Yan Zhu, and Song Han.
\newblock Differentiable augmentation for data-efficient gan training.
\newblock \emph{Advances in Neural Information Processing Systems}, 33, 2020.

\bibitem[{Zhao} et~al.(2020){Zhao}, {Zhou}, {Wang}, {Cai}, {Lun Lam}, and
  {Xu}]{zhao2020SplitNet}
Shuai {Zhao}, Liguang {Zhou}, Wenxiao {Wang}, Deng {Cai}, Tin {Lun Lam}, and
  Yangsheng {Xu}.
\newblock {Towards Better Accuracy-efficiency Trade-offs: Divide and
  Co-training}.
\newblock \emph{arXiv preprint arXiv:2011.14660}, 2020.

\bibitem[Zheng et~al.(2016)Zheng, Song, Leung, and
  Goodfellow]{zheng2016stabilityTrain}
Stephan Zheng, Yang Song, Thomas Leung, and Ian Goodfellow.
\newblock Improving the robustness of deep neural networks via stability
  training.
\newblock In \emph{Proceedings of the IEEE conference on computer vision and
  pattern recognition}, pp.\  4480--4488, 2016.

\end{thebibliography}
\bibliographystyle{iclr2022_conference}

\newpage
\appendix
\section{Theoretical results} \label{app: theory}
\newcommand{\red}[1]{{\color{red}{#1}}}
In this Section, we first provide the proofs of Theorem 1 and 2. Then we provide some additional results showing the tightness of our Theorem 1 and also the effect of combining real and proxy data on our Theorem. Finally, we provide some experimental result that validate our theory.
\subsection{Proof of Theorem 1}
\paragraph{Theorem 1:}
\textit{Let $D$ and $\tilde{D}$ be two labeled distributions supported on $\cX\times \cY$ with identical label distributions, i.e., $\forall y^* \in \cY, \Pr_{(x,y)\gets D}[y=y^*] =  \Pr_{(x,y)\gets \tilde{D}}[y=y^*]$. Then for any classifier $h:\cX\to \cY$
$$|\Rob_d(h,\tilde{D}) - \Rob_d(h,D)| \leq  \cwd_d(D,\tilde{D}).$$
}

\begin{proof}[Sketch of the proof]
We first provide an informal sketch of the proof and then formalize the steps after that. Consider $D'$ to be the distribution that is the outcome of the following process: First sample $(x,y)$ from $D$, then find the closest $x'$ such that $h(x')\neq y$ and output $(x',y)$\footnote{Here, we assume that the closest point $x'$ exists. Otherwise, We can set $x'$ so that the distance is arbitrarily close to the infimum and the proof follows.}. Also consider a similar distribution $\tilde{D}'$ corresponding to $\tilde{D}$. We now prove a Lemma that shows the conditional Wasserestein distance between $D$ and $D'$ is equal to $\Rob_d(h,D)$. 
\newbluetext{
\begin{lemma}\label{lem:RobEqCWD} We have
$\Rob_d(h,D)=\cwd(D,D')$ and $\Rob_d(h,\tilde{D})=\cwd(\tilde{D},\tilde{D}')$.
\end{lemma}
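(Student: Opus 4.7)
The plan is to prove both inequalities $\cwd(D,D') \leq \Rob_d(h,D)$ and $\cwd(D,D') \geq \Rob_d(h,D)$, which together give the desired equality (the second statement follows by the same argument applied to $\tilde{D}$). The key observation driving the whole argument is that by construction every point in the support of $D'$ with label $y$ is a misclassified point for that label, i.e.\ $h(x') \neq y$, so distances to such points are lower-bounded by distances to the misclassification set.

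For the upper bound, I would exhibit an explicit coupling that realizes $\Rob_d(h,D)$. Specifically, for each label $y$, consider the deterministic coupling $J^\star \in \cJ(D|y,\, D'|y)$ induced by the map $x \mapsto x'(x)$ where $x'(x) \in \arg\min_{h(x'')\neq y} d(x'',x)$ (taking an approximate minimizer if no exact one exists, with an arbitrarily small additive slack). The marginals of $J^\star$ are indeed $D|y$ and $D'|y$ by the very definition of $D'$. Plugging $J^\star$ into the Wasserstein infimum gives
$$\inf_{J \in \cJ(D|y,\,D'|y)} \E_{(x,x')\gets J}[d(x,x')] \;\leq\; \E_{x \sim D|y}\bigl[\inf_{h(x'')\neq y} d(x'',x)\bigr],$$
and taking expectation over $y$ yields $\cwd_d(D,D') \leq \Rob_d(h,D)$.

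For the lower bound, I would argue that \emph{any} coupling is forced to pay at least the robustness cost. Fix any $y$ in the support and any $J \in \cJ(D|y,\,D'|y)$. Since the second marginal is $D'|y$, every sample $(x,x') \sim J$ satisfies $x' \in \operatorname{supp}(D'|y)$ and hence $h(x')\neq y$. Thus pointwise
$$d(x,x') \;\geq\; \inf_{h(x'')\neq y} d(x'',x).$$
Taking expectation under $J$ and using that the first marginal of $J$ is $D|y$, the right-hand side becomes $\E_{x\sim D|y}[\inf_{h(x'')\neq y} d(x'',x)]$, which is independent of the choice of $J$. Taking the infimum over $J$ and then the expectation over $y \sim D$ gives $\cwd_d(D,D') \geq \Rob_d(h,D)$.

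The only real subtlety is that the inner infimum $\inf_{h(x'')\neq y} d(x'',x)$ may not be attained, so the map $x\mapsto x'(x)$ used in the upper bound might not exist as a measurable selector. I would handle this by fixing $\eta>0$, choosing $x'(x)$ to be an $\eta$-approximate minimizer (which can be selected measurably by standard selection theorems, or by discretizing and passing to a limit), obtaining $\cwd_d(D,D') \leq \Rob_d(h,D) + \eta$, and then letting $\eta\to 0$. Once Lemma~\ref{lem:RobEqCWD} is established, Theorem~1 follows by the triangle inequality for $\cwd$ (which inherits from the triangle inequality of Wasserstein distance on each class-conditional) applied to $\tilde{D} \to D \to D'$ and $\tilde{D} \to \tilde{D}' \to D'$, using that $h$ induces the same "nearest misclassified" targets on both sides.
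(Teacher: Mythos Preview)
Your proof is correct and follows essentially the same approach as the paper. Both arguments exhibit the deterministic ``nearest adversarial example'' coupling $x\mapsto x'(x)$ to obtain $\cwd_d(D,D')\leq \Rob_d(h,D)$, and both use the key observation that every point in the support of $D'\mid y$ is misclassified to obtain the reverse inequality; the paper phrases the latter via an adversary built from the optimal transport $J_y$, while you argue it directly for an arbitrary coupling, but the content is the same (and your handling of the non-attained infimum matches the paper's footnote).
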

\begin{proof}
Let $J_y$ be the optimal transport between $D|y$ and $D'|y$. Also let $J'_y$ be the joint distribution $(x,x')$ that is obtained by first sampling $x$ from $D\mid y$ and then setting $x'=\argmin_{h(x')\neq y} d(x,x')$.
The marginals of $J'_y$ are equal to $D\mid y$ and $D'\mid y$. Hence, $J'_y$ is a valid transport between $D\mid y$ and $D'\mid y$. Also, define a potentially randomized perturbation algorithm $A^{J_y}$ that given an input $x$ samples $(x,x') \gets J_y \mid x$, conditioned on $x$ and outputs $x'$. 
We have
\begin{align*}\Rob_d(h,D)&=\Ex_{(x,y)\gets D}[\inf_{h(x')\neq y} d(x,x')]\\
&=\Ex_{(\cdot, y)\gets D}[ \Ex_{x\gets D\mid y}[\inf_{h(x')\neq y} d(x,x')]]=\Ex_{(\cdot, y)\gets D}[ \Ex_{(x,x')\gets J'_y}[d(x,x')]].
\end{align*}
On the other hand, because for all $x$ we have $h(A^{J_y}(x))\neq y$ then $d(x,A^{J_y}(x))\geq\inf_{y\neq h(x')} d(x,x')$. Therefore we have
$$\Rob_d(h,D)=\Ex_{(x,y)\gets D}[\inf_{h(x')\neq y} d(x,x')] \leq \Ex_{(x,y)\gets D}[d(x,A^{J_y}(x))]=\Ex_{(\cdot,y)\gets D}\Ex_{(x,x')\gets J_y}[d(x,x')]$$
Therefore we have
\begin{equation}\label{eq:00001}\Rob_d(h,D)=\Ex_{(\cdot, y)\gets D}[ \Ex_{(x,x)'\gets J'_y}[d(x,x')]] \leq \Ex_{(\cdot,y)\gets D}\Ex_{(x,x')\gets J_y}[d(x,x')].\end{equation}
On the other hand, because $J_y$ is the optimal transport, we have
\begin{equation}\label{eq:00002}\cwd_d(D,D')=\Ex_{(\cdot,y)\gets D}\Ex_{(x,x')\gets J_y}[d(x,x')] \leq \Ex_{(\cdot,y)\gets D}\Ex_{(x,x')\gets J'_y}[d(x,x')].\end{equation}

Now combining Equations \ref{eq:00001} and \ref{eq:00002}, we conclude that
$$\cwd_d(D,D')=\Ex_{(\cdot,y)\gets D}\Ex_{(x,x')\gets J_y}[d(x,x')] = \Ex_{(\cdot,y)\gets D}\Ex_{(x,x')\gets J'_y}[d(x,x')]=\Rob_d(h,D).$$
Similarly, we can also prove that $\cwd_d(\tilde{D},\tilde{D}')=\Rob_d(h,\tilde{D})$.
\end{proof}
}

By the way the distributions $D'$ and $\tilde{D'}$ are defined we have
\begin{equation}\label{ineq:proof2}
\cwd(D,D') \leq \cwd(D,\tilde{D'}) \text{~~~and~~~} \cwd(\tilde{D},\tilde{D}') \leq \cwd(\tilde{D},D').
\end{equation}

Roughly, the reason behind this is that all examples $(x',y)$ sampled from $\tilde{D'}$ could be seen as an adversarial example for all elements of $D$ with the label $y$. And we know that $D'$ consists of optimal adversarial examples for $D$, therefore, the optimal transport between $D$ and $D'$ should be smaller than the optimal transport between $D$ and $\tilde{D'}$.
Also, by triangle inequality for Wasserestein distance we have, 
\begin{equation}\label{ineq:proof1}
    \cwd(\tilde{D},D') \leq \cwd(\tilde{D},D) + \cwd(D,D').
\end{equation}

Now using Lemma \ref{lem:RobEqCWD} and Equations \ref{ineq:proof1} and \ref{ineq:proof2} we have
\begin{equation}\label{ineq:proof3}
\Rob_d(h,\tilde{D})=\cwd(\tilde{D},\tilde{D}')  \leq \cwd(\tilde{D},D') \leq \cwd(\tilde{D},D) + \cwd(D,D')=\cwd(\tilde{D},D) + \Rob_d(h,D).
\end{equation}
With a similar argument, because of symmetry of $D$ and $\tilde{D}$, we can also  prove
\begin{equation}\label{ineq:proof4}\Rob_d(h,D) \leq \cwd(\tilde{D},D) + \Rob_d(h,\tilde{D}).
\end{equation}

Combining inequalities \ref{ineq:proof3} and \ref{ineq:proof4} we get
\begin{equation}\label{ineq:proof4}-\cwd(\tilde{D},D)\leq \Rob_d(h,D) -  \Rob_d(h,\tilde{D})\leq \cwd(\tilde{D},D)
\end{equation}
which finishes the proof.
\end{proof}
\begin{proof}[Full proof]
The following is a succinct formalization of the proof steps mentioned above,
let $J^*_y=\inf_{J\in \mathcal{J}(D\mid y,\tilde{D}\mid y)}$ be the optimal transport between the conditional distributions $D\mid y$ and $\tilde{D}\mid y$. We have
\begin{align*}
    \Rob_d(h,D)&=\Ex_{(.,y)\gets D}\left[\Ex_{x\gets D\mid y}[\inf_{h(x')\neq y } d(x',x)]\right]\\
    &=\Ex_{(.,y)\gets D}\left[\Ex_{(x,x'')\gets J^*_y}[\inf_{h(x')\neq y } d(x',x)]\right]\\
    &\leq \Ex_{(.,y)\gets D}\left[\Ex_{(x,x'')\gets J^*_y}\left[\inf_{h(x')\neq y } d(x'',x') + d(x'',x)\right]\right]\\
    &=\Ex_{(.,y)\gets D}\left[\Ex_{(x,x'')\gets J^*_y}\left[\inf_{h(x')\neq y } d(x'',x')\right]\right]+\Ex_{(.,y)\gets D}\left[\Ex_{(x,x'')\gets J^*_y}[ d(x'',x)]\right]\\
    &=\Ex_{(x'',y)\gets \tilde{D}}[\inf_{h(x')\neq y } d(x'',x')] + \cwd_d(D,\tilde{D})\\
    &=\Rob_d(h,\tilde{D}) + \cwd_d(D,\tilde{D}).
\end{align*}
With a similar argument we get $\Rob_d(h,\tilde{D}) \leq \Rob_d(h,D) + \cwd_d(D,\tilde{D})$ and the proof is complete.
\end{proof}

\subsection{Proof of Theorem 2}
\paragraph{Theorem 2.}
\textit{
For any two distributions $\tilde{D}$ and $D$ with equal class probabilities we have $\cwd(D,\tilde{D})\geq 4\aurpcsf(D,\tilde{D})$.  Moreover, if for all labels $y$ , $(D\mid y)$ and $(\tilde{D}\mid y)$ are two concentric uniform $l_p$ spheres, then we have
$\cwd(D,\tilde{D}) = 4\aurpcsf(D,\tilde{D})$.\footnote{The statement of theorem in the main body has a typographical error that is fixed here.}
}
\begin{proof}
We start by proving a lemma.
\begin{lemma}\label{lem:totalvar}
Let $J \in \cJ(D,\tilde{D})$ be an arbitrary transport between $D$ and $\tilde{D}$. Then, for any discriminator $\sigma$ we have,
$$\Racc_{\epsilon/2}(\sigma,D,\tilde{D}) \leq \frac{1}{2} +\frac{\Pr_{(x,x')\gets J}\big[d(x,x')> \epsilon\big]}{2}.$$
\end{lemma}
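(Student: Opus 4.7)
The plan is to reduce the lemma to a pointwise statement about the joint distribution $J$: for any pair $(x,x')$ with $d(x,x')\le \epsilon$, the discriminator $\sigma$ cannot simultaneously be robust on $x$ (as class $1$) and on $x'$ (as class $0$) at radius $\epsilon/2$. Once this is established, a short expectation computation finishes the proof.

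First I would unpack the definition of $\Racc_{\epsilon/2}$ by naming the two ``robust-correct'' events
\[
A(x) \;=\; \bigl[\forall z \in \Ball_{\epsilon/2}(x),\; \sigma(z)=1\bigr], \qquad
B(x') \;=\; \bigl[\forall z \in \Ball_{\epsilon/2}(x'),\; \sigma(z)=0\bigr],
\]
so that $\Racc_{\epsilon/2}(\sigma,D,\tilde{D}) = \tfrac{1}{2}\Pr_{x\gets D}[A(x)] + \tfrac{1}{2}\Pr_{x'\gets \tilde{D}}[B(x')]$. Because $J$ has marginals $D$ and $\tilde{D}$, these two probabilities equal $\Ex_{(x,x')\gets J}[\charac\{A(x)\}]$ and $\Ex_{(x,x')\gets J}[\charac\{B(x')\}]$ respectively, which lets me move everything under a single expectation over $J$.

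The core observation is then: if $d(x,x') \le \epsilon$, then by the triangle inequality the two balls $\Ball_{\epsilon/2}(x)$ and $\Ball_{\epsilon/2}(x')$ have a common point (for the $\ell_p$ metrics used in the paper, the midpoint $(x+x')/2$ works since each half-distance is at most $\epsilon/2$). Any such common point $z$ cannot simultaneously satisfy $\sigma(z)=1$ and $\sigma(z)=0$, so $A(x)$ and $B(x')$ are mutually exclusive, giving $\charac\{A(x)\}+\charac\{B(x')\}\le 1$. If instead $d(x,x')>\epsilon$, I just use the trivial bound $\charac\{A(x)\}+\charac\{B(x')\}\le 2$. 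Splitting the expectation over these two cases yields
\[
\Ex_{(x,x')\gets J}\!\bigl[\charac\{A(x)\}+\charac\{B(x')\}\bigr] \;\le\; 1\cdot \Pr_{(x,x')\gets J}[d(x,x')\le \epsilon] + 2\cdot \Pr_{(x,x')\gets J}[d(x,x')>\epsilon],
\]
which simplifies to $1 + \Pr_{(x,x')\gets J}[d(x,x')>\epsilon]$. Dividing by $2$ gives the claimed upper bound on $\Racc_{\epsilon/2}(\sigma,D,\tilde{D})$.

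The only nontrivial step is the ball-intersection argument, and it is really only an obstacle if one insists on a fully abstract metric space; for the $\ell_p$ balls considered throughout the paper the midpoint construction is immediate, so the argument is clean. Everything else is definitional unpacking plus one two-case expectation bound.
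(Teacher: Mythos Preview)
Your argument is correct and rests on the same key observation as the paper's proof: when $d(x,x')\le\epsilon$, the midpoint lies in both $\epsilon/2$-balls, so $\sigma$ cannot be robustly correct on both $x$ and $x'$. The paper packages this differently: it builds an explicit adversary $A^\epsilon_J$ that sends each $x$ (resp.\ $x'$) to the midpoint of its paired point under $J$ whenever the pair is $\epsilon$-close, then argues that the two push-forward distributions $A^\epsilon_J\#D$ and $A^\epsilon_J\#\tilde{D}$ have total variation distance at most $\Pr_{(x,x')\gets J}[d(x,x')>\epsilon]$, and finally appeals to the standard bound relating discrimination accuracy to total variation. Your route via mutual exclusivity of the indicator events $\charac\{A(x)\}$ and $\charac\{B(x')\}$ under the coupling is more direct and avoids the push-forward/TV machinery entirely; the paper's framing, on the other hand, makes the connection to an operational adversary explicit, which aligns with how $\Racc$ is motivated in the text. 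Both proofs share the same metric assumption (existence of a midpoint, i.e., essentially an $\ell_p$ or geodesic setting), which you flag honestly.
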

\begin{proof}
Consider an adversary $A^\epsilon_J$ that on input $x$ sampled from $D$, samples a pair $(x,x')\gets J \mid J[1] = x$ from the transport and return a center point $x''$ between $x$ and $x'$ such that $d(x,x'')=d(x',x'') = d(x,x')/2$ if $d(x,x')\leq \epsilon$. Otherwise, it returns $x$. Also, on a in input $x'$ sampled from $\tilde{D}$ it samples a pair $(x,x')\gets J \mid J[2] = x'$ from the transport and return a center point $x''$ between $x$ and $x'$ such that $d(x,x'')=d(x',x'') = d(x,x')/2$ if $d(x,x')\leq \epsilon$. Otherwise, it returns $x'$.

Denote $D_A\equiv A^\epsilon_J\#D$ to be the push-forward of $D$ under $A_J$. And also denote $\tilde{D}_A\equiv A^\epsilon_J\#\tilde{D}$. Now we have, $\delta(D_A,\tilde{D}_A) \leq \Pr_{(x,x')\gets J}[d(x,x')>\epsilon]$ where $\delta$ is the total variational distance. Therefore, no discriminator $\sigma$ can distinguish $D'$ from $\tilde{D}$ with accuracy more than $\frac{1 +\Pr_{(x,x')\gets J}[d(x,x')>\epsilon]}{2}$.
\end{proof}
Now, let $J^* = \sum_{y=1}^l \alpha_y J^*_y$ be a transport between $D$ and $\tilde{D}$ where $J^*_y$ is the optimal transport between $\tilde{D}\mid y$ and $D \mid y$ and $\alpha_y$ is the probability of class $y$ for $D$ and $\tilde{D}$. using Lemma \ref{lem:totalvar} we have
$$\Racc^*_\epsilon(D,\tilde{D}) -\frac{1}{2}\leq \frac{\Pr_{(x,x')\gets J^*}[d(x,x')>2\epsilon]}{2}.$$
Therefore we have

\begin{equation}\label{eq0000}
\int_0^\infty [\Racc^*_\epsilon(D,\tilde{D}) -\frac{1}{2}] d\epsilon \leq \int_0^\infty \frac{\Pr_{(x,x')\gets J^*}[d(x,x')>2\epsilon]}{2} d\epsilon =\int_0^\infty \frac{\Pr_{(x,x')\gets J^*}[d(x,x')>\epsilon]}{4} d\epsilon .
\end{equation}
Now, by integration by parts we have 
\begin{align*}\int_0^\infty \Pr_{(x,x')\gets J^*}[d(x,x')> \epsilon] d\epsilon  &= \epsilon \Pr_{(x,x')\gets J^*}[d(x,x')> \epsilon]\Big|_0^\infty + \int_0^\infty \Pr_{(x,x')\gets J^*}[d(x,x')=\epsilon]\epsilon   d\epsilon\\
&=0 + \Ex_{(x,x')\gets J^*}[d(x,x')]\\
&=\cwd(D,\tilde{D}).
\end{align*}
Therefore, we have
$$\aurpcsf(D,\tilde{D}) = \int_0^\infty [\Racc^*_\epsilon(D,\tilde{D}) -\frac{1}{2}] d\epsilon \leq \frac{\cwd(D,\tilde{D})}{4}.$$
Now to prove the second part of theorem about the case of concentric spheres we start by following lemma. 
\begin{lemma}\label{lem:totalvarasphere}
Let $D_y=D\mid y$ and $\tilde{D}_y=\tilde{D}\mid y$. Assume $\tilde{D}_y$ and $D_y$ are two concentric spheres according to the some $l_p$ norm. Also let $J^* \in \cJ(D_y,\tilde{D}_y)$ be the optimal transport between $D_y$ and $\tilde{D}_y$ with respect to the same norm.  Then we have,
$$\Racc^*_{\epsilon/2}(D,\tilde{D}) = \frac{1}{2} +\frac{\Pr_{(x,x')\gets J^*}\big[d(x,x')> \epsilon\big]}{2}.$$
\end{lemma}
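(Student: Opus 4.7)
The plan is to establish the matching lower bound on $\Racc^*_{\epsilon/2}(D_y,\tilde{D}_y)$, since Lemma~\ref{lem:totalvar} already delivers the upper bound once it is applied with the optimal transport $J^*$. The key preliminary is to compute $\Pr_{(x,x')\gets J^*}[d(x,x') > \epsilon]$ explicitly. Writing $r_1 < r_2$ for the two radii (without loss of generality; the case $r_1 = r_2$ gives $D_y = \tilde{D}_y$ and the claim is trivial), the reverse triangle inequality in $\ell_p$ yields $\|x-x'\|_p \geq \big|\|x'\|_p - \|x\|_p\big| = r_2 - r_1$ for every coupling. The radial dilation $x \mapsto (r_2/r_1)\,x$ pushes the uniform surface measure on the inner sphere onto the uniform surface measure on the outer sphere and achieves $\|x - (r_2/r_1)x\|_p = r_2 - r_1$ pointwise, so it is an optimal transport. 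Consequently every optimal $J^*$ is concentrated on pairs with $d(x,x') = r_2 - r_1$, and $\Pr_{J^*}[d(x,x')>\epsilon] = \mathbf{1}[r_2 - r_1 > \epsilon]$.

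The lower bound then splits into two regimes. In the regime $\epsilon < r_2 - r_1$, the right-hand side of the lemma equals $1$, so I would exhibit a robust discriminator that attains accuracy $1$: take $\sigma(x) = \mathbf{1}[\|x\|_p > (r_1 + r_2)/2]$. For any $x'$ within $\ell_p$ distance $\epsilon/2$ of an inner-sphere point, $\|x'\|_p \leq r_1 + \epsilon/2 < (r_1 + r_2)/2$, so $\sigma(x') = 0$; symmetrically, any perturbation of an outer-sphere point satisfies $\|x'\|_p \geq r_2 - \epsilon/2 > (r_1 + r_2)/2$, so $\sigma(x') = 1$. Thus $\sigma$ is perfectly robust and $\Racc_{\epsilon/2}(\sigma, D_y, \tilde{D}_y) = 1$. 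In the complementary regime $\epsilon \geq r_2 - r_1$, the right-hand side collapses to $1/2$, which is matched by the trivial constant discriminator $\sigma \equiv 0$. Combining the two regimes with the Lemma~\ref{lem:totalvar} upper bound yields the claimed equality.

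The main subtlety is justifying that the radial dilation pushes the uniform surface measure on one $\ell_p$ sphere onto the uniform surface measure on the scaled sphere; this holds because positive scaling is a diffeomorphism that commutes with the natural symmetries of the $\ell_p$ sphere and has constant Jacobian along every ray, but it deserves an explicit sentence in a careful write-up. Beyond this, once optimality of the radial transport is in hand the discriminator construction is a one-line thresholding argument, and the second part of Theorem~\ref{thm:arc} follows by integrating the resulting per-class identity $\Racc^*_{\epsilon/2}(D_y,\tilde{D}_y) - \tfrac{1}{2} = \tfrac{1}{2}\Pr_{J^*_y}[d(x,x') > \epsilon]$ in $\epsilon$ and summing over classes, in exactly the same manner as the integration-by-parts chain preceding Equation~\ref{eq0000}.
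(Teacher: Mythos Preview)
Your proposal is correct and follows essentially the same approach as the paper: identify the optimal transport as the radial dilation (giving constant pairwise distance $r_2-r_1$), then exhibit the midpoint-radius threshold discriminator to obtain the matching lower bound in the regime $\epsilon < r_2-r_1$. Your explicit invocation of Lemma~\ref{lem:totalvar} for the upper bound is a clean way to organize what the paper does more implicitly, and your flagged subtlety about scaling preserving the uniform surface measure on an $\ell_p$ sphere is exactly the one technical point the paper also leaves unargued.
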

\begin{proof}
let $D$ be the uniform sphere centered at point $c$ and with radius $r$ and let $\tilde{D}$ be the uniform sphere centered at point $c$ and with radios $\tilde{r}>r$. Observe that the optimal robust distinguisher for any $\epsilon<\frac{r+\tilde{r}}{2}$ is the one that assigns $0$ to $x$ if $d(c,x)<\frac{\tilde{r}-r}{2}$ and assigns $1$ otherwise and for any $\epsilon< \frac{\tilde{r}-r}{2}$ we have
\begin{equation}\label{eq:0001}
    \Racc^*_{\epsilon/2}(D,\tilde{D})=1.
\end{equation}
On the other hand, for any $\epsilon\geq \frac{\tilde{r}-r}{2}$ we have 
\begin{equation}\label{eq:0002}
    \Racc^*_{\epsilon/2}(D,\tilde{D})=0.5.
\end{equation}

Now consider optimal transport $J^*$ between $D$ and $\tilde{D}$. Observe that this transport is equivalent to the following distribution: $$J^*\equiv (D, (D-c)\cdot \tilde{r}/r)$$
Therefore,  we have 
$\Pr_{(x,x')\gets J^*}[d(x,x')=\tilde{r}-r]=1$ 
Which implies for any $\epsilon<\tilde{r}-r$ we have
\begin{equation}\label{eq:0003}
\Pr_{(x,x')\gets J^*}[d(x,x')> \epsilon]=1
\end{equation}
and for any $\epsilon\geq\tilde{r}-r$
\begin{equation}\label{eq:0004}
\Pr_{(x,x')\gets J^*}[d(x,x')> \epsilon]=0.
\end{equation}
Putting Equations \eqref{eq:0001}, \eqref{eq:0002}, \eqref{eq:0003}, and \eqref{eq:0004} together, we finish the proof of Lemma. 
\end{proof}
Having Lemma \ref{lem:totalvarasphere}, we can follow the same steps as before except that instead of Inequality \eqref{eq0000} we have an equality of the following form.

\begin{align*}\label{eq0000eq}
\int_0^\infty [\Racc^*_\epsilon(D,\tilde{D}) -\frac{1}{2}] d\epsilon &= \int_0^\infty \frac{\Pr_{(x,x')\gets J^*}[d(x,x')>2\epsilon]}{2} d\epsilon\\ &=\int_0^\infty \frac{\Pr_{(x,x')\gets J^*}[d(x,x')>\epsilon]}{4} d\epsilon\\
&=\frac{\cwd(D,\tilde{D})}{4}.
\end{align*}
\end{proof}
\subsection{Effect of Combining Proxy and Real Data on Theorem 1}
A natural question is what happens when we combine the original distribution with the proxy distribution. For example, one might have access to a generative model but they want to combine the samples from the generative model with some samples from the original distribution and train a robust classifier on the aggregated dataset. The following corollary answers this question. 
\begin{theorem}\label{thm:tight}
Let $D$ and $\tilde{D}$ be two labeled distributions supported on $X\times Y$ with identical label distributions and let $\bar{D} = p\cdot D + (1-p)\cdot \tilde{D}$ be the weighted mixture of $D$ and $\tilde{D}$. Then for any classifier $h:X\to Y$
$$|\Rob_d(h,\bar{D}) - \Rob_d(h,D)| \leq  (1-p)\cdot \cwd_d(D,\tilde{D}).$$
\end{theorem}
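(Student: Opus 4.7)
The plan is to reduce Theorem~\ref{thm:tight} directly to Theorem~\ref{th: robustbound} by exploiting the linearity of the average-robustness functional with respect to mixtures. First I would observe that for any classifier $h$ and any mixture $\bar{D} = p\cdot D + (1-p)\cdot \tilde{D}$, the definition of $\Rob_d$ as an expectation and the linearity of expectation under a mixture sampling procedure (sample from $D$ with probability $p$, else from $\tilde{D}$) give
\begin{equation*}
\Rob_d(h,\bar{D}) \;=\; p\cdot \Rob_d(h,D) \;+\; (1-p)\cdot \Rob_d(h,\tilde{D}).
\end{equation*}
This step is essentially definitional: the integrand $\inf_{h(x')\neq y} d(x',x)$ depends only on $(x,y)$, not on which component of the mixture produced it, so the mixture weight pulls out of the expectation.

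Next I would subtract $\Rob_d(h,D)$ from both sides to get
\begin{equation*}
\Rob_d(h,\bar{D}) - \Rob_d(h,D) \;=\; (1-p)\cdot \bigl(\Rob_d(h,\tilde{D}) - \Rob_d(h,D)\bigr).
\end{equation*}
Taking absolute values and applying Theorem~\ref{th: robustbound} (which requires identical label distributions, already assumed in the hypothesis) yields
\begin{equation*}
|\Rob_d(h,\bar{D}) - \Rob_d(h,D)| \;=\; (1-p)\cdot |\Rob_d(h,\tilde{D}) - \Rob_d(h,D)| \;\leq\; (1-p)\cdot \cwd_d(D,\tilde{D}),
\end{equation*}
which is exactly the claimed bound.

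The only subtle point — and the one I would write out carefully — is verifying that $\bar{D}$ has the same label marginal as $D$, so that Theorem~\ref{th: robustbound} applies to the pair $(\bar{D},D)$ if one prefers that route, or (as above) applies to $(D,\tilde{D})$ directly after the mixture decomposition. Since $D$ and $\tilde{D}$ have identical label distributions by hypothesis, any convex combination does too, so no complication arises. There is essentially no obstacle here: the mixture linearity turns what might look like a new statement into a one-line corollary of Theorem~\ref{th: robustbound}, with the coefficient $(1-p)$ appearing naturally as the weight of the ``shifted'' component.
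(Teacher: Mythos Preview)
Your proof is correct, but it takes a different route from the paper's. The paper does not exploit linearity of $\Rob_d$ in the sampling distribution; instead it proves the purely geometric inequality
\[
\cwd_d(D,\bar{D}) \;\leq\; (1-p)\cdot \cwd_d(D,\tilde{D})
\]
by exhibiting an explicit coupling between $D\mid y$ and $\bar{D}\mid y$: take the optimal transport $J_y$ between $D\mid y$ and $\tilde{D}\mid y$ and mix it with the identity coupling on $D\mid y$, i.e., $J'_y = (1-p)\cdot J_y + p\cdot (x,x)_{x\gets D\mid y}$. This transport has cost exactly $(1-p)$ times the cost of $J_y$, and then Theorem~\ref{th: robustbound} is applied to the pair $(D,\bar{D})$ rather than to $(D,\tilde{D})$.

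Your argument is shorter and more elementary: the linearity observation reduces the claim in one line to Theorem~\ref{th: robustbound} applied to $(D,\tilde{D})$. What the paper's route buys is the classifier-independent bound on $\cwd_d(D,\bar{D})$ itself, which is a fact of potential independent interest about how conditional Wasserstein distance behaves under mixtures; your route never needs or establishes that intermediate statement.
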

Note that the value of $p$ is usually very small as the number of data from proxy distribution is usually much higher than the original distribution. This shows that including (or not including) the data from original distribution should not have a large effect on the obtained bound on distribution-shift penalty. 

\begin{proof}[Proof of Theorem \ref{thm:tight}]

We just need to show that $\cwd_d(D,\bar{D})\leq (1-p)\cdot \cwd_d(D,\tilde{D})).$ Note that since the label distributions are equal, we have $$\bar{D}\mid y  \equiv p\cdot D\mid y + (1-p)\cdot\tilde{D}\mid y.$$ Now let $J_y$ be the optimal transport between $D\mid y$ and $\tilde{D}\mid y$. Now construct a joint distribution $J'_y \equiv (1-p) \cdot J + p\cdot (x,x)_{x\gets D|y}$. Notice that $J'_y$ is a joint distribution with marginals equal to $D$ and $\bar{D}$. Therefor $J'_y$ is a transport between $D$ and $\bar{D}$ and we can calculate its cost. We have

$$\Ex_{(x,x')\gets J'_y}[d(x,x')] = (1-p)\cdot \Ex_{(x,x')\gets J_y}[d(x,x')] + \Ex_{x\gets D|y}[d(x,x)]=(1-p)\cdot \Ex_{(x,x')\gets J_y}[d(x,x')].$$
Therefore, we have
$$\cwd(D,\bar{D})\leq \Ex_{(\cdot,y)\gets D}[ \Ex_{(x,x')\gets J'_y}[d(x,x')]] = (1-p)\cdot\Ex_{(\cdot,y)\gets D}[ \Ex_{(x,x')\gets J_y}[d(x,x')]] =  (1-p)\cdot \cwd(D,\tilde{D}).$$
\end{proof}

\subsection{Tightness of Theorem 1}
Here, we show a theorem that shows our bound on distribution-shift penalty is tight. The following theorem shows that one cannot obtain a bound on the distribution-shift penalty for a specific classifier that is \emph{always} better than our bound.

\begin{theorem}[Tightness of Theorem 1] \label{th: robustboundtight}
For any distribution $D$ supported on $X\times Y$, any classifier $h$, any homogeneous distance $d$ and any $\epsilon\leq \Rob_d(h,D)$, there is a labeled distribution $\tilde{D}$ such that 
$$\Rob_d(h,D) -\Rob_d(h,\tilde{D}) = \cwd(D,\tilde{D}) =\epsilon.$$
\end{theorem}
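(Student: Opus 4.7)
The plan is to construct $\tilde{D}$ by uniformly shifting each point of $D$ toward its closest adversarial example by a distance of exactly $\epsilon$, and then verify that this construction saturates the bound of Theorem~1. Concretely, for each $(x,y)$ in the support of $D$, let $x^{*}(x,y) \in \arg\min_{h(x')\neq y} d(x',x)$ (assume the infimum is attained for now; we handle the general case at the end). Let $r(x,y) = d(x^{*},x)$, so that $\Rob_d(h,D)=\Ex_{(x,y)\gets D}[r(x,y)]$. Because $d$ is homogeneous (i.e.\ induced by a norm), we may parametrize the segment from $x$ to $x^{*}$ and pick a point $\tilde{x}=\tilde{x}(x,y)$ on it with $d(x,\tilde{x})=\epsilon$ and $d(\tilde{x},x^{*})=r(x,y)-\epsilon$; this is well-defined since $\epsilon\leq \Rob_d(h,D)$ only ensures it holds in expectation, so the cleanest setup is to first reduce to the case where $\epsilon\leq r(x,y)$ pointwise (see below), and define $\tilde{D}$ as the pushforward of $D$ under $(x,y)\mapsto(\tilde{x}(x,y),y)$.

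The second step is to compute $\Rob_d(h,\tilde{D})$. For any $x'$ with $h(x')\neq y$, the triangle inequality gives $d(\tilde{x},x')\geq d(x,x')-d(x,\tilde{x})\geq r(x,y)-\epsilon$, and this is attained at $x'=x^{*}$, so $\inf_{h(x')\neq y} d(\tilde{x},x')=r(x,y)-\epsilon$. Taking expectation yields $\Rob_d(h,\tilde{D})=\Rob_d(h,D)-\epsilon$. For the conditional Wasserstein distance, the coupling $J_y$ obtained by pushing forward $D\mid y$ under $x\mapsto(x,\tilde{x}(x,y))$ has marginals $D\mid y$ and $\tilde{D}\mid y$ and expected transport cost exactly $\epsilon$, whence $\cwd_d(D,\tilde{D})\leq \epsilon$. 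Combined with Theorem~1, which gives $\cwd_d(D,\tilde{D})\geq |\Rob_d(h,D)-\Rob_d(h,\tilde{D})|=\epsilon$, we obtain the desired equality $\cwd_d(D,\tilde{D})=\epsilon$.

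The two technical obstacles I expect are the following. First, the infimum defining $x^{*}$ may fail to be attained; this I would handle by choosing, for each $\delta>0$, a measurable selector $x^{*}_\delta(x,y)$ with $d(x,x^{*}_\delta)\leq r(x,y)+\delta$ (a standard measurable-selection argument, which is clean since $h$ is Borel and $d$ comes from a norm), defining $\tilde{x}_\delta$ analogously, and letting $\delta\to 0$. The upper bound on $\cwd_d$ becomes $\epsilon+\delta$ and the change in robustness becomes $\epsilon-O(\delta)$, so both equalities hold in the limit. Second, the hypothesis $\epsilon\leq \Rob_d(h,D)$ is only an average inequality, so on some samples we could have $\epsilon>r(x,y)$ and the ``shift by $\epsilon$'' would overshoot the adversarial example. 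The standard fix is to randomize: choose a measurable map $\lambda:\X\times\Y\to[0,1]$ with $\Ex[\lambda(x,y)\cdot r(x,y)]=\epsilon$ and shift the point $(x,y)$ by $\lambda(x,y)\cdot r(x,y)$ along the segment to $x^{*}$; then pointwise $d(x,\tilde{x})=\lambda r(x,y)$ and $r(\tilde{x},y)=(1-\lambda)r(x,y)$, so taking expectations again yields both $\cwd_d(D,\tilde{D})\leq \epsilon$ and $\Rob_d(h,\tilde{D})=\Rob_d(h,D)-\epsilon$, and Theorem~1 closes the loop.
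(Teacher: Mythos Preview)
Your proposal is correct and lands on essentially the same construction as the paper: the paper defines $\tilde{D}_\alpha$ as the pushforward of $D$ under $(x,y)\mapsto(x+\alpha(x^*-x),y)$ with the constant $\alpha=\epsilon/\Rob_d(h,D)$, which is precisely your $\lambda$-approach with $\lambda\equiv\alpha$. Your initial ``shift every point by exactly $\epsilon$'' attempt is a detour that you correctly diagnose as failing when $r(x,y)<\epsilon$ on part of the mass; the paper skips straight to the proportional shift and so avoids this issue from the start. Two places where you are actually a bit more careful than the paper: (i) you close the lower bound $\cwd_d(D,\tilde{D})\geq\epsilon$ by invoking Theorem~1 itself, whereas the paper simply asserts $\cwd(D,\tilde{D}_\alpha)=\alpha\cdot\cwd(D,\tilde{D}_1)$ without justifying the $\geq$ direction (it closes the same way, but the paper leaves this implicit); and (ii) you handle the case where the nearest adversarial example is not attained via a measurable-selector plus $\delta\to 0$ argument, while the paper just assumes attainment throughout (as in the footnote to Lemma~1). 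In short, your fix in the last paragraph \emph{is} the paper's proof, and the extra care you take is welcome.
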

\begin{proof}
for $\alpha\in[0,1]$ let $\tilde{D}_{\alpha}$ be the distribution of the following process: First sample $(x,y)$ from $D$, then find the closest $x'$ such that $h(x')\neq y$ and output $(x+ \alpha(x'-x),y)$. By definition, the conditional Wasserestein distance between $D$ and $\tilde{D}_1$ is equal to $\Rob_d(h,D)$. We also have $\cwd(D,\tilde{D}_\alpha) = \alpha\cdot \cwd(D,\tilde{D}_1)$. 

Observe that for any classifier we have $\Rob_d(h,\tilde{D}_\alpha)\leq (1-\alpha) \Rob_d(h,D)$ because if $(x',y)$ is an adversarial example for $(x,y)$, then $x'$ is also an adversarial example for $(x+ \alpha(x'-x), y)$ with distance $(1-\alpha) d(x,x')$. On the other hand we have $\Rob_d(h,\tilde{D}_\alpha)\geq (1-\alpha)\Rob_d(h,D)$ because any adversarial example for $(x+\alpha(x'-x),y)$ with distance $r$ is also an adversarial example for $x$ with distance at most $r + \alpha d(x'-x)$ and since $x'$ is the optimal adversarial example for $x$ then $r$ must be at least $\alpha(x'-x)$. Therefore, we have $\Rob_d(h,\tilde{D}_\alpha) = (1-\alpha)\Rob_d(h,D).$ Putting everything together and setting $\alpha=\epsilon/\Rob_d(h,D)$we have

$$\Rob_d(h,D) - \Rob_d(h,\tilde{D}_\alpha) = \alpha \Rob_d(h,D) = \alpha \cwd(D,\tilde{D}_1) =\cwd(D,\tilde{D}_\alpha)=\epsilon.$$
\end{proof}
Note that Theorem \ref{th: robustboundtight} only shows the tightness of Theorem 1 for a specific classifier. But there might exist a learning algorithm $L$ that incurs a much better bound in the expectation. Namely, there might exist $L$ such that for any two distributions $D$ and $\tilde{D}$ we have  

$$\big|\Ex_{\substack{S\gets \tilde{D}^n\\ h\gets L(S)}}[\Rob_d(h,D)-\Rob_d(h,\tilde{D})]\big| \ll \cwd(D,\tilde{D}).$$
We leave finding such an algorithm as an open question.

\subsection{Experimental validation of main theorem} \label{app: theory_validation}
\begin{figure}
    \centering
    \includegraphics[width=0.5\linewidth]{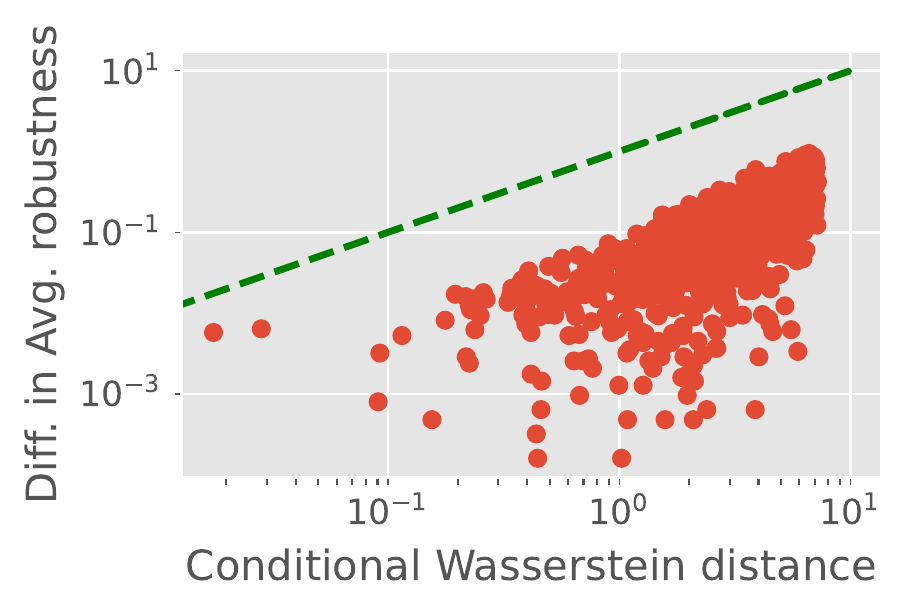}
    \caption{Validating the upper bound from Theorem 1. The green line is the upper bound calculated by Wasserstein-2. Note that the Wasserstein-1 (The bound of Theorem 1) is a tighter upper-bound but it does not have a closed form for normal distributions.}
    \label{fig: valid_theorem}
\end{figure}
In theorem 1 we prove that transfer of adversarial robustness between two distributions is upper bounded by their conditional Wasserstein distance. Now we provide empirical validation of this result. We consider a $10$-class classification problem where each class is modeled using a $128$-dimensional multivariate normal distribution. In contrast to real-world datasets, where the underlying distribution of the data is unknown, we can efficiently and exactly calculate Wasserstein distance on this synthetic dataset. We robustly train a four-layer convolutional network on it. To construct a proxy distribution, we perturb both mean and covariance parameters across each class. On each proxy distribution, we measure the average robustness of the pre-trained model and its conditional Wasserstein distance from the \orgdist distribution. We use Wasserstein-2 distance\footnote{The bound in our theorem is based on $W_1$. Since we know $W_1 < W_2$, using $W_2$ provides a loose upper bound.}, which has a closed-form expression for normal distributions~\citep{givens1984classWasserstein}.
As shown in Figure~\ref{fig: valid_theorem}, the difference in average robustness is upped bounded by the conditional Wasserstein distance.

\section{Additional details on experimental setup} \label{app: setup}
We describe experimental details common across most experiments in this section. We discuss design choices pertaining to individual experiments in their respective sections.

\noindent \textbf{Training setup.} We use network architectures from the ResNet family, namely ResNet-18~\citep{he2016resnet} and variants of WideResNet~\citep{zagoruyko2016wrn}. We train each network using stochastic gradient descent and $0.1$ learning rate with cosine learning rate decay, weight decay of $5\times10^{-4}$, batch size $128$, and $200$ epochs. We use $1\times10^{-3}$ weight decay on CIFAR-100 dataset, since higher weight decay further reduces the generalization gap on this dataset. 
We work with five datasets, namely CIFAR-10, CIFAR-100, ImageNet ($64\times64$ size images), CelebA ($64\times64$), and AFHQ ($224\times224$). For CelebA~\citep{liu2015CelebA} dataset, we consider a four-class classification problem based on the attribute smile and male, i.e., smile/male, smile/not-male, not-smile/male, not-smile/not-male (total $200$K images). Animal Faces-High-Quality (AFHQ) dataset comprises $15$K images for three classes: cat, dog, and wild animals. Thus we consider the task of three-class classification on this dataset. To reduce the computational cost on ImageNet dataset, we use the free adversarial training~\citep{shafahi2019freeAdv} procedure. \newtext{We use the ResNet-18 model for CIFAR-10, ImageNet, CelebA, and AFHQ dataset. However, for CIFAR-10 and CIFAR-100 dataset, we also use much larger \wrn{34-10} network.}

\noindent \textbf{Robust training parameters.} 
We consider both $l_{\infty}$ and $l_2$ threat models. For CIFAR-10 and CIFAR-100 dataset, we use the commonly used perturbation budget ($\epsilon$) of $8/255$ and $127/255$ for $l_{\infty}$ and $l_2$ threat model, respectively. For other three dataset too, we choose commonly used perturbation budgets. We perform adversarial training using a 10-step projected gradient descent attack (PGD-10) and benchmark test set robustness with the much stronger AutoAttack\footnote{We don't report numbers with PGD attacks as AutoAttack already captures them while also making it easier to compare with other works~\citep{croce2020robustbench}.}~\citep{croce2020autoattack}. We use $\beta=6$, $\sigma=0.25$, $100$ samples for selection, and $10,000$ samples for estimation in randomized smoothing, as described in \citet{cohen2019certified}. 


\noindent \textbf{Generative models.} \newtext{We consider seven different generative models for the CIFAR-10 and two different networks for the CelebA and ImageNet dataset. In Table~\ref{tab: all_gans}. We provide the number of synthetic images generated from each model and whether the generated images are labelled or unlabeled.
} If the model is unconditional, i.e., only generates unlabeled images, we label the images using an additional classifier.
We use state-of-the-art LaNet~\citep{wang2019lanet} network to label images for the CIFAR-10 dataset. LaNet network is trained only on the training images of the CIFAR-10 dataset and achieves 99.0\% accuracy on its test set. \newtext{For CelebA dataset, we train a ResNet50 model to label synthetic images.} \newtext{For adaptive sampling of synthetic images for CIFAR-10 dataset, we first sample of set of $15$M\footnote{\bluetext{We use the $6$M randomly sampled images made available by \citet{nakkiran2021deepbootstrap} along with $9$M more sampled by us using improved sampling techniques~\citep{nichol2021improvedDdpm} from the \ddpm model}.} synthetic images from the DDPM model. Next we  select $10$M synthetic images with the lowest synthetic score from this set.
}



\begin{table}[!htb]
\centering
\caption{\newtext{\textbf{Generative models for each dataset.} In this table we list the different generative models used for each dataset and the number of synthetic images sampled form each model. We also indicate whether the generative models generate labeled images, i.e., class-conditioned. If not class-conditioned, the model only generated unlabeled synthetic images.}}
\label{tab: all_gans}
\renewcommand{\arraystretch}{1.3}
\resizebox{0.9\linewidth}{!}{
\begin{tabular}{ccccc} \toprule
Dataset & \begin{tabular}[c]{@{}c@{}}Number of training \\ images in dataset\end{tabular} & Generative model & \begin{tabular}[c]{@{}c@{}}Number of \\ synthetic images\end{tabular} & Class-conditioned \\ \midrule
\multirow{7}{*}{CIFAR-10} & \multirow{7}{*}{$50$K} & DDPM~\cite{ho2020denoisingdiffusion} & $10$M & $\times$ \\
 &  & StyleGAN~\citep{karras2020styleganAda} & $10$M & $\checkmark$ \\
 &  & WGAN-ALP~\citep{terjek2019wganALP} & $1$M & $\times$ \\
 &  & E2GAN~\citep{tian2020e2gan} & $1$M & $\times$ \\
 &  & DiffCrBigGAN~\citep{zhao2020diffAugGAN} & $1$M & $\checkmark$ \\
 &  & NDA~\citep{sinha2021nda} & $1$M & $\checkmark$ \\
 &  & DiffBigGAN~\citep{zhao2020diffAugGAN} & $1$M & $\checkmark$ \\ \midrule
\multirow{2}{*}{CelebA} & \multirow{2}{*}{$120$K} & StyleFormer~\citep{park2021styleformer} & $1$M & $\times$ \\
 &  & DDPM~\citep{song2020ddim} & 1M & $\times$ \\ \midrule
\multirow{2}{*}{ImageNet} & \multirow{2}{*}{$1.2$M} & BigGAN~\citep{brock2018bigGandeep} & $1$M & $\checkmark$ \\
 &  & DDPM~\citep{nichol2021improvedDdpm} & $400$K & $\checkmark$ \\ \midrule
CIFAR-100 & $50$K & DDPM~\citep{nichol2021improvedDdpm} & 1M & $\checkmark$ \\ \midrule
AFHQ & $15$K & StyleGAN~\citep{karras2020styleganAda} & $300$K & $\checkmark$ \\ \bottomrule
\end{tabular}}
\end{table}


\noindent \textbf{Evaluation metrics for generative models.} We use following existing baseline metrics to evaluate the quality of synthetic samples. 1) \emph{Fréchet Inception Distance (FID)}: It measures the Fréchet distance~\citep{dowson1982frechetDistance} between features of synthetic and real images extracted from an Inception-V3 network. 2) \emph{Inception score (IS)}: Unlike FID, Inception score only uses synthetic data with the goal to account for both fidelity and diversity of synthetic data. 3) \emph{Nearest neighbour distance (1-NN):} It computes the average distance of a synthetic image to the nearest real image in the pixel space. 4) \bluetext{\emph{\aurpc}: This is the metric we propose and it measures distance between synthetic and real data based on the success of a robust discriminator.} We compare performance of all three baselines with our proposed metric. 

\noindent \textbf{Computational cost.} In addition to robust training, sampling from generative models is another key contributor to the computational cost in our approach. We sample images from the \ddpm model using $250$ steps for both CIFAR-10 and ImageNet datasets. Using an RTX 4x2080Ti GPU cluster, it takes $23.8$ hours to sample one million images on the CIFAR-10 dataset. With the same setup, it takes $26.1$ hours to sample $100$K images for the $64\times64$ ImageNet dataset. On both models, we use the publicly available checkpoints of pretrained generative models\footnote{\url{https://github.com/openai/improved-diffusion}}. Note that both training generative models and sampling from them is a one-time cost. Once the pretrained checkpoints and synthetic images are made publicly available, they can be directly used in downstream tasks. We will make our code and synthetic data publicly available.


\begin{table}[!htb]
    \caption{Using synthetic data also improves clean and robust accuracy on AFHQ~\citep{choi2020starganv2} dataset. Baseline refers to adversarial training based on Madry et al.~\citep{madry2017towards}. \textit{Clean}/\textit{Auto} refers to clean/robust accuracy measured with AutoAttack. We use a ResNet-18 network and randomly sampled synthetic images.}
    \label{tab: afhq_table}
    \centering
    \renewcommand{\arraystretch}{1.4}
    \Large
    \resizebox{0.6\linewidth}{!}{
    \begin{tabular}{cccccccccccc} \toprule
         & \multicolumn{5}{c}{$\ell_\infty$} &  & \multicolumn{5}{c}{$\ell_2$} \\ \cmidrule{2-6} \cmidrule{8-12}
        $\epsilon$ & \multicolumn{2}{c}{$4/255$} &  & \multicolumn{2}{c}{$8/255$} &  & \multicolumn{2}{c}{$3.0$} &  & \multicolumn{2}{c}{$5.0$} \\ \midrule
         & \textit{Clean} & \textit{Auto} &  & \textit{Clean} & \textit{Auto} &  & \textit{Clean} & \textit{Auto} &  & \textit{Clean} & \textit{Auto} \\
        Baseline & $98.8$ & $93.3$ &  & $98.4$ & $84.3$ &  & $98.9$ & $93.7$ &  & $98.7$ & $88.8$ \\
        \ours & $99.1$ & $93.5$	&  & $98.8$ & $86.5$ &  & $99.0$ & $94.0$ &  & $98.8$ & $89.2$ \\
        $\Delta$ & $\boldsymbol{\texttt{+}0.3}$ & $\boldsymbol{\texttt{+}0.2}$	&  & $\boldsymbol{\texttt{+}0.4}$ & $\boldsymbol{\texttt{+}2.2}$ &  & $\boldsymbol{\texttt{+}0.1}$ & $\boldsymbol{\texttt{+}0.3}$ &  & $\boldsymbol{\texttt{+}0.1}$	& $\boldsymbol{\texttt{+}0.4}$ \\ \bottomrule
    \end{tabular}}
\end{table}



\section{Delving deeper into robust discrimination and \emph{\aurpc}} \label{app: aurpc}
We propose a new metric (\emph{\aurpc}), to rank different generative models in order of robustness transfer from their samples to real data. Now we provide experimental details on how we measure \emph{\aurpc} and its comparison with other baselines.

\noindent \textbf{Setup.} A critical component in our approach is training a robust binary discriminator to distinguish between real and synthetic data. We use a ResNet-18 network for this task and train it for $100$ epochs with a $0.1$ learning rate, $128$ batch size, and 1e-4 weight decay. We use a randomly sampled set of one million images from each generative model. We keep $10,000$ synthetic images from this set for validation and train on the rest of them. We specifically choose $10,000$ images as it's equal to the number of test images available in the CIFAR-10 test set, thus balancing both classes in our test set. We train for only $391$ steps per epoch, i.e., equivalent to a single pass through $50,000$ training images in the CIFAR-10 dataset. Thus over $100$ epochs, we effectively end up taking five passes through the one million synthetic images. Given the randomness in multiple steps, we aggregate results over three different runs of our experiments. 

\subsection{Why non-robust discriminators are not effective in measuring proximity?} \label{app: nonrobust_disc}
We argue that most synthetic samples can be easily distinguished from real data, irrespective of proximity, in absence of adversarial perturbations ($\epsilon=0$). This is likely because deep neural networks have very high expressive power. So a natural question is whether shallow networks, which have much lower capacity, are more suitable for this task. We test this hypothesis by using a two-layer (16 and 32 filters) and a four-layer (16, 16, 32, and 32 filters) convolutional neural network. We refer to them as CNN-2 and CNN-4, respectively. We find that even these shallow networks achieve more than $90$\% accuracy in distinguishing synthetic images from real images (Table~\ref{tab: shallowCNN}). The CNN-4 network itself is achieving more than $99$\% accuracy for four out of seven generative models. A much larger ResNet-18 network achieves near-perfect classification accuracy for most models. Such high success of even shallow networks uniformly across generative models shows that non-robust discrimination is not an effective measure of proximity. 

\begin{table}[]
    \centering
    \caption{Success of different network architectures in classifying synthetic images from real-world image on CIFAR-10 dataset.}
    \label{tab: shallowCNN}
    \resizebox{0.95\linewidth}{!}{
    \begin{tabular}{cccccccc}
        \toprule
        Model & \ddpm & \styleC & WGAN-ALP & E2GAN & DiffCrBigGAN & NDA &  DiffBigGAN \\ \midrule
        CNN-2 & $59.1$ &  $94.3$ & $99.1$  & $97.3$  & $92.2$ &  $95.4$ & $97.7$ \\
        CNN-4 & $97.0$ & $99.1$ & $99.9$ & $99.7$ & $94.6$ &  $99.8$ & $99.7$  \\
        ResNet-18 & $98.3$ & $99.9$ & $99.9$ & $99.9$ & $99.6$ & $99.9$ & $99.9$ \\ \bottomrule
    \end{tabular}}
\end{table}

\subsection{On effectiveness of \emph{\aurpc} in measuring proximity}   \label{app: robust_disc}
In this section, we delve deeper into the comparison of our proposed metric (\emph{\aurpc}) with other baselines. We judge the success of each metric by how successfully it predicts the transfer of robustness from synthetic to real samples on the CIFAR-10 data. (referred to as transferred robust accuracy). We report our results in Table~\ref{tab: arc_detailed}.

\noindent \textbf{Is FID in CIFAR-10 feature space effective?} When calculating FID, the recommended approach is to measure it in the feature space of a network trained on the ImageNet dataset~\citep{Heusel2017FID}. One may ask, whether the reason behind the failure is using ImageNet feature space for CIFAR-10 images. This might be true as it is well known that most image datasets have their specific bias~\citep{torralba2011databias}. To answer this question, we measure FID in the feature of a state-of-the-art LaNet~\citep{wang2019lanet} networks trained on the CIFAR-10 dataset. We refer to this metric as FID (CIFAR-10). Similar to FID in ImageNet feature space, FID in CIFAR-10 feature space also fails to predict robustness transfer. It follows a similar trend as the former, where it ranks models like DiffBigGAN and StyleGAN much higher than other models. 

\textbf{Generalizing to natural training.} If the objective is to maximize clean accuracy instead of robustness, one may ask whether \emph{\aurpc} will remain effective in selecting the generative model. We answer this question by measuring the success of \aurpc in predicting how much clean accuracy is achieved on the CIFAR-10 test set when we train a ResNet-18 network only on one million synthetic images without any adversarial perturbation. Note that \emph{\aurpc} is dependent only on samples, thus can be directly used for this task. We find that \aurpc correctly ranks \ddpm over \styleC model for transferred clean accuracy. Similarly, it also correctly rank WGAN-ALP and E2GAN model higher than other BigGAN models where the former achieves higher clean accuracy on the CIFAR-10 test set. However, there are small variations in transferred clean accuracy with some models, such as WGAN-ALP and E2GAN, which \emph{\aurpc} doesn't capture in its ranking. 

\begin{table}[!htb]
    \centering
    \caption{Testing how much each proxy distribution helps on CIFAR-10 dataset and whether FID/IS captures it. We train on 1M synthetic images and measure transferred robustness (which determines the rank) to cifar10 test set. UC and C refers to unconditional and conditional generative models, respectively.}
    \label{tab: arc_detailed}
    \resizebox{0.95\linewidth}{!}{
    \begin{tabular}{cccc||ccccc}
        \toprule
        Rank & Model & \begin{tabular}[c]{@{}c@{}}Transferred clean accuracy\\ (Natural training)  ($\uparrow$)\end{tabular} & \begin{tabular}[c]{@{}c@{}}Transferred robust accuracy\\ (Adversarial training)  ($\uparrow$)\end{tabular} & \begin{tabular}[c]{@{}c@{}}FID \\ (ImageNet)  ($\downarrow$)\end{tabular} & \begin{tabular}[c]{@{}c@{}}FID \\ (CIFAR-10)  ($\downarrow$)\end{tabular} & IS  ($\uparrow$) & \oneNN  ($\downarrow$) & \aurpc  ($\downarrow$) \\ \midrule
        $1$ & \ddpm (UC) & $94.8$ & $53.1$ & $3.17$ & $0.90$ &  $9.46$ &  $9.34$ &  $0.06$ \\
        $2$ & \styleC (C) & $91.2$ & $45.0$ & $2.92$ & $0.55$ & $10.24$ &  $9.42$ & $0.32$ \\
        $3$ & WGAN-ALP (UC) & $92.3$ & $43.5$ & $12.96$ & $5.84$ & $8.34$ & $10.10$  & $1.09$ \\ 
        $4$ & E2GAN (UC) & $92.7$ & $39.6$ & $11.26$ & $4.52$ & $8.51$ & $8.96$  & $1.19$   \\
        $5$ & DiffCrBigGAN (C) & $87.4$ & $33.7$ & $4.30$ & $0.96$ &  $9.17$&  $9.84$  & $1.30$  \\ 
        $6$ & NDA (C) &  $84.9$ & $33.4$ & $12.61$ & $0.78$ & $8.47$ & $9.72$  & $1.43$ \\
        $7$ & DiffBigGAN (C)  & $86.7$ & $32.4$ & $4.61$ & $0.64$ & $9.16$ &  $9.73$ & $1.55$ \\ \bottomrule
    \end{tabular}}
\end{table}

\subsection{Adaptive sampling of synthetic data.} \label{appsubsec: adaptive}

\textbf{Finding synthetic samples which leads to highest synthetic-to-real robustness transfer.} \label{sec: rejection_sampling}
We use proximity of a synthetic sample to the \orgdist distribution, which we measure using synthetic score from our trained robust discriminators (using $0.25/255$ size $l_\infty$ perturbations), as a metric to judge transfer of performance from the \approxdist distribution to \orgdist distribution. 
\begin{wrapfigure}{r}{0.4\textwidth}
    \vspace{-15pt}
    \centering
    \includegraphics[width=0.8\linewidth]{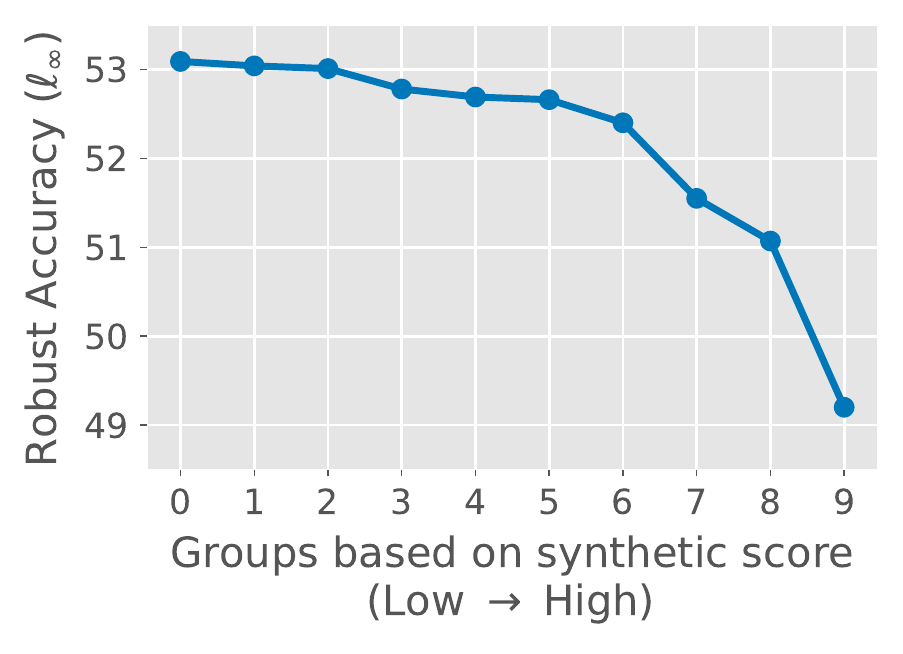}
    \vspace{-10pt}
    \caption{\textbf{Effectiveness of synthetic score.} \bluetext{We sort $6$M \ddpm images using their synthetic score and divide them into ten equal-size groups. We find that groups with lower score achieve higher transferred robust accuracy ($\ell_{\infty}$) on the CIFAR-10 test set.
    \vikash{Use auto-attack numbers. Plot a lower bar, one without any synthetic data. Clearly highlight what legend lp norm vs what its being trained against.}}}
    \label{fig: transfer_splits}
    \vspace{-20pt}
\end{wrapfigure}
We sort the \synthetic score for each of $6$M synthetic images from the \ddpm model and combine them into ten equal-size groups. Next, we adversarially train a ResNet-18 network on images from each group, along with CIFAR-10 training images, and measure robustness achieved on the CIFAR-10 test set  (Figure~\ref{fig: transfer_splits}). Our results validate the effectiveness of \synthetic scores, where groups with the lowest \synthetic score achieve highest robust test accuracy. The difference in the robust test accuracy is up to $4$\% between the group with the lowest score and the one with the highest score. 

\textbf{Improvement in performance with adaptive sampling.}
\newtext{We select of set of $10$M synthetic images (from a set of $15$M) with the lowest synthetic score. We compare the success of these adaptively selected images with a set of $10$M randomly selected images. We present our results in Table~\ref{tab: sota_app}. It shows that adaptively sampling further improves the performance of our framework ($\ours$) across threat models ($\ell_{\infty}$ and $\ell_2$) and network architectures.}

\begin{table}[!tb]
    \centering
    \caption{\textbf{Further improvement in adversarial robustness using adaptive sampling.} Experimental results with adversarial training on the CIFAR-10 dataset for both $\ell_{\infty}$ and $\ell_2$ threat model. Using additional synthetic data brings a large gain in adversarial robustness across network architectures and threat models. \bluetext{We also show further improvements brought in by adaptive sampling, in comparison to random sampling, of synthetic images.} \textit{Clean}/\textit{Auto} refers to clean/robust accuracy measured with AutoAttack.}
    \label{tab: sota_app}
    \vspace{-5pt}

    \begin{subtable}[h]{0.49\textwidth}
        \centering
        \caption{$\ell_{\infty}$ threat model.}
        \renewcommand{\arraystretch}{1.2}
        \resizebox{0.99\linewidth}{!}{\begin{tabular}{ccccc}
        \toprule
        Method & Architecture & Parameters (M) & \textit{Clean} & \textit{Auto} \\ \midrule
        \citet{zhang2019tradeoff} & ResNet-18 & $11.2$ & $82.0$ & $48.7$ \\ 
        \ours (random) & ResNet-18 & $11.2$ & $84.4$ & $55.6$ \\
        \ours (adaptive) & ResNet-18 & $11.2$ & $\boldsymbol{84.6}$ & $\boldsymbol{55.7}$ \\ \midrule
        \citet{rice2020overfitadv} & \wrn{34-20} & $184.5$ & $85.3$ & $53.4$ \\
        \citet{gowal2020uncoveringadv} & \wrn{70-16} & $266.8$ & $85.3$ & $57.2$
        \\ \midrule
        \citet{zhang2019tradeoff} & \wrn{34-10} & $46.2$ & $84.9$ & $53.1$ \\
        \ours (random) & \wrn{34-10} & $46.2$ & $86.7$ & $60.3$ \\
        \ours (adaptive) & \wrn{34-10} & $46.2$ & $\boldsymbol{87.0}$ & $\boldsymbol{60.6}$ \\
        \bottomrule
        \end{tabular}}
    \end{subtable}\hfill
        \begin{subtable}[h]{0.49\textwidth}
        \centering
        \caption{$\ell_2$ threat model.}
        \renewcommand{\arraystretch}{1.2}
        \resizebox{0.99\linewidth}{!}{\begin{tabular}{cccccc}
        \toprule
        Method & Architecture & Parameters (M) & \textit{Clean} & \textit{Auto} \\ \midrule
        \citet{rice2020overfitadv} & ResNet-18 & $11.2$ & $88.7$ & $67.7$ \\
        \ours (random) & ResNet-18 & $11.2$ & $\boldsymbol{89.9}$ & $74.0$ 
        \\ 
        \ours (adaptive) & ResNet-18 & $11.2$ & $89.8$ & $\boldsymbol{74.4}$
        \\ \midrule
        \citet{madry2017towards} & ResNet-50 & $23.5$ & $90.8$ & $69.2$ \\
        \citet{gowal2020uncoveringadv} & \wrn{70-16} & $266.8$ & $90.9$  & $74.5$
        \\ \midrule
        \citet{wu2020adversarial} & \wrn{34-10} & $46.2$ & $88.5$ & $73.7$ \\
        \ours (random) & \wrn{34-10} & $46.2$ & $\boldsymbol{90.8}$ & $77.1$ \\
        \ours (adaptive) & \wrn{34-10} & $46.2$ & $\boldsymbol{90.8}$ & $\boldsymbol{77.8}$
        \\ \bottomrule
        \end{tabular}}
     \end{subtable}
\end{table}

\section{Bringing it all together: Using synthetic samples in robust training} \label{app: sota}
In this section, we first analyze the synthetic images and later provide additional analysis based on them in robust training. We will primarily use synthetic images sampled from the \ddpm and \styleC model. 

\textbf{Analyzing synthetic images.} Since \ddpm is an unconditional model, it only generates unlabelled synthetic images. We label these images using a LaNet network on the CIFAR-10 dataset. We visualize the frequency of different classes in Figure~\ref{fig: class_hist}. It shows that the synthetic images are almost uniformly distributed across classes. On the ImageNet dataset, we use the classifier conditioned sampling to generate labeled images from the improved DDPM model~\citep{nichol2021improvedDdpm}. We also provide samples images from different generative models in Figure~\ref{fig: sample_images} and \ref{fig: sample_images_im}.

\textbf{Hyperparameter search for $\gamma$.} Before using \ddpm images in robust training, we perform a hyperparameter search for $\gamma$. We consider ten different values from 0 to 1 and train a ResNet-18 network using \ours at each of them. We measure both the clean accuracy and robust accuracy of each network (Figure~\ref{fig: gamma_search}). We find that $\gamma=0.4$ achieves the highest clean and robust accuracy. Thus we used $\gamma=0.4$ in our experiments. \vikash{Take a pass at content to reflect the new structure of the paper.}

\begin{figure*}[!htb]
    \centering
    \begin{minipage}{0.42\linewidth}
		\centering
	    \includegraphics[width=\linewidth]{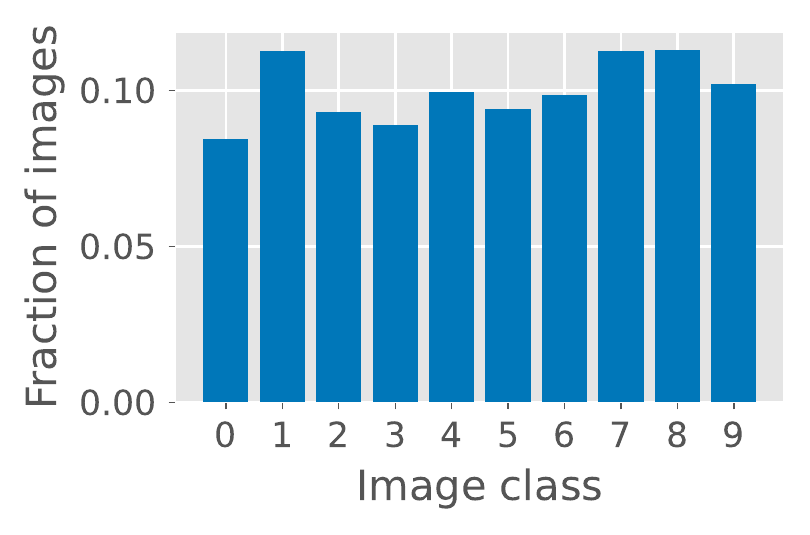}
        \caption{Histogram of class labels for synthetic images from \ddpm model.}
        \label{fig: class_hist}
	\end{minipage}
	\hspace{40pt}
	\begin{minipage}{0.42\linewidth}
		\centering
	     \includegraphics[width=\linewidth]{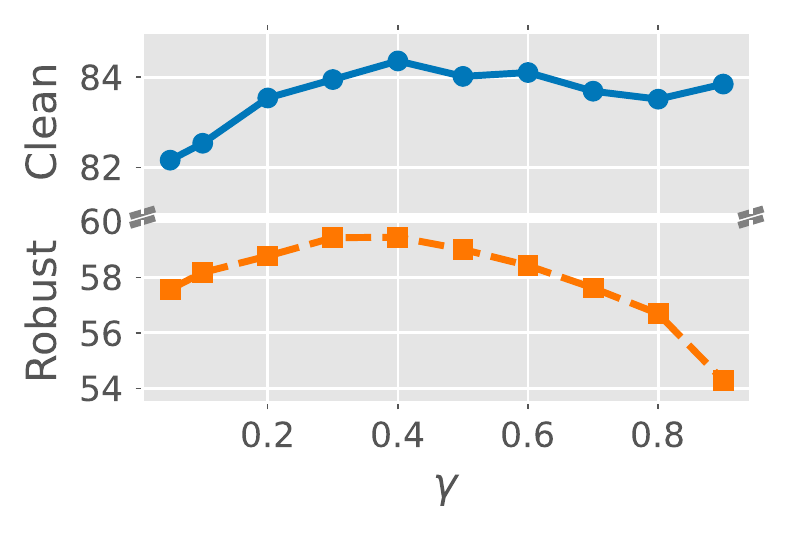}
        \caption{Hyperparameter search for $\gamma$ in \ours.}
        \label{fig: gamma_search}
	\end{minipage}
\end{figure*}

\begin{table}[]
    \centering
    \newbluetext{
    \caption{Hyperparmeter search for $\gamma$ using ResNet-18 on CIFAR-100 dataset.}
    \begin{tabular}{cccccc} \toprule
        $\gamma$ & $0.15$ & $0.40$ & $0.50$ & $0.75$ & $0.90$ \\ \midrule
        \textit{Clean Accuracy} & $60.6$ & $\bf{64.7}$ & $64.5$ & $62.2$ & $59.8$ \\
        \textit{Robust Accuracy} & $27.6$ & $\bf{27.7}$ & $26.4$ & $23.7$ & $21.9$ \\ \bottomrule 
    \end{tabular}
    }
\end{table}

\subsection{Sample complexity of adversarial training} \label{app: sample_complexity}
Given the ability to sample an unlimited amount of synthetic images from a proxy distribution, now we investigate the performance of adversarial training with an increasing number of training samples. We train the network only on synthetic images and measure its performance on another held-out set of synthetic images. We also measure how much the clean and robust accuracy transfers on the CIFAR-10 test set. 

\noindent \textbf{Setup.}  We primarily work with the CIFAR-10 dataset and its two-class subset, i.e., an easier problem of binary classification between class-1 (\textit{automobile}) and class-9 (\textit{truck}). We refer to the latter as CIFAR-2. We robustly train a ResNet-18 network on $2$K to $10$M synthetic images from the \styleC model, as in both $10$-class and $2$-class setup. We opt for \styleC over \ddpm model as sampling images from the former is much faster, thus we were able to generate up to $10$M synthetic from it. Note that the cost of adversarial training increases almost linearly with the number of attack steps and training images. Thus to achieve manageable computational cost when training on millions of images, we opt for using only a 4-step PGD attack (PGD-4) in both training and evaluation. Since robustness achieved with this considerably weak attack may not hold against a strong attack, such as AutoAttack, we opt for evaluating with the PGD-4 attack itself. We also perform natural training, i.e., training on unmodified images in some experiments. We test each network on a fixed set of $100$K images from the \styleC and $10$K images from the CIFAR-10 test set.

\noindent \textbf{Accuracy vs robustness trade-off.} We compare the clean accuracy achieved with both natural and adversarial training in Figure~\ref{fig: acc_vs_robust}. Indeed with a very small number of samples, clean accuracy in adversarial training is traded to achieve robustness. This is evident from the gap between the clean accuracy of natural and adversarial training. However, with the increasing number of training samples, this gap keeps decreasing for both CIFAR-2 and CIFAR-10 datasets. Most interestingly, this trade-off almost vanishes when we use a sufficiently high number of training samples for the CIFAR-2 classification. 

\noindent \textbf{On sample complexity of adversarial training.} We report both clean and robust accuracy with adversarial training in Figure~\ref{fig: generalization}. We find that both clean and robust accuracy continues to improve with the number of training samples. We also observe non-trivial generalization to test images from the CIFAR-10 dataset, which also improves with the number of training samples. Both of these results suggest that even with a small capacity network, such as ResNet-18, adversarial robustness can continue to benefit from an increase in the number of training samples. 

\begin{figure*}[!htb]
	\centering
	\begin{minipage}{0.33\linewidth}
		\centering
		\begin{subfigure}[b]{\linewidth}
            \centering
            \includegraphics[width=0.9\linewidth]{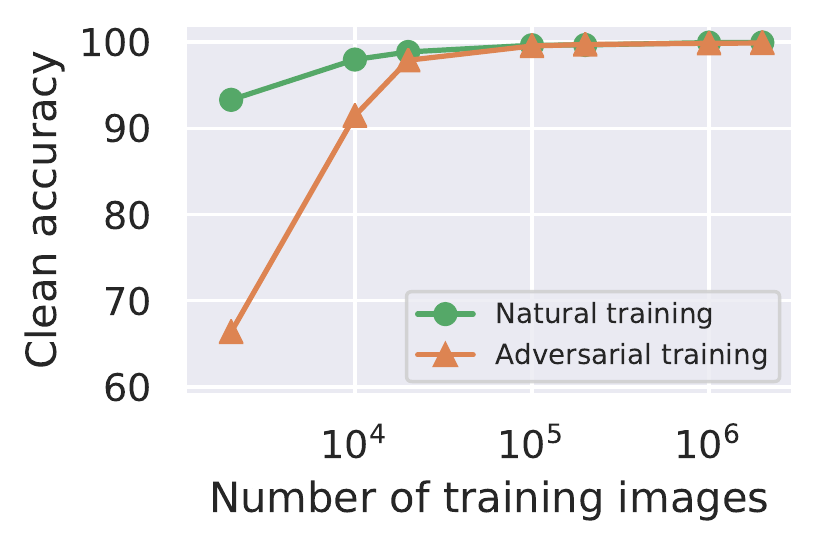}
            \caption{CIFAR-2}
            \vspace{10pt}
        \end{subfigure}
        \begin{subfigure}[b]{\linewidth}
            \centering
            \includegraphics[width=0.9\linewidth]{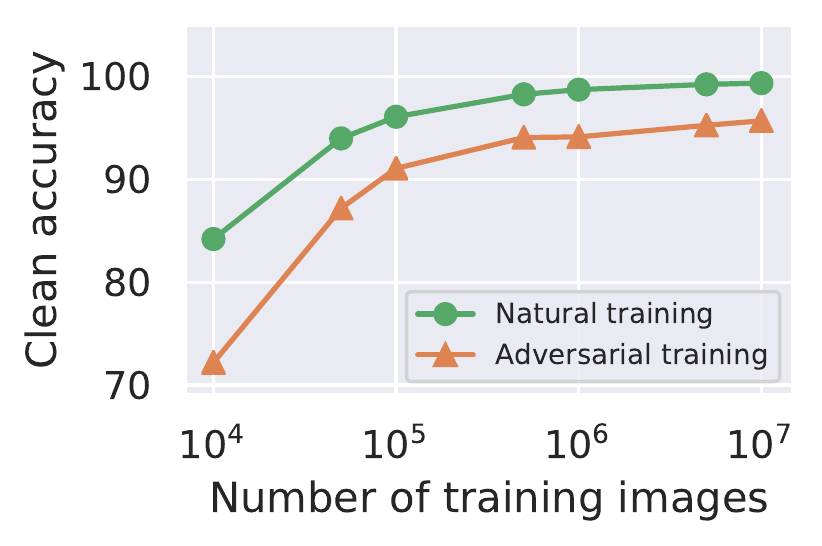}
            \caption{CIFAR-10   }
        \end{subfigure}
        \caption{\textbf{Reduction in accuracy vs robustness trade-off.}  Accuracy vs robustness trade-off when training on an increasing amount of synthetic images from the \styleC model. It shows that the drop in clean accuracy with adversarial training decreases with increase in training samples.}
        \label{fig: acc_vs_robust}
	\end{minipage}
	\hfill
	\begin{minipage}{0.64\linewidth}
		\centering
		\begin{subfigure}[b]{\linewidth}
            \centering
            \includegraphics[width=0.98\linewidth]{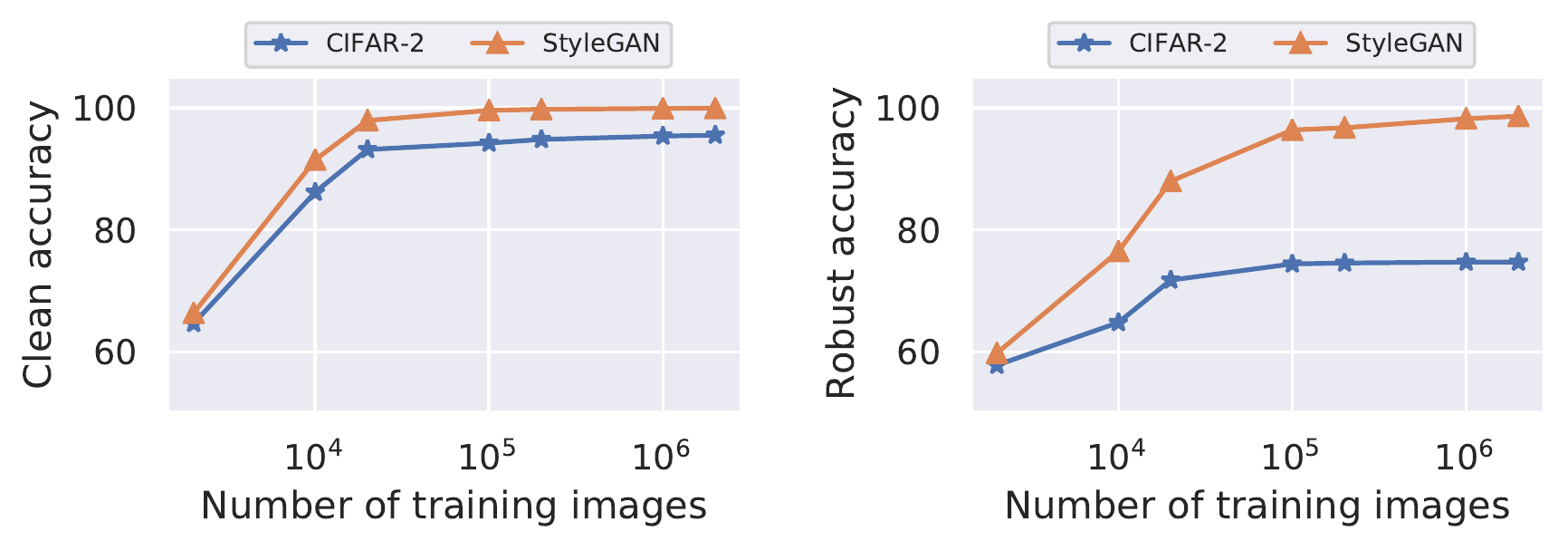}
            \caption{CIFAR-2}
            \vspace{10pt}
        \end{subfigure}
        
        \begin{subfigure}[b]{\linewidth}
            \centering
            \includegraphics[width=0.98\linewidth]{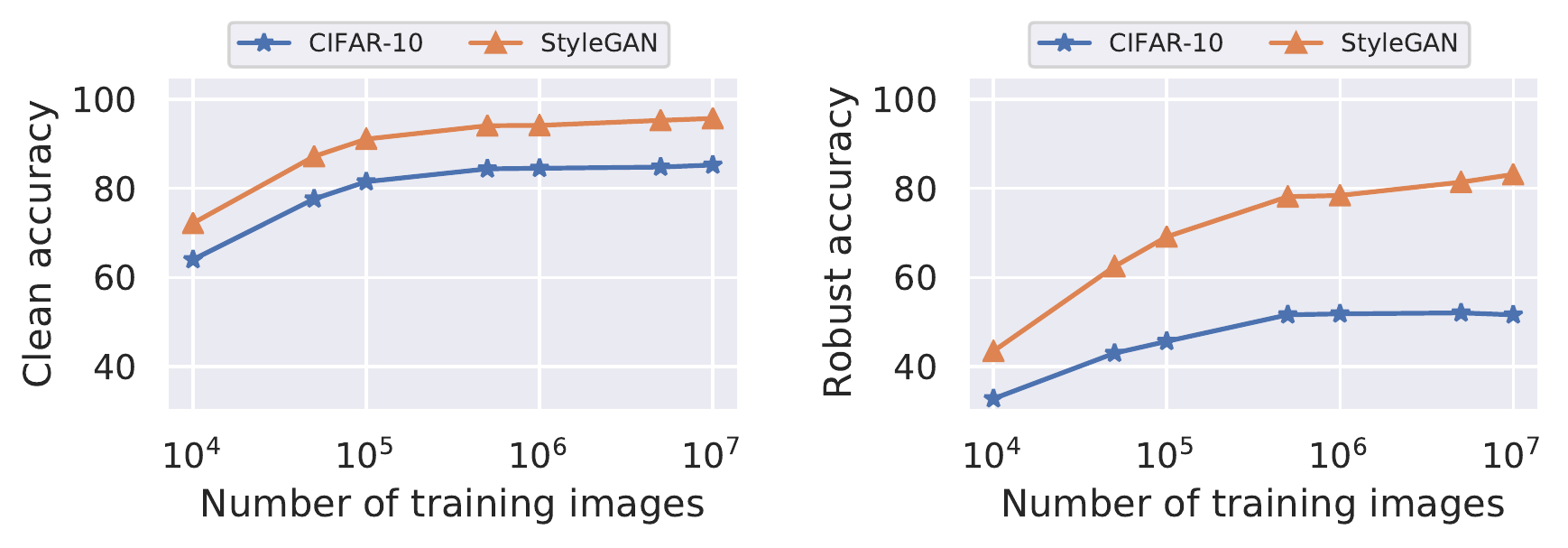}
            \caption{CIFAR-10}
        \end{subfigure}
        \caption{\textbf{Sample complexity of adversarial training.} Clean and robust accuracy on the test set of synthetic samples when trained on an increasing number of synthetic samples from the \styleC model. It shows that performance of adversarial training continues to benefit from increase in number of training samples. We also measure generalization to the CIFAR-10 dataset, which also improves with number of training samples.
        }
        \label{fig: generalization}
	\end{minipage}
\end{figure*}


\subsection{Exploring effect of image quality of synthetic samples}
We explore different classifiers to label the unlabeled synthetic data generated from the \ddpm model. In particular, we use BiT~\citep{Kolesnikov2020BiT}, SplitNet~\citep{zhao2020SplitNet}, and LaNet~\citep{wang2019lanet} where they achieve $98.5$\%, $98.7$\%, and $99.0$\% clean accuracy, respectively, on the CIFAR-10 dataset. We notice that labels generated from different classifiers achieve slightly different downstream performance when used with adversarial training in the proposed approach. 
We find that only up to $10$\% of synthetic images are labeled differently by these networks, which causes these differences. On manual inspection, we find that some of these images are of poor quality, i.e., images that aren't photorealistic or wrongly labeled and remain hard to classify, even for a human labeler. Since filtering millions of images with a human in the loop is extremely costly, we use two deep neural networks, namely LaNet~\citep{wang2019lanet} and SplitNet~\citep{zhao2020SplitNet}, to solve this task. 
We avoid using labels from BiT as it requires transfer learning from ImageNet~\citep{deng2009imagenet} dataset, whereas our goal is to avoid any dependency on extra real-world data. We discard an image when the predicted class of both networks doesn't match and it is classified with less than 90\% confidence by both networks. While the former step flags images which are potentially hard to classify, the latter step ensures that we do not discard images where at least one network is highly confident in its prediction.  We also try the $50$\% and $75$\% confidence threshold but find that $90$\% gives the best downstream results. In particular, using the filtering step improves robust accuracy by another $0.1$\%.

\begin{figure}[!b]
    \centering
    \begin{subfigure}[b]{0.49\linewidth}
        \centering
        \includegraphics[width=0.99\linewidth]{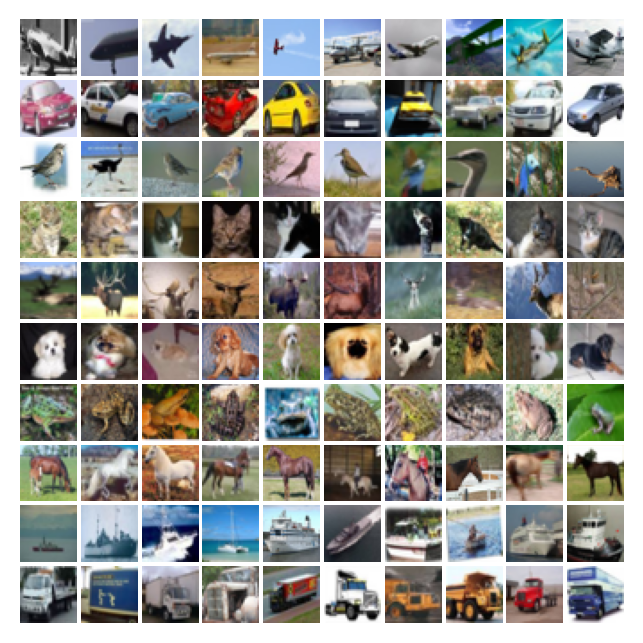}
        \caption{CIFAR-10}
    \end{subfigure}
    \begin{subfigure}[b]{0.49\linewidth}
        \centering
        \includegraphics[width=0.99\linewidth]{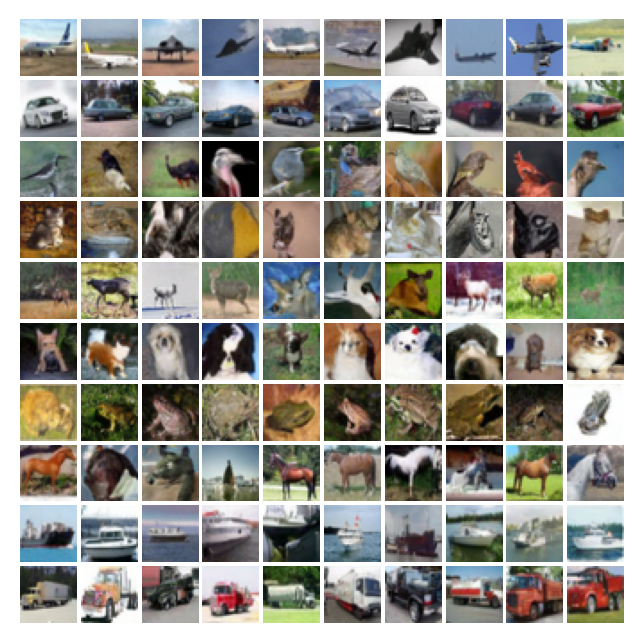}
        \caption{\ddpm}
    \end{subfigure}
    
    \begin{subfigure}[b]{0.49\linewidth}
        \centering
        \includegraphics[width=0.99\linewidth]{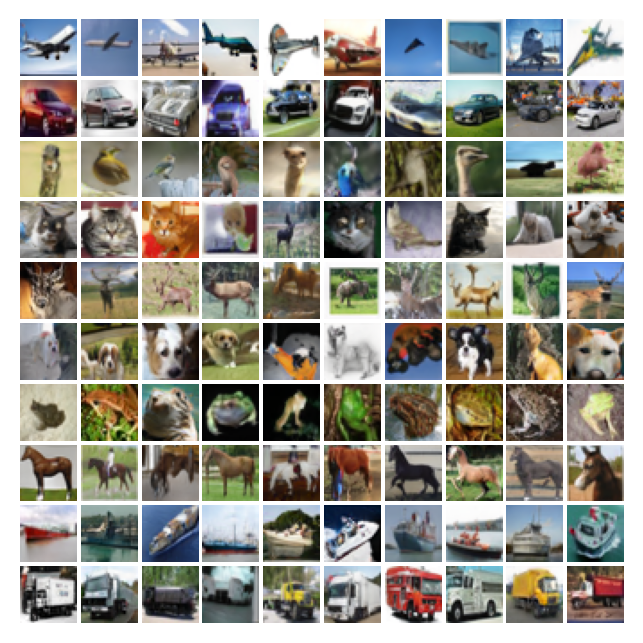}
        \caption{\styleC}
    \end{subfigure}
    \begin{subfigure}[b]{0.49\linewidth}
        \centering
        \includegraphics[width=0.99\linewidth]{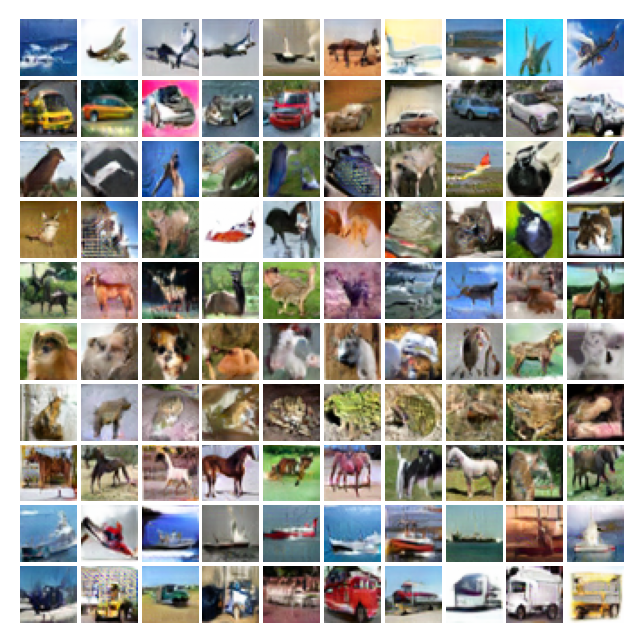}
        \caption{WGAN-ALP}
    \end{subfigure}
    
\caption{\textbf{Visualizing images from different sets.} Randomly selected images form the CIFAR-10 dataset and synthetic images from different generative models. Rows in each figure correspond to following classes: Airplane, automobile, bird, cat, deer, dog, frog, horse, ship, and truck. }
\end{figure}
\begin{figure}[!ht]\ContinuedFloat
    \centering
    \begin{subfigure}[b]{0.49\linewidth}
        \centering
        \includegraphics[width=0.99\linewidth]{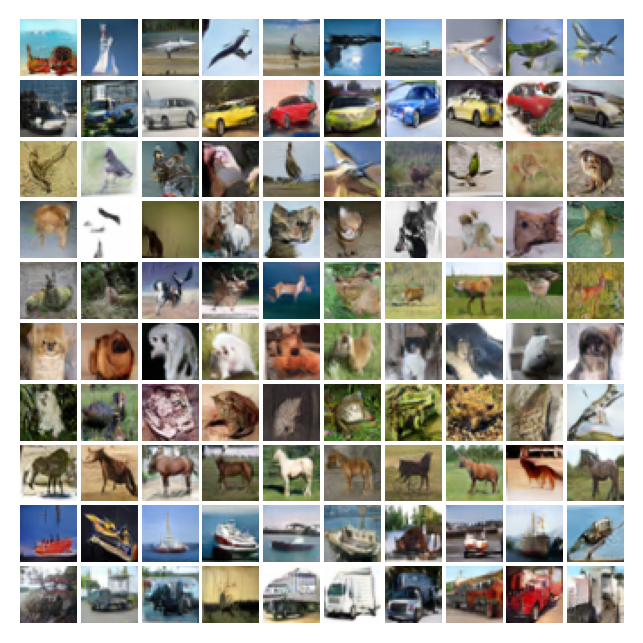}
        \caption{E2GAN}
    \end{subfigure}
    \begin{subfigure}[b]{0.49\linewidth}
        \centering
        \includegraphics[width=0.99\linewidth]{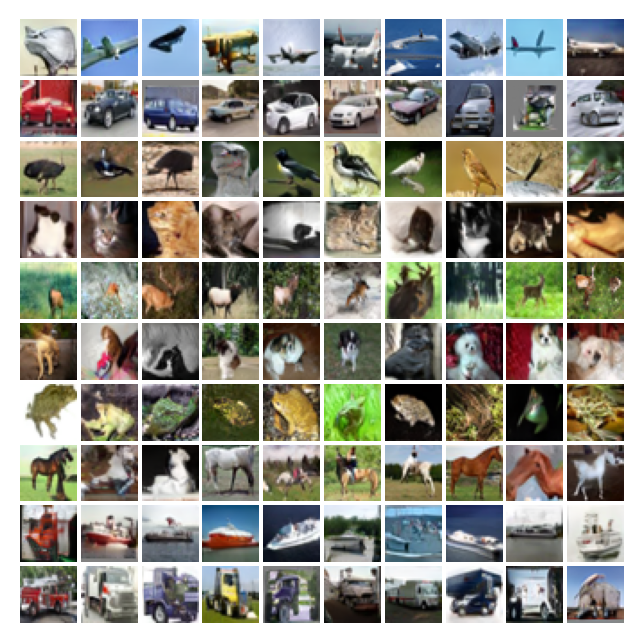}
        \caption{DiffCrBigGAN}
    \end{subfigure}
    
    \begin{subfigure}[b]{0.49\linewidth}
        \centering
        \includegraphics[width=0.99\linewidth]{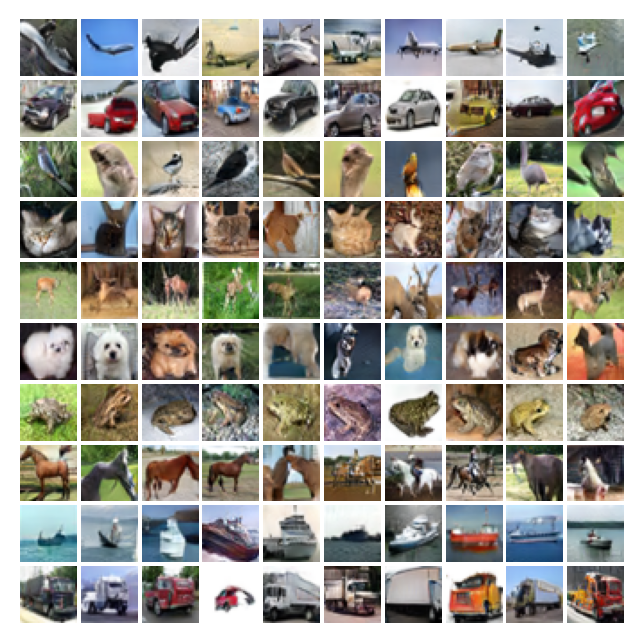}
        \caption{NDA}
    \end{subfigure}
    \begin{subfigure}[b]{0.49\linewidth}
        \centering
        \includegraphics[width=0.99\linewidth]{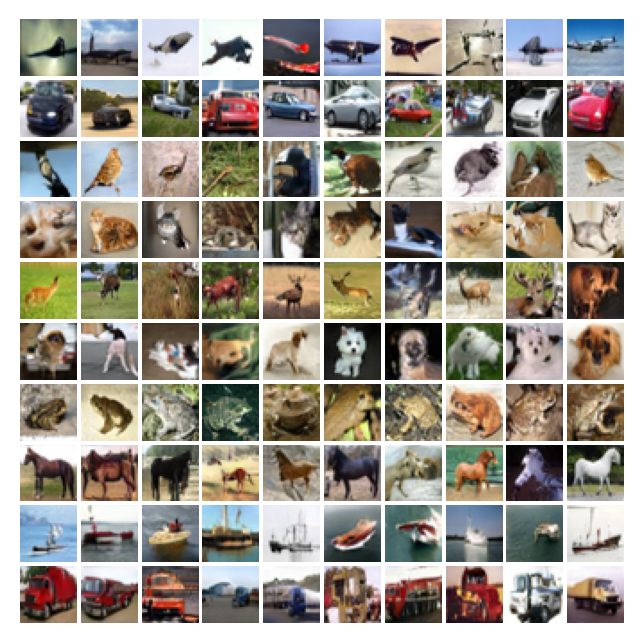}
        \caption{DiffBigGAN}
    \end{subfigure}
\caption{\textbf{Visualizing images from different sets.} Randomly selected synthetic images from different generative models. Rows in each figure correspond to following classes: Airplane, automobile, bird, cat, deer, dog, frog, horse, ship, and truck. }
\label{fig: sample_images}
\end{figure}

\begin{figure}
    \centering
    \begin{subfigure}[b]{0.9\linewidth}
        \centering
        \includegraphics[width=0.99\linewidth]{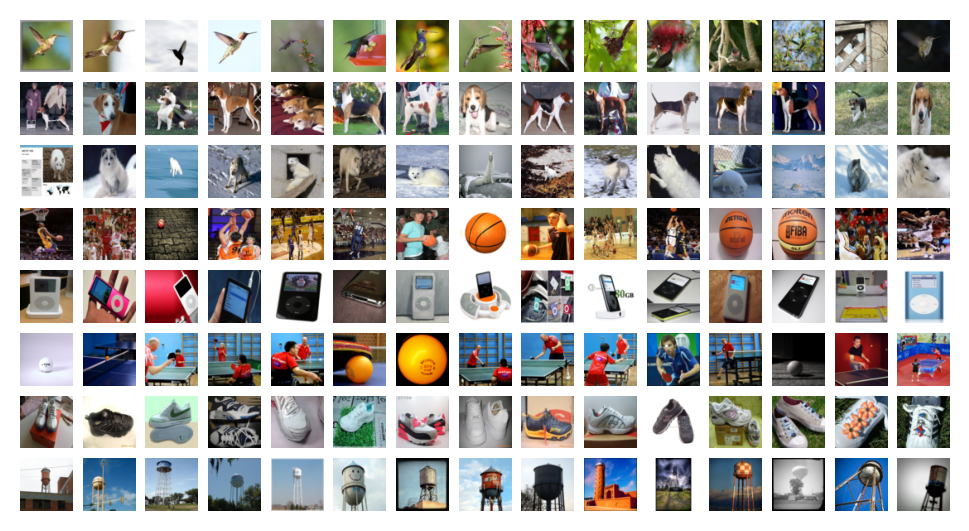}
        \caption{ImageNet~\citep{deng2009imagenet}}
    \end{subfigure}
    
    \begin{subfigure}[b]{0.9\linewidth}
        \centering
        \includegraphics[width=0.99\linewidth]{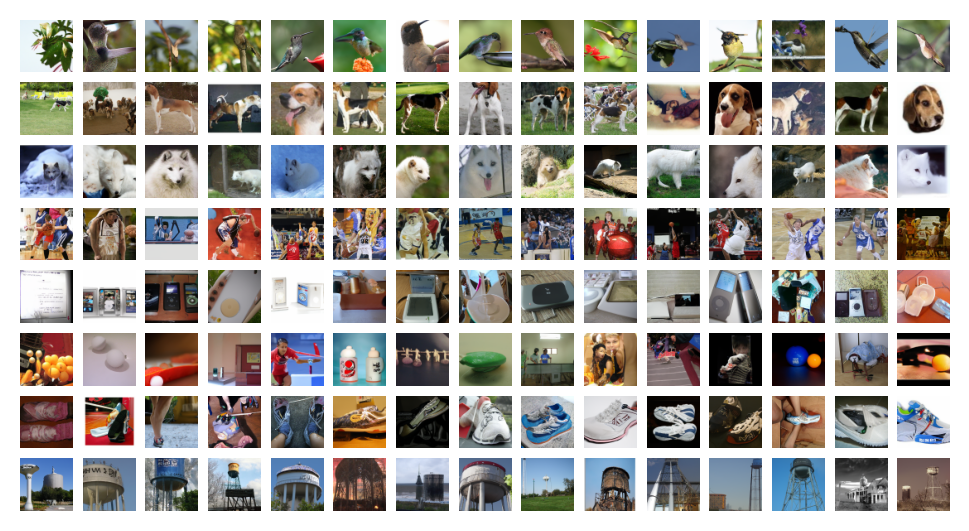}
        \caption{Improved DDPM~\citep{nichol2021improvedDdpm}}
    \end{subfigure}
    
    \begin{subfigure}[b]{0.9\linewidth}
        \centering
        \includegraphics[width=0.99\linewidth]{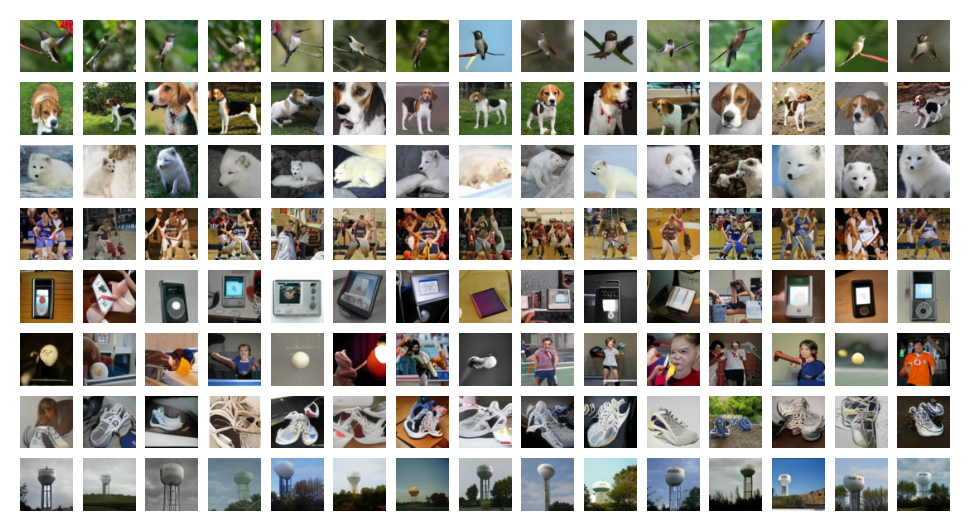}
        \caption{BigGAN-Deep~\citep{brock2018bigGandeep}}
    \end{subfigure}
    \caption{ImageNet ($64\times64$) samples along with synthetic images from two different generative models. Rows correspond to the following classes: hummingbird, english foxhound, arctic fox, basketball, iPod, ping-pong ball, running shoe, and water tower.}
    \label{fig: sample_images_im}
\end{figure}

\begin{figure}[!htb]
    \centering
    \begin{subfigure}[b]{0.9\linewidth}
        \centering
        \includegraphics[width=0.95\linewidth]{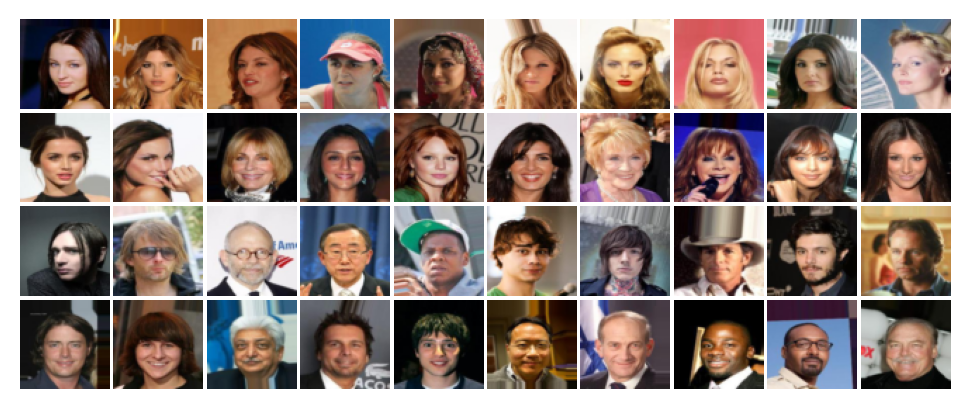}
        \vspace{-8pt}
        \caption{CelebA (Real)}
    \end{subfigure}
    
    \begin{subfigure}[b]{0.9\linewidth}
        \centering
        \includegraphics[width=0.95\linewidth]{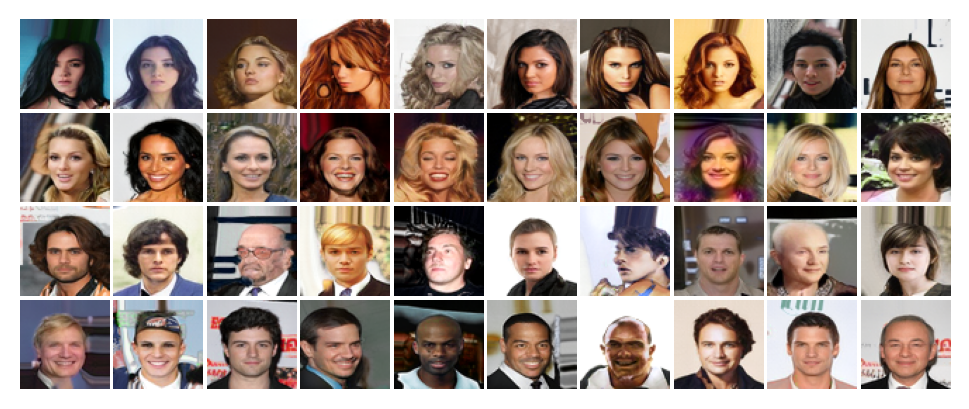}
        \vspace{-8pt}
        \caption{CelebA (Synthetic)}
    \end{subfigure}
    \caption{Real and synthetic images for the CelebA dataset. We consider a four-class classification problem based on the attribute smile and male, i.e., not-smile/not-male, smile/not-male, not-smile/male, smile/male.}
    \vspace{-8pt}
\end{figure}

\begin{figure}[!htb]
    \centering
    \begin{subfigure}[b]{0.9\linewidth}
        \centering
        \includegraphics[width=0.95\linewidth]{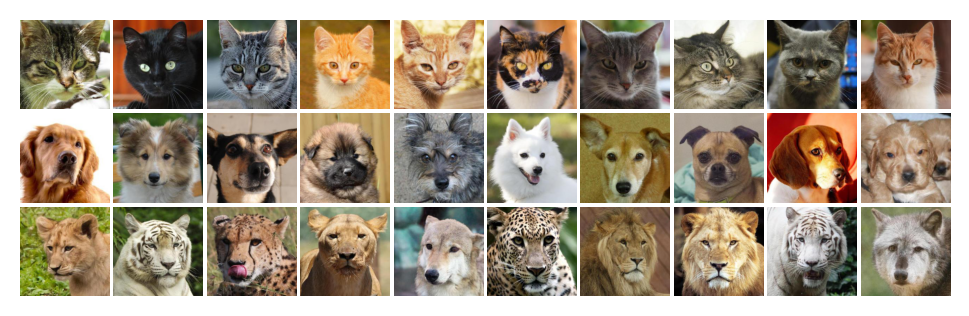}
        \vspace{-8pt}
        \caption{AFHQ (Real)}
    \end{subfigure}
    
    \begin{subfigure}[b]{0.9\linewidth}
        \centering
        \includegraphics[width=0.95\linewidth]{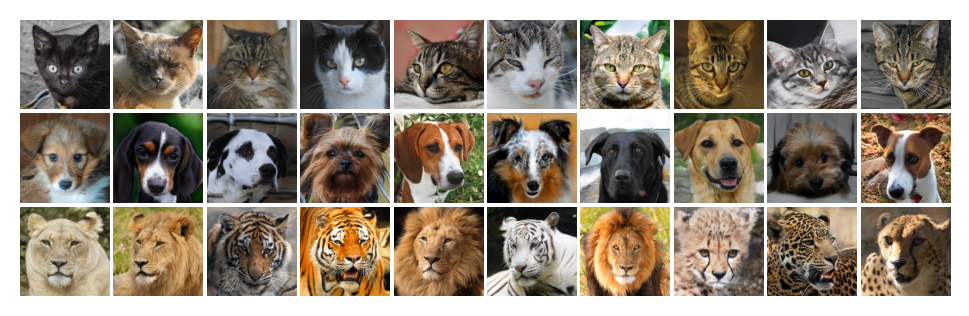}
        \vspace{-8pt}
        \caption{AFHQ (Synthetic)}
    \end{subfigure}
    \caption{Real and synthetic images for the AFHQ dataset. Rows correspond to the following three classes: cat, dog, and wild animals.}
\end{figure}

\end{document}